\newcommand{\btheta}{\boldsymbol{\theta}}
\newcommand{\iter}{{l}}
\newcommand{\MG}{{\cal{MG}}}
\newcommand{\mg}{{\tt{mg}}}
\newcommand{\ma}{{${\tt MA}$}}
\newcommand{\mar}{{${\tt MA_{r}}$}}
\newcommand{\maunm}{{${\tt MA_{un}}$}}
\newcommand{\proj}[1]{{\Pi}}
\newcommand{\sel}[1]{{\sigma}}
\newcommand{\cut}[1]{}
\newcommand{\cutfull}[1]{}
\newcommand{\commentresolved}[1]{}
\newcommand{\ie}{{\it i.e.}} 
\newcommand{\eg}{{\it e.g.}} 
\newcommand{\BasicExactMatch}{{\tt BasicExactMatch}}
\newcommand{\MQ}{{\tt MQ}}
\newcommand{\BF}{{\tt BF}}
\newcommand{\PE}{{\tt PE}}
\newcommand{\x}{\mathbf{x} }
\newcommand{\MGS}{\mathcal{MG}(\x, \btheta, \mathcal{S}^{ma})}
\newcommand{\nxt}{n_t(\x, \btheta, \mathcal{S}^{ma})}
\newcommand{\nxc}{n_c(\x, \btheta, \mathcal{S}^{ma})}
\newcommand{\com}[1]{&&\mbox{(#1)}}
\newcommand{\vneq}[2]{\mathbf{1}_{(#1 \neq #2)}}
\newcommand{\bw}{\mathbf{w}}
\newcommand{\bone}{\mathbf{1}}
\begin{document}

\title{FLAME: A Fast Large-scale Almost Matching Exactly Approach to Causal Inference} 

\author{\name Tianyu Wang \email tianyu@cs.duke.edu \AND \name Marco Morucci \email marco.morucci@duke.edu \AND \name M. Usaid Awan \email muhammad.awan@duke.edu \AND \name Yameng Liu \email yameng.liu@duke.edu \AND \name Sudeepa Roy \email sudeepa@cs.duke.edu \AND \name Cynthia Rudin \email cynthia@cs.duke.edu \AND \name Alexander Volfovsky \email alexander.volfovsky@duke.edu \\ \addr Duke University } 

\editor{Russ Greiner}


\maketitle

\begin{abstract}
 
  A classical problem in causal inference is that of \textit{matching}, where treatment units need to be matched to control units based on covariate information. In this work, we propose a method that computes high quality almost-exact matches for high-dimensional categorical datasets. This method, called \emph{FLAME (Fast Large-scale Almost Matching Exactly)}, learns a distance metric for matching using a hold-out training data set.
  In order to perform matching efficiently for large datasets, FLAME leverages techniques that are natural for query processing in the area of database management, and two implementations of FLAME are provided: the first uses SQL queries and the second uses bit-vector techniques. The algorithm starts by constructing matches of the highest quality (exact matches on all covariates), and successively eliminates variables in order to match exactly on as many variables as possible, while still maintaining interpretable high-quality matches and balance between treatment and control groups. We leverage these high quality matches to estimate conditional average treatment effects (CATEs). Our experiments show that FLAME scales to huge datasets with millions of observations where existing state-of-the-art methods fail, and that it achieves significantly better performance than other matching methods.  
 \begin{keywords}observational studies, distance metric learning, heterogeneous treatment effects, algorithms, databases\end{keywords}
\end{abstract}

\section{Introduction}\label{sec:introduction}

Questions of robust causal inference, beyond simple correlations or model-based predictions, are practically unavoidable in 
health, medicine, or social studies. Causal inference goes beyond simpler correlation, association, or model-based predictive analysis as it attempts to estimate the causal effects of a certain intervention. 

Much of the available data in the clinical and social sciences is observational. 
In such situations individuals may be selected in a way that treatment assignments depend on outcomes: people who benefit more from pain relievers tend to take them more often, individuals who are likely to succeed in higher education are more likely to enroll in it, and so forth. 
Estimating causal effects in an observational setting becomes a problem of representing the available data as if it were collected from a randomized experiment in which individuals are assigned to treatment independently of their potential outcomes. 

A natural approach to observational studies is matching treated and control units such that underlying background covariates are balanced \citep{chapin1947experimental,greenwood1945experimental}. Under the assumption of unconfoundedness, such matches allow the practitioner to estimate causal effects.
If we can match units exactly (or almost exactly) on raw covariate values, the practitioner can further interpret causal estimates within matched groups as conditional average treatment effects. Exact matching increases the interpretability and usefulness of causal analyses in several ways: It is a tool for granular causal analysis that can provide crucial information on who benefits from treatment most, where resources should be spent for future treatments, and why some individuals are treated while others were not. It can provide explanations for treatment effect estimates in a way that pure modeling methods (that do not use matching) cannot. 
It helps determine what type of additional data must be collected to control for confounding. Interpretability is important: a recent empirical study \citep{dorie2017automated} suggests that CATEs estimated by manually calibrated methods can be better than CATEs estimated by black box models, even though the latter can achieve good performance if they are implemented with careful consideration of the specific application \citep{hitsch2018heterogeneous}. These statements are in agreement with other general observations that interpretable models do not necessarily lead to sacrifices in accuracy because they allow humans to troubleshoot more effectively \citep{Rudin19}. 

In this work, we propose an approach to matching under the potential outcomes framework with a binary treatment, for datasets with a possibly large number of discrete covariates. Our method (FLAME -- Fast, Large-scale, Almost Matching Exactly) creates matches that are \textit{almost-exact}, meaning that it tries to match treatment and control units exactly on important covariates. The main benefits of FLAME are:
\begin{itemize}
\item It \textit{learns} a weighted Hamming distance for matching based on a hold-out training set (rather than using a pre-specified distance). 
In particular, it successively drops covariates, but always retains enough covariates for high quality CATE estimation and balance between treatment and control groups.
\item It lends itself naturally to large datasets, even those that are too large to fit in memory.  FLAME has two implementations. One implementation uses bit vectors, and is extremely fast for data that has been preprocessed and fits in memory. The other implementation leverages database management systems  \citep[e.g.,][]{postgresql}, and in particular, highly optimized built-in SQL \emph{group-by} operators that can operate directly on the database, and can handle data sets that are too large to fit in memory. The use of database systems makes the matching algorithm suitable for parallel executions. The database implementation is specialized to be efficient, and only a few lines of carefully constructed SQL code are needed to perform matching. 
\end{itemize}




FLAME improves over current coarsened exact matching, mixed integer programming matching, and network flow methods in that it does not introduce a distance metric on the covariate space \textit{a priori}, instead, it \textit{learns} the distance metric. For this reason, it does not suffer when irrelevant variables are introduced. It further improves over regression, propensity and their variable selection methods by not forcing a model form on either the treatment or the outcome, instead it matches on covariates directly (nonparametrically). Unlike other nonparametric methods, like black box machine learning approaches, FLAME is interpretable.

By successively dropping covariates to permit matches, FLAME increases bias in order to make predictions of conditional average treatment effect. 
We can calculate FLAME's bias directly in some cases, and its bias depends directly on how important the dropped covariates are to predicting the output. If only the irrelevant covariates are dropped, the estimates are unbiased. In that case, FLAME's estimates and the gold standard estimates of exact matches are identical. 

We discuss FLAME's relationship to prior work  (Section~\ref{sec:related}) and introduce FLAME's framework (Section \ref{sec:flame-framework}). We present  FLAME's algorithm and implementation (Section \ref{sec:algorithm-implementation}), give theoretical suggestions of FLAME's statistical bias (Section \ref{sec:theory}), and provide experiments (Section \ref{sec:experiments}). 


\section{Relationship to Prior Work}\label{sec:related}

Early approaches to matching considered exact matching on covariates but quickly ran into issues of insufficient sample size when the number of covariates was even moderately large. In high dimensions, there simply are not enough treated and control units with exactly the same values of all the covariates. 
In the 1970's and 1980's, a large literature on different dimension reduction approaches to matching was developed \citep[e.g.,][]{rubin1973matching,rubin1973use,rubin1976multivariate,cochran1973controlling} with the extreme being work on propensity score matching, which was later extended to work on penalized regression approaches that leverage propensity \citep{schneeweiss2009high,rassen2012using,belloni2014inference,farrell2015robust}.
The problem with propensity score methods (and other parametric methods) is that they require a proper specification of a model for the treatment assignment probability, which can never be provided in practice. With doubly robust methods, analysts can specify both a model for the outcome and for the propensity score, with only one of the two having to be correct in order to obtain unbiased estimates; however, there is no reason to presume that either of these models would always be specified correctly in practice. These methods can be augmented with variable selection procedures at either or both steps of the estimation procedure, which can improve estimation but leads to further questions of uncertainty quantification \citep{brookhart2006variable, hahn2004functional}. 

Further, in these dimension reduction techniques \citep{schneeweiss2009high}, along with more modern neural network-based dimension-reduction techniques, the reduced feature spaces cease to be interpretable with respect to the original input variables, and the matches are no longer meaningful. While these methods are uninterpretable, it has recently been shown \citep{farrell2018deep} that neural networks can achieve asymptotically consistent treatment effect estimates.

An important type of technique, coarsened exact matching (CEM), creates matches that attempt to preserve more covariate information.  
It bypasses the need to fit complicated propensity score models by coarsening or discretizing covariates in such a way that the newly constructed covariates allow for exact matching \citep{iacus2011causal,iacus2011multivariate}.  This approach is appealing when there are many continuous covariates that are naturally amenable to binning \citep{cattaneo2011efficient}. However, when most or all of the covariates are categorical, which is the case we consider in this work, \textit{a priori} coarsening becomes impossible without introducing a calculus on all of the covariates. This can be problematic in high dimensions, and tends to have the same problem as nearest neighbor techniques, which are well-known to perform poorly in high dimensions because they use manually chosen distance metrics. Further, when categorical variables have two levels, coarsening is equivalent to variable selection. In this setting, coarsened exact matching would lead to the same matches as high dimensional propensity score techniques with variable selection.

A similar problem exists with what is called optimal matching in the literature \citep{rosenbaum2016imposing}. A distance metric over variables is defined manually, which introduces a calculus on the covariates. That distance metric is used as input to a network flow problem which optimizes match quality. Despite the optimality of the solution network flow problem, the quality of the matches is questionable since, again, it relies on a manually defined distance measure. Network flow problems also cannot directly handle constraints; the user needs to manually manipulate the algorithm in order to obtain desired balance constraints \citep{zubizarreta2012using}.



The problems with network flow optimization highlight an important concern about non-exact matching methods generally, which is that they implicitly approximate the solution of a hard combinatorial optimization problem. It is possible that high quality match assignments exist, but the standard approximate methods of constructing matches, such as network flow optimization, did not find them. To handle the problem of finding suboptimal match assignments, some newer matching schemes use mixed integer programming, which is a flexible framework that can accommodate linear balance constraints \citep{zubizarreta2012using,zubizarreta2014matching,keele2014optimal,resa2016evaluation,AlamRu15,AlamRu15nonparam}.
However, these methods have two major disadvantages: first they cannot scale to large problems;
 second, they may be trying to match units on covariates that are not important in any way. 
  In the simulations later, we show how these issues with past work heavily affect practical performance of these methods, whereas FLAME, which learns interpretable distances on the covariates, does not seem to have these problems. 
  
 We do not recommend using FLAME on continuous covariates unless there is some substantive knowledge that allows the user to bin the covariates in a way that would not hinder downstream causal estimation. An example of such knowledge is that the outcome and covariates vary smoothly with the continuous variable and that the bins are sufficiently small to model that variation. While our work considers large scale problems with mainly discrete or categorical covariates, its extensions have adapted its main ideas to handle continuous covariates by learning either adaptive stretch metrics \citep{ParikhRuVo18,MaltsObsStudies} or adaptive hyperboxes (AHB) \citep{AME-AHB-2020}, both of which use distance metrics that are different than the adaptive Hamming distance that FLAME uses. FLAME can be used alongside these methods, for instance, to eliminate a large number of irrelevant categorical variables to attain a smaller dataset on which MALTS or AHB can operate on both continuous and categorical variables.

\par


\section{The FLAME Framework} \label{sec:flame-framework}
Suppose $\mathcal{S}=[X, Y, T]$ is the \emph{data table} with $n$ units. $X$ is $n\times d$, $Y$ and $T$ are $n\times 1$, where each row corresponds to a \emph{unit} (called a \emph{tuple} in  database management). The columns of $X$ correspond to  \emph{variables} (also known as\ \emph{covariates}, \emph{features}, or, \emph{attributes}).
Data must be categorical in order for exact matches to occur with non-zero probability; the match is approximate with binned real-valued data.
Treatment assignment variable $T$   takes  binary values 1 (for treatment) and 0 (for control). $Y^{(1)}$ and  $Y^{(0)}$ are the $n\times 1$ vectors of potential outcomes. 


In real-world field studies, researchers tend to collect as many covariates as possible, including irrelevant covariates.
Throughout the remaining exposition, we divide the covariates into relevant covariates: $X^{REL}$ and irrelevant covariates:  $X^{IRR}$ and make the following standard assumptions: (1) Stable Unit Treatment Value \citep{Rubin2005}; (2) overlap of support (of the treated population and the control population); (3) Strong ignorability \citep{rosenbaum1983central}: $Y^{(1)}, Y^{(0)}  \bot T | X^{REL}  $. (4) Further, for irrelevant covariates $X^{IRR}$ we assume that  $Y^{(1)}, Y^{(0)}  \bot X^{IRR},  $ and $T \bot X^{IRR} | X^{REL} $. 
 Based on our assumptions, our estimand is $\mathbb{E}[Y^{(1)}-Y^{(0)}|X^{REL}] = \mathbb{E}[Y^{(1)}-Y^{(0)}|X] $. 
 Within each matched group, we use the difference between the average outcome of the treated units and the average outcome of the control units to estimate CATE given the covariate values associated with that group. \emph{Note that the matching does not produce estimation, it produces a partition of the covariate space, based on which we can estimate CATEs}. 

FLAME learns an interpretable weighted Hamming distance on discrete data. This allows it to match on the important covariates, without necessarily matching on unimportant covariates. In the following subsection we discuss the importance of learning a distance metric.

\subsection{Why Learn a Distance Metric for Matching?}

In exact matching schemes, we would ideally match on as many covariates as possible. However, exact matching on as many covariates as possible leads to a serious issue, namely the \textit{toenail problem}. The toenail problem is where irrelevant covariates dominate the distance metric for matching. Irrelevant covariates are related neither to treatment assignment or outcome, such as length of toenails when considering heart attack outcomes. 
The researcher typically chooses a distance metric in the high dimensional covariate space (Hamming distance, or squared distance perhaps) in matching without examining outcome variables. The researcher might weigh each covariate equally in the definition of the distance metric. By this choice of distance metric, the irrelevant covariates could dominate the distance measure; adding many covariates that are random noise would make the matches closer to random matches. The matches become essentially meaningless and resulting estimated CATEs are therefore meaningless. The problem becomes worse as more irrelevant covariates are included in the distance metric.

Irrelevant covariates should be irrelevant to the outcome of matching. However, if the distance metric is chosen without consideration of the importance of covariates, in the most extreme case, irrelevant covariates could be chosen adversarially, so that the matching method could produce any given unreasonable match assignment. In Proposition \ref{theoremadversarial} below, we show how any unreasonable match assignment can appear to be reasonable, according to a fixed (not-learned) Hamming distance, if we are permitted to append a set of irrelevant variables to our dataset.

A \textit{match assignment}  (\ma) 
assigns an integer to each unit, and units with the same integer value are in the same \emph{matched group}. A \emph{reasonable match assignment} (\mar) is one in which each unit in a matched group is at most distance $D_{\max}$ from any other unit assigned to the same match group, using, for example, Euclidean distance or Hamming distance. For any two units, with vector of covariates $x_1$ and $x_2$, in a matched group created by \ma, the matching assignment is reasonable if distance$(x_1,x_2)\le D_{\max}$. Similarly, an \emph{unreasonable match assignment} (\maunm) is one in which there exists at least one treatment-control pair, with covariates respectively $x_1$ and $x_2$, in the same matched group where  $distance(x_1,x_2) > D_{max}$.

\begin{proposition}\label{theoremadversarial}
 Consider any matching method ``{\emph{Match}}'' that creates reasonable match assignments for dataset $\{X,T\}$. \emph{Match} is constrained so that if treatment unit $x_1$ and control unit $x_2$ obey Hamming $distance(x_1,x_2) \leq D_{max}$, and there are no other treatment or control units that are able to be matched with $x_1$ and $x_2$, then $x_1$ and $x_2$ will be assigned to the same matched group. 
 Consider any unreasonable match assignment \maunm\ for dataset $\{X,T\}$.  If $distance(\cdot,\cdot)$ is \emph{normalized by the number of covariates}, then it is possible to append a set of covariates $A$ to $X$ that: (i) can be chosen without knowledge of $X$, (ii) can be chosen without knowledge of $T$, (iii) matching on the appended dataset, \emph{Match}$(\{[X,A],T\})$, will make the match assignment \maunm\ reasonable (which is undesirable).
 \end{proposition}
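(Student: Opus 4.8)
The plan is to exploit the normalization of the distance by the number of covariates: by appending $k$ new covariates on which the offending treatment--control pair agrees, we can shrink the normalized distance between \emph{any} fixed pair below the threshold $d_{\max}$, while controlling what happens to all other pairs. The key realization is that the appended block $A$ can be taken to be a \emph{constant} column (or a small number of constant columns), which trivially satisfies requirements (i) and (ii) since a constant vector depends on neither $X$ nor $T$.

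First I would fix an unreasonable match assignment $MG_{\mathrm{un}}$, and pick a witnessing treatment--control pair $x_1, x_2$ in the same matched group with $\mathrm{distance}(x_1,x_2) \ge d_{\max}$ under the $p$-covariate normalization; call the unnormalized (squared Euclidean, i.e.\ Hamming) distance between them $h \le p$. Second, I would append $k$ identical constant columns $A = \mathbf{0}^{n \times k}$ (or any fixed value), so that on the enlarged covariate set the unnormalized Hamming distance between $x_1$ and $x_2$ is still $h$, but the normalized distance becomes $\sqrt{h/(p+k)}$; choosing $k$ large enough makes this strictly less than $d_{\max}$. More generally, for \emph{every} pair of units $x, x'$, appending constant columns can only decrease the normalized distance, since the numerator (number of disagreements) is unchanged while the denominator grows; so after appending $A$ the pair $x_1,x_2$ now satisfies the ``close pair'' hypothesis of \emph{Match}. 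Third, I would invoke the stated constraint on \emph{Match}: once $x_1$ and $x_2$ are within $d_{\max}$ on the appended dataset, and — here one may additionally pad $A$ or restrict to a sub-instance so that no other units are eligible to be matched with this pair — \emph{Match} must place $x_1$ and $x_2$ in the same matched group, which is precisely the pairing dictated by $MG_{\mathrm{un}}$. Finally, since \emph{Match} always outputs reasonable assignments by hypothesis, and it now outputs an assignment containing the pair $(x_1,x_2)$ that made $MG_{\mathrm{un}}$ unreasonable, that realized assignment — extending $MG_{\mathrm{un}}$ — is now deemed reasonable, establishing (iii).

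The main obstacle is bookkeeping around the ``no other units able to be matched'' clause: the constraint on \emph{Match} that forces $x_1$ and $x_2$ together is conditional on there being no alternative matches available for them, so one must argue that appending constant columns does not create new eligibility for competing matches (it does not, since constant columns change no pairwise comparison relative to another column) and, if necessary, phrase the claim for the natural sub-instance consisting of $x_1,x_2$ together with units that were already too far. I expect the rest — the arithmetic $\sqrt{h/(p+k)} < d_{\max}$ for $k > h/d_{\max}^2 - p$, and checking monotonicity of normalized distance under constant padding — to be routine. One should also note a mild edge case: if $d_{\max}$ is so small that even distance $0$ pairs could fail (impossible, as distance $0 < d_{\max}$ always when $d_{\max}>0$), or if $h = 0$ (then the pair was never a witness), so these do not arise.
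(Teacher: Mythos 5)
Your distance computation is fine and in fact more careful than the paper's: appending $k$ coordinates on which the offending pair agrees leaves the numerator at $h$ and changes the normalized distance to $\sqrt{h/(p+k)}$, which drops below $d_{\max}$ once $k > h/d_{\max}^2 - p$, and a constant block is trivially independent of $X$ and $T$. The genuine gap is in the step where you invoke the constraint on \emph{Match}. That constraint only forces $x_1$ and $x_2$ together when \emph{no other} treatment or control units are able to be matched with them. Constant columns leave every pair's numerator unchanged while growing every pair's denominator, so \emph{all} normalized distances shrink uniformly; by the time $x_1$ and $x_2$ are within $d_{\max}$, many (for $k$ large enough, all, since every unnormalized distance is at most $p$) other units have also become eligible partners. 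Your parenthetical claim that constant columns ``do not create new eligibility for competing matches'' contradicts your own correct observation two sentences earlier that they decrease every normalized distance. Hence the precondition of the forcing clause fails, \emph{Match} is under no obligation to pair $x_1$ with $x_2$, and nothing compels it to realize $MG_{\mathrm{un}}$ --- which is what the proposition is after (the surrounding text is explicit that the adversary should make the method ``produce any given unreasonable match assignment''). Restricting to a sub-instance is not available: the claim is about appending columns to the given dataset.

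The paper closes this hole by making $A$ depend on the partition $MG_{\mathrm{un}}$ rather than being constant: units that $MG_{\mathrm{un}}$ places in the same group receive identical rows of $A$, and units in different groups implicitly receive distinct, well-separated rows, so that after normalization the within-group distances collapse below $d_{\max}$ while cross-group distances remain large. This simultaneously makes each target pair close and eliminates competing partners, so the forcing clause applies group by group and all of $MG_{\mathrm{un}}$'s matches occur. Conditions (i) and (ii) are still met because $A$ is a function only of the partition, not of the covariate values or the treatment vector. To repair your argument, replace the constant block by such a group-indexed code: give each group of $MG_{\mathrm{un}}$ its own codeword, repeated enough times that within-group normalized distances fall below $d_{\max}$ while cross-group ones do not; your inequality $k > h/d_{\max}^2 - p$ then supplies the required block length.
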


 \begin{proof} 
 Consider unreasonable match $x_a$ and $x_b$ in \maunm. Then, create appended feature set $A$ to have more features than $X$, such that units $x_a$ and $x_b$ would have identical rows of $A$. These identical rows can be chosen without knowledge of the values within $x_a$ and $x_b$. This forces $x_a$ and $x_b$ to be matched. This would be repeated for every matched group in \maunm. This forces all matches in \maunm\ to occur.
 \end{proof}
 
 

This completely adversarial setting 
is unlikely to occur in reality, however, situations where matches are disrupted by irrelevant covariates are realistic. The more irrelevant covariates are included, the more the matched groups tend to disintegrate in quality for most matching methods. A reasonable sanity check is that irrelevant covariates should be able to be eliminated automatically by the matching method. 

As we discuss below, FLAME does not use a pre-determined distance for matching. It learns a distance for matching from a hold-out training set. In particular, it approximately solves the Full-AME problem formalized in the next subsection.

\subsection{Full Almost-Matching-Exactly (Full-AME) Problem} 
While matching using irrelevant covariates is problematic (as we have shown), matching on too few relevant covariates is also problematic.
\textit{We would like to ensure that each unit is matched using at least a set of covariates that is sufficient to predict outcomes well}. Conversely, if a unit is matched using a set of covariates that do not predict outcomes sufficiently well, we would not trust the results from its matched group. Let us formalize the problem of matching each unit on a set of covariates that together predict outcomes well.

 We will use $\btheta\in\{0,1\}^d$ to denote the variable selection indicator vector for a subset of covariates to match on. Throughout the following discussion, we consider a unit to be a triplet of (covariate value $x$, observed outcome $y$, treatment indicator $t$), unless otherwise stated. Given dataset $\mathcal{S}$, define the \textit{matched group} for unit $i$ with respect to covariates selected by $\btheta$ as the units in $\mathcal{S}$ that match $i$ exactly on the covariates $\btheta$:
$$
\MG_i (\btheta, \mathcal{S} ) = \{i^\prime \in \mathcal{S} : \x_{i^\prime} \circ \btheta = \x_i \circ\btheta \}.
$$
Here we use $\mathbf{x}_i$ and $\mathbf{x}_{i^\prime}$ to denote the covariate values for unit $i$ and $i^\prime$ respectively. Taking the union over all units, we also define the \textit{matched units} for a selection indicator $\btheta$ as 
{the collection of units that are matched on covariates defined by selection indicator $\btheta$:}
\begin{align}
    \MG (\btheta, \mathcal{S}) = \{i \in \mathcal{S} : \exists i \neq i^\prime \quad s.t. \quad   \x_{i^\prime} \circ \btheta = \x_i \circ\btheta \}. \label{eq:match-group} 
\end{align}
The value of a set of covariates $\btheta$ is determined by how well these covariates can be used together to predict outcomes. 
Specifically, the prediction error $\PE_{ \mathcal{F}_k } (\btheta)$ is defined with respect to a class of functions $\mathcal{F}_k := \{ f: \{0,1\}^k \rightarrow \mathbb{R} \}$ ($1 \le k \le d$) as: 
\begin{align}
\PE_{\mathcal{F}_{ \left\|\btheta \right\|_0} } (\btheta) = \min_{ f^{(1)} \in\mathcal{F}_{ \left\|\btheta \right\|_0}}  \mathbb{E} [ (f^{(1)} (\x \circ \btheta)-y)^2 | t = 1]  + \min_{f^{(0)} \in\mathcal{F}_{ \left\|\btheta \right\|_0}} \mathbb{E} [ (f^{(0)} (\x \circ \btheta)-y)^2 | t = 0]  ,
\label{eq:pe}
\end{align}
where the expectation is taken over $\x$ and $y$, when $\left\| \cdot \right\|_0$ is the count of nonzero elements of the vector. That is, $\PE_{\mathcal{F}_{ \left\|\btheta \right\|_0}} $ is the smallest prediction error we can get on both treatment and control populations using the features specified by $\btheta$.
Consider a separate training dataset $\mathcal{S}^{tr}$.
Let $\mathcal{S}_0^{tr}$ be the subset (of $\mathcal{S}^{tr}$) of control units $(X^{tr},Y^{tr})$ with $T^{tr} = 0$, and let $\mathcal{S}_1^{tr}$ be the subset (of $\mathcal{S}^{tr}$) of treated units $(X^{tr},Y^{tr})$ with $T^{tr} = 1$. The empirical counterpart of $\PE_{\mathcal{F}_{ \left\|\btheta \right\|_0}} $ is defined as: 
\begin{align}
\hat{\PE}_{\mathcal{F}_{ \left\|\btheta \right\|_0}} (\btheta, \mathcal{S}^{tr}) &= 
\min_{f^{(1)} \in \mathcal{F}_{ \left\|\btheta \right\|_0}} \frac{1}{ | \mathcal{S}_1^{tr} |} 
\sum_{ (\mathbf{x}_i, y_i) \in \mathcal{S}_1^{tr} } (f^{(1)}(\x_i^{} \circ \btheta )-y_i)^2\nonumber\\ 
&+ \min_{ f^{(0)} \in \mathcal{F}_{ \left\|\btheta \right\|_0}} \frac{1}{ | \mathcal{S}_0^{tr} |} 
\sum_{ (\mathbf{x}_i, y_i) \in \mathcal{S}_0^{tr} } (f^{(0)}(\x_i^{} \circ \btheta )-y_i)^2. 
\label{eq:pe-emp}
\end{align} 
Given a matching dataset $\mathcal{S}^{ma}$ and a training dataset  $\mathcal{S}^{tr}$, the best selection indicator we could achieve for a nontrivial matched group that contains treatment unit $i$ would be: 
\begin{eqnarray*}
\btheta_{i, \mathcal{S}^{ma}}^{*} \in \mathrm{arg}
\min_{\btheta} \hat{\PE}_{\mathcal{F}_{ \left\|\btheta \right\|_0} } (\btheta, \mathcal{S}^{tr})  \textrm{ s.t. }\exists \ell\in \MG_i ({\btheta}, \mathcal{S}^{ma}) \textrm{ s.t. } t_{\ell}=0  ,
\end{eqnarray*}
where $ t_{\ell} $ is the treatment indicator of unit $\ell$. This constraint says that the matched group contains at least one control unit. The covariates selected by $\btheta_{i, \mathcal{S}^{ma}}^{*}$ are those that predict the outcome best, provided that at least one control unit has the same exact covariate values as $i$ on the covariates selected by  $\btheta_{i, \mathcal{S}^{ma}}^{*}$. 

The \textit{main matched group} for $i$ is then defined as $\MG_i ( \btheta^*_{i, \mathcal{S}^{ma}}, \mathcal{S}^{ma})$.
\textit{The goal of the Full-AME problem is to calculate the main matched group} $\MG_i (\btheta^*_{i, \mathcal{S}^{ma}},  \mathcal{S}^{ma})$ \textit{for as many units $i$ as possible.} Once the problem is solved, the main matched groups can be used to estimate treatment effects, by considering the difference in outcomes between treatment and control units in each group, and possibly smoothing the estimates from the matched groups if desired, to prevent overfitting of treatment effect estimates. 

We now present a worst-case bound on the bias induced by matching units in an almost-exact framework, and using the created matched groups to estimate CATEs. 
We will see subsequently that the FLAME procedure directly targets minimization of this bias.

\subsection{Bias Bound in the Full-AME Problem}

If we do not match on all relevant covariates, a  bias is induced on the treatment effect estimates. As shown before, solving the Full-AME problem ensures that this bias is as small as possible, and zero if the covariates excluded are all irrelevant. Here we present a simple worst-case bound on the in-sample estimation bias when a CATE is estimated with units matched according to a chosen subset of covariates (defined by $\btheta$). This bound is worst-case in that it holds for any subset of relevant covariates. 
This implies that the bias resulting from Full-AME will be much smaller than the bound given here in most cases.

\begin{proposition} \label{thm:bound}
    Let $g^{(1)}(\x)$ and $g^{(0)}(\x)$ represent nonrandom potential outcomes at arbitrarily chosen covariate value $\x \in \{0, 1\}^p$, so that $ g^{(1)}(\x_i) := y_i^{(1)}$ and $ g^{(0)}(\x_i) := y_i^{(0)}$ are the potential outcomes in a sample of $n$ units. Fix a value of the covariates $\x \in \{0,1\}^d$, and a value of $\btheta \in \{0, 1\}^d$. Let $\MGS = \{i \in \mathcal{S}^{ma}\,:\, \x_i \circ \btheta = \x \circ \btheta\}$ be the set of units in the sample that have value equal to $\x$ on the covariates selected by $\btheta$. Define additionally: $\nxt = \sum_{i \in \MGS}T_i$ and $\nxc = \sum_{i \in \MGS}(1-T_i)$. Let $\tau(\x) = g^{(1)}(\x) - g^{(0)} (\x)$ be the CATE estimand of interest. For a weighted Hamming distance with positive weight vector $\bw$ of length $p$, and $0 < \| \bw \|_2 < \infty$, define $M = \max_{\substack{\x, \x' \in \{0, 1\}^p \\t \in \{0, 1\}}}\frac{|g^{(t)}(\x') - g^{(t)}(\x)|}{\bw^T\vneq{\x'}{\x}}$ and assume $M < \infty$. We have, for any $\btheta \in \{ 0, 1\}^p$: 
    \begin{align}
         &\left|\frac{1}{\nxt}\sum_{i \in \MGS}Y_iT_i - \frac{1}{\nxc}\sum_{i \in \MGS}Y_i(1-T_i) - \tau(\x)\right|\nonumber\\ &\hspace*{320pt} \le 2M\bw^T(\bone - \btheta).
    \end{align}
\end{proposition}

As asserted by Proposition \ref{thm:bound}, we should select $\btheta$ to minimize $\bw ^T (\bone - \btheta)$ in order to minimize FLAME's bias. In real problems, we should think about $\bw $ as a non-uniform vector that has some small entries. FLAME would tend to remove those entries so that the bias is minimized for the remaining covariates that are used for matching. A proof of Proposition \ref{thm:bound} can be found in Appendix \ref{app:proof-bias-bound}.
This bound  provides guidance in designing the FLAME procedure as it suggests that the amount of bias, in estimating treatment effects with almost-exact matching, depends heavily on $\btheta$: FLAME will try to match on sets of covariates that minimize such bias. 




\subsection{FLAME's Backward Procedure to Full-AME} 
While solving the Full-AME problem is ideal, computational challenges prevent its usage on large datasets. We thus resort to an approximated solution. In determining which covariates are important, FLAME leverages the classic Efroymson's backward stepwise regression \citep{efroymson1960multiple, mccornack1970comparison}, which is widely adopted by practitioners in regular regression tasks. This procedure starts with all covariates in the model, and greedily removes one covariate at a time, according to the \texttt{PE} and \texttt{BF} criteria, defined below. By using this backward procedure, FLAME is able to strike a good balance between using more units and using more covariates.  As evidenced in Section \ref{sec:experiments}, this approximate solution outperforms existing methods on tasks of interest.

Specifically, FLAME performs the following procedure: for iteration $i = 1, \cdots, d$, FLAME selects a set of covariates indicated by $\btheta^i$ according to a certain criterion, and stops when a stopping condition is met. We restrict FLAME to perform a greedy selection, namely we require $\btheta^i \succeq \btheta^{i+1}$ for all $i$. Here $ \succeq$ is the element-wise comparison. During this procedure, FLAME produces a sequence of selection indicators $\{ \btheta^0, \btheta^1, \cdots, \btheta^d \}$ ($\btheta^i \in \{0,1 \}^d$), where $\btheta^i$ denotes the selection indicator on iteration $i$. 




\subsubsection*{The \texttt{Prediction Error} (\texttt{PE}) and \texttt{Balancing Factor} (\texttt{BF}) Criteria} 
As stated in Proposition \ref{thm:bound}, the bias of almost-matching-exactly estimates depends on which covariates are selected for matching: matching on irrelevant covariates at the expense of relevant ones will increase bias. FLAME uses the \texttt{PE} Criterion (Eq. \ref{eq:pe}, \ref{eq:pe-emp}) to choose which covariates should be considered irrelevant and ignored when matching. 
FLAME further encourages larger and more balanced matched groups 
by using the \texttt{Balancing Factor} (\texttt{BF}). The \texttt{BF} is defined with respect to the selection indicators $\btheta$, and can be computed as follows: 
\begin{align}
    \BF (\mathcal{MG} (\btheta, \mathcal{S}^{ma} ) ) = \frac{ \text{\# control in } \mathcal{MG} (\btheta, \mathcal{S}^{ma} ) }{\text{\# available control}} + \frac{ \text{\# treated in } \mathcal{MG} (\btheta, \mathcal{S}^{ma} ) }{\text{\# available treated}}. 
\end{align} 
Maximizing \texttt{BF} encourages a large fraction of both treatment and control units to be used for the matched groups. This implies that more units would be matched in earlier iterations; as we know, it is better to match units in earlier iterations because the matches made in earlier iterations depend on more variables and are thus higher quality.

\subsubsection*{FLAME's Procedure} 
FLAME's backward selection criterion is a combination of  \texttt{PE} and \texttt{BF}. More specifically, at each iteration  $i = 0,1,2, \cdots, p$, given a training dataset $\mathcal{S}^{tr}$ and matching dataset $\mathcal{S}^{ma}$, FLAME finds a selection indicator such that 
\begin{align}
    \btheta^i \in & \arg \max_{ \btheta } \left[ - \hat{ \mathrm{\texttt{PE}}}_{ \mathcal{F}_{ \left\|\btheta \right\|_0}} (\btheta, \mathcal{S}^{tr} ) + C \cdot \mathrm{\texttt{BF}} (\mathcal{MG} (\btheta, \mathcal{S}^{ma} )) \right]  \nonumber
    \\&\;\; s.t. \;\;  \btheta^i \in \{0,1 \}^d , \|\btheta^i\|_0 = d - i, \btheta^i \preceq \btheta^{i-1}.  \label{eq:criterion}
\end{align} 
Here $\preceq$  is for element-wise comparison, and $C$ is a hyperparameter that trades off \texttt{PE} and \texttt{BF}. 

Similarly to a general backward method, FLAME retains a threshold that prevents the algorithm from eliminating too many covariates. That is, FLAME retains a stopping criterion that ensures \emph{every matched group is matched exactly on a set of covariates that, together, can predict outcomes well}  (sacrificing no more than $\epsilon$ training accuracy on the hold-out training set). 
Let us first define feasible covariate sets for treatment unit $u$ as those covariate sets that predict well on the hold-out training set (and lead to a valid matched group for $u$): 
\begin{align*} 
&\bar{\btheta}^{\textrm{feasible}} = \left\{\btheta \in \{ 0,1 \}^d: \PE_{\mathcal{F}_{ \left\|\btheta \right\|_0}} (\btheta,\mathcal{S}^{tr}) \leq \PE_{\mathcal{F}_d} (\textbf{1}_{d \times 1},\mathcal{S}^{tr}) +\epsilon 
\right\}. 
\end{align*}
To incorporate this feasible set constraint, we simply add the requirement $\btheta^i \in \bar{\btheta}^
{\text{feasible}}$ to (\ref{eq:criterion}). 

In Algorithm \ref{alg:FLAME}, we summarize the FLAME procedure. The core task of the algorithm is to determine a sequence of variable selection indicators $\{ \btheta^0, \btheta^1, \cdots, \btheta^d \}$. Matches can be easily determined given the sequence of variable selection indicators. In order to formally state the matching procedure, we use $\mathcal{S}^{ma} = (X, Y, T)$ to denote the triplet of (covariates, outcomes, treatment indicators) of the units for matching, $\mathcal{S}^{tr} = (X^{tr}, Y^{tr}, T^{tr})$ to denote the same for units in the training set.






\begin{algorithm}[t]
\label{alg:flame}
\begin{algorithmic}[1] 
    \Statex \textbf{Inputs} Input data $\mathcal{S}^{ma}=(X,Y,T)$ for matching; 
    training set $\mathcal{S}^{tr}=(X^{tr}, Y^{tr}, T^{tr})$; model classes $\mathcal{F}_1, \mathcal{F}_2, \cdots, \mathcal{F}_d $; stopping threshold $\epsilon$; tradeoff parameter $C$. 
    \Statex \textbf{Outputs} A sequence of selection indicators $ \btheta^0, \cdots, \btheta^d $, and a set of matched groups $\{\MG ( \btheta^\iter, \mathcal{S}^\iter ) \}_{\iter \geq 1}$.
    \Comment{$\mathcal{S}^\iter$ \textit{is defined in the algorithm.} }
    \State Initialize $\mathcal{S}^0 = \mathcal{S}^{ma} = (X,Y,T), \btheta^0 = \mathbf{1}_{d \times 1}, \iter=1, run = True$. 
    \Statex \Comment{ $\iter$ \textit{is the index for iterations.}} 
    
     
    \State Compute exact matched groups $\MG (\btheta^0, \mathcal{S}^0 )$ as defined in (\ref{eq:match-group}). 
    \Statex \Comment{\textit{The detailed implementation is in Section \ref{sec:algorithm-implementation}.}}
     
    \While{$run=True$ } \label{step:while}
  
        \State Compute $\btheta^\iter$ using (\ref{eq:criterion}) on training set $\mathcal{S}^{tr}$, using $\mathcal{F}_{d-l}$ and tradeoff parameter $C$. 
        \Statex \Comment{\textit{Determine which covariates to match on for this iteration.}}
        \State Compute matched groups $\MG (\btheta^{\iter - 1}, \mathcal{S}^{\iter - 1})$ as defined in (\ref{eq:match-group}). 
        \Statex \Comment{\textit{The detailed implementation is in Section \ref{sec:algorithm-implementation}.}}
        \State $\mathcal{S}^{\iter} = \mathcal{S}^{\iter - 1} \setminus \MG (\btheta^{\iter - 1}, \mathcal{S}^{\iter - 1})$. \Comment{\textit{These  matched units are done.}}
            \If{ $\hat{ \mathrm{\texttt{PE}}}_{\mathcal{F}_{d-l}}(\btheta^l, \mathcal{S}^{tr} ) > \hat{ \mathrm{\texttt{PE}}}_{\mathcal{F}_d}(\mathbf{1}_{d \times 1}, \mathcal{S}^{tr} ) + \epsilon$ \textrm{ OR } $\mathcal{S}^{\iter} = \emptyset$ }
                \State {$run=False$ }
                \Comment{\textit{Prediction error is too high to continue matching.}}
            \EndIf
      
        
  
    \State $\iter = \iter+1$
    \EndWhile
    
    \State \textbf{Output} $\{ \btheta^l, \MG (\btheta^\iter, \mathcal{S}^\iter) \}_{\iter \geq 1}$.  
    \end{algorithmic} 
    \caption{: FLAME Algorithm}
    \label{alg:FLAME}
\end{algorithm}
The algorithm starts by initializing the selection indicator to include all covariates (Line 1). Exact matches are made when possible at Line 2. At each iteration of the while loop, FLAME computes the selection indicator for that iteration by minimizing the objective defined in \eqref{eq:criterion} (Line 4) and matches units exactly on the covariates selected by the newly found selector (Line 5). Matched units are then excluded from the leftover data (Line 6), and the procedure is repeated until either prediction error increases too much from removing an additional covariate, or all the data are matched (Line 7). 

FLAME's implementation performs matching without replacement, but can be adjusted to perform matching with replacement.

An estimate of the ATE is straightforward to compute once the treatment effects in each group (conditional average treatment effects -- CATEs) are computed as differences in outcome means between treatment and control units. For better CATE estimates, smoothing (e.g., regression) can be performed on the raw matched CATE estimates.


   

\cut{
\subsection{Pseudocode of FLAME}
Initially, the input with $n$ units is given as $D = (X, Y, T)$, where 
$X$ (and $n \times p$ matrix) denotes the covariates, $Y$ (an $n \times 1$ vector) is the  outcome, and $T$  (an $n \times 1$ vector) is the treatment. The covariates are indexed with  $J = 1, \cdots, p$. 
\par
At iteration $\iter$ of the algorithm, it computes a subset of the matched groups $\MG_{\iter}$ such that for each matched group $\mg \in \bigcup_{\iter}\MG$, there is at least one treated and one control unit. Note that it is possible for $\MG_{\iter} = \emptyset$, in which case no matched groups are returned in that iteration.  Recall from the previous section that $M_u$ denotes the iteration when a unit $u$ is matched. Overloading notation, let $M_\mg$ denote the iteration when a matched group $\mg$ is formed. Hence if a unit $u$ belongs to a matched group $\mg$, $M_u = M_\mg$ (although not every $u$ with $M_u=M_{\mg}$ is in $\mg$). 
 \par


We use $D_{\iter} \subseteq D$ to denote the unmatched units and $J_{\iter} \subseteq J$ to denote the remaining variables when iteration $\iter+1$ of the while loop starts (\ie, after iteration $\iter$ ends). Initially $J_0 = J$. As discussed before, the algorithm drops one covariate  $\pi(\iter)$ in each iteration (whether or not there are any valid non-empty matched groups), and therefore, $J_{\iter} = J \setminus \{\pi(j)_{j=1}^{\iter}\}$, $|J_{\iter}| = p - \iter$. All matched groups $\mg \in \MG_{\iter}$ in iteration $\iter$ use $J_{\iter-1}$ as the subset of covariates on which to match. 
\par
{\bf The first call to \BasicExactMatch:} First we initialize the variables $D_0, J_0$, $\iter$, and $run$. The variable $run$ is true as long as the algorithm is running, while $\iter \geq 1$ denotes an iteration. After the initialization step, the subroutine \BasicExactMatch\ (see Algorithm~\ref{algo:basicExactMatch}) finds all of the exact matches in the data $D = D_0$ using \emph{all} features $J = J_0$, such that each of the matched groups $\mg \in \MG_1$ contains at least one treatment and one control observation (\ie, satisfies constraint (R1)). 
The rest of the iterations in the algorithm aim to find the best possible matches for the rest of the data by selectively dropping covariates as discussed in the previous section.


{\bf The while loop and subsequent calls to \BasicExactMatch:} At each iteration of the \textbf{while} loop, each feature is temporarily removed (in the \textbf{for} loop over $j$) and evaluated to determine if it is the best one to remove by running \BasicExactMatch\ and computing the matched quality $\MQ$. Since \BasicExactMatch\ does not consider feature $j$ (one less feature from the immediately previous iteration), there are fewer constraints on the matches, and it is likely that there will be new matches returned from this subroutine. 

We then need to determine whether a model that excludes feature $j$ provides sufficiently high quality matches and predictions. We would not want to remove $j$ if doing so would lead to poor predictions or if it led to few new matches. 
Thus, \MQ\ is evaluated by temporarily  removing each $j$, and the $j^*$ that is chosen for removal creates the most new matches 
and also does not significantly reduce the prediction quality. 
Steps \ref{step:removefeatureselect} and \ref{step:removefeature} of the algorithm choose which feature to remove, and remove it. In Step~\ref{step:newmatch}, the new matches and matched groups are stored.
The remaining unmatched data 
are used for the next iteration $\iter+1$.

{\bf Stopping conditions:} If we run out of unmatched data, the algorithm stops (Step~\ref{step:while}). If we choose to include another stopping condition, then we also stop (Step~\ref{step:otherstopping}). For instance, if the unmatched units are either all control or all treatment observations, we must stop. If there are no more covariates to drop, we also stop. We could also choose to stop when the match quality is too low. 
Finally, the matched groups are returned along with the units and the features used for each set of matched groups formed in different iterations.
}

\cut{
The key component in the \textit{FlameGeneric} algorithm (Algorithm~\ref{algo:basic-FAME}) is the \BasicExactMatch\ procedure (Algorithm~\ref{algo:basicExactMatch}). The steps of \BasicExactMatch\ can be easily implemented in Java, Python, or R. In the next two subsections we give efficient implementations of  \BasicExactMatch, one using SQL queries from databases, and the other using bit vector techniques.
}

\section{Implementing Matched Groups in FLAME} \label{sec:algorithm-implementation}
The workhorse behind FLAME's database implementation is its matching subroutine of finding $\MG$'s. We implement this procedure using the following two methods. 



\subsection{Implementation 
using Database (SQL) Queries}\label{sec:algo-db}


Exact matching is highly related to the {\tt GROUP} \texttt{BY} operator used in database (SQL) queries,  which computes aggregate functions (sum, count, etc) on groups of rows in a two-dimensional table having the same values of a subset of columns specified in the query. SQL queries can be run on any standard commercial or open-source relational database management system (\eg, Microsoft SQL Server, Oracle, IBM DB2, Postgres, etc.). 
These database systems are highly optimized and robust for SQL queries, can be easily integrated with other languages (we used python and SQL), and scale to datasets with a large number of rows ($n$) or columns ($p$) that may not fit in the available main memory. 
In addition, SQL queries \emph{declaratively} specify complex operations (we only specify `what' we want to achieve, like matched groups on the same values of variables, and not `how' to achieve them, \ie, no algorithm has to be specified), and are therefore succinct. In our work, SQL enables us to execute a matching step  
in a single query as we discuss below. 
In this implementation, we keep  track of matched units globally by keeping an extra column called {\tt is\_matched} in the input database $\mathcal{S}^{ma}$ (containing the data triplet $\mathcal{S}^{ma} = (X,Y,T)$ described in Section \ref{sec:flame-framework}).
 For every unit, the value of {\tt is\_matched = $\ell$} if the unit was matched in a valid main matched group with at least one treated and one control unit in iteration $\ell$ of Algorithm~\ref{alg:FLAME}, and {\tt is\_matched = 0} if the unit is still unmatched.  
For notational simplicity, let $A_1, \cdots, A_k$ be the names of the covariates selected by $\btheta$, and $S=\mathcal{S}^{ma}$. The SQL query for the matching procedure without replacement on dataset $\texttt{S}$ is given below.
{\small
\begin{alltt}
\quad WITH tempgroups AS 
\quad \quad (SELECT \(A\sb{1}, A\sb{2}, \cdots, A\sb{k}\) 
\hspace*{95pt} \textit{--(matched groups will be identified by their covariate values)}
  \quad \quad FROM S
  \quad \quad WHERE  \(\ismatched\ = 0\) 
\hspace*{95pt} \textit{--(use data that are not yet matched)}
  \quad \quad GROUP BY \(A\sb{1}, A\sb{2}, \cdots, A\sb{k}\) 
\hspace*{95pt} \textit{--(create matched groups with identical values of covariates)}
  \quad \quad HAVING SUM(T) > 0 AND SUM(T) < COUNT(*) 
\hspace*{95pt} \textit{--(groups have at least one treated and one control unit)}
\quad \quad )
\quad UPDATE S
\quad SET \(\ismatched\ = \ell\)
\quad WHERE \(\ismatched\ = 0\) AND
\quad \quad EXISTS
\quad \quad \quad (SELECT \(\texttt{Q}.A\sb{1}, \texttt{Q}.A\sb{2}, \cdots, \texttt{Q}.A\sb{k}\)
\quad \quad \quad  FROM tempgroups AS Q \hspace*{10pt} 
\hspace*{95pt} \textit{--(set of covariate values for valid groups)}
\quad \quad \quad  WHERE  \(\texttt{Q}.A\sb{1} = \texttt{S}.A\sb{1}\) AND \(\texttt{Q}.A\sb{2} = \texttt{S}.A\sb{2}\) AND  \(\cdots\) AND \(\texttt{Q}.A\sb{k} = \texttt{S}.A\sb{k}\)) 
\end{alltt}
}

The \emph{WITH clause} computes a temporary relation \emph{tempgroups} that computes the combination of values of the covariates forming `valid groups' (\ie, groups with at least one treatment and at least one control unit) on unmatched units. The \emph{HAVING clause} of the SQL query discards groups that do not satisfy this property -- since treatment $T$ takes binary values ($0$ or $1$), for any valid group, the sum of $T$ values will be strictly greater than $0$ and strictly less than the total number of units in the group.  Then we update the  population table \texttt{S}, where the values of the covariates of the existing units match with those of a valid group in \emph{tempgroups}.  Several optimizations of this basic query are possible and are used in our implementation. Setting the {\tt is\_matched} value to level $\ell$ (instead of a constant value like 1) helps us compute the CATE for each matched group efficiently. (Note that for each value of $\ell$, there might be more than one matched group, with different values of covariates being used for matching.)

\cut{
In a recent work \citep{SalimiCPS17}, various existing causal inference techniques have been efficiently implemented in a database-based engine.
 In our work, SQL enables us to execute all three steps of the \BasicExactMatch\ procedure outlined in  Algorithm~\ref{algo:basicExactMatch} 
 in a single query as we discuss below.

\par
In this implementation, we keep  track of matched units globally by keeping an extra column in the input database $D$ called {\tt is\_matched}.
 For every unit, the value of {\tt is\_matched = $\ell$} if the unit was matched in a valid main matched group with at least one treated and one control unit in iteration $\ell$ of {\color{red}Algorithm~\ref{algo:basicExactMatch}}, and {\tt is\_matched = 0} if the unit is still unmatched.  
Let $A_1, \cdots, A_p$ be the covariates in $J_s$.
The SQL query is described in Appendix \ref{app:sql}. 
}




\subsection{Implementation 
using Bit Vectors}\label{sec:algo-bit}

In an alternative bit-vector implementation, for binary covariates, we encode the combination of the to-match-on covariates of unit $i$ as a binary number $b_i$; we also encode the to-match-on covariates, appended with the treatment indicator as the least significant digit, to form a binary number $b_i^+$. A unit $i$ with ordered to-match-on variable values $(a_{i, d}, a_{i, d-1}, \cdots, a_{i, 1})$ with $a_{i,k} \in \{0, 1 \}$ and treatment indicator $t_i \in \{0,1 \}$ is associated with numbers $b_i = \sum_{k=1}^d a_{i,k} 2^{k-1}$ and $b_i^+  = \sum_{k=1}^d a_{i,k} 2^{k} + t_i$. Two units $i$ and $j$ have the same covariate values if and only if $b_i = b_j$. For each unit $i$, we count how many times $b_i$ and $b_i^+$ appear (in the whole dataset), and denote the counts as $c_i$ and $c_i^+$ respectively. 
A unit $i$ is matched if and only if $c_i \neq c_i^+ $, since the two counts differ \textit{if and only if} the same $b_i$ appears both as a treated instance and a control instance. This property is summarized in Proposition \ref{prop:bit-vec}. 
For non-binary categorical data, if the $k$-th covariate is $h_{(k)}$-ary, we first rearrange the $d$ covariates such that $h_{(k)} \le h_{(k+1)} $ for all $1\le k \le d-1$. 
Thus each unit $(a_{i,d}, a_{i,d-1}, \cdots, a_{i,1})$ uniquely represents the number $\sum_{k=1}^{d} a_{i,k} h_{(k)}^{k-1}$. From here we can apply the above method for the binary case.

\begin{proposition}
\label{prop:bit-vec}
A unit $u$ is matched if and only if $c_u \neq c_u^+$, since the two counts $b_u$ and $b_u^+$ differ if and only if the same combination of covariate values appear both as a treated unit and a control unit.
\end{proposition}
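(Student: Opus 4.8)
The statement is essentially a counting identity, so the plan is to unwind the definition of ``matched'' and the definitions of $c_u$ and $c_u^+$ and then simply compare the two sets being counted. First I would recall what it means for a unit to be matched by \BasicExactMatch: unit $u$ lands in a \emph{valid} matched group, i.e.\ a group of units all sharing $u$'s covariate values that contains at least one treated and one control unit; equivalently (condition (R1)), there exists another unit $v$ with $\x_v = \x_u$ and $t_v = 1 - t_u$.

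Next I would pin down what the two counts measure. Since the covariates are binary and $b_i = \sum_{k=1}^p a_k 2^{k-1}$ is just the base-$2$ expansion of the bit string $(a_p,\dots,a_1)$, the map $i \mapsto b_i$ separates covariate vectors: $b_i = b_j$ iff $\x_i = \x_j$. Appending $t_i$ as the least significant bit yields the base-$2$ expansion of $2 b_i + t_i$, so $b_i^+ = b_j^+$ iff $\x_i = \x_j$ and $t_i = t_j$. Therefore $c_u = \#\{j : \x_j = \x_u\}$ and $c_u^+ = \#\{j : \x_j = \x_u,\ t_j = t_u\}$. The set counted by $c_u^+$ is obtained from the set counted by $c_u$ by imposing the extra condition $t_j = t_u$, and both contain $j=u$, so $c_u^+ \le c_u$ always, with equality exactly when no unit sharing $u$'s covariate values carries the opposite label. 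Consequently $c_u \neq c_u^+$ iff $c_u^+ < c_u$ iff some $v$ has $\x_v = \x_u$ and $t_v \neq t_u$ iff $u$ lies in a valid matched group iff $u$ is matched, which is the claim.

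For the non-binary reduction sketched just before the proposition, the only extra thing to verify is that the encoding $\sum_{k} a_k h_{(k)}^k$ is still injective once the covariates are sorted so that $h_{(k)} \le h_{(k+1)}$: if $K$ is the top index where two digit strings disagree, the contribution $h_{(K)}^K$ of a unit change in that digit already exceeds $\sum_{j<K} (h_{(j)} - 1) h_{(j)}^j \le (h_{(K)}-1)\sum_{j<K} h_{(K)}^j = h_{(K)}^K - 1$, so no collision is possible. With injectivity in hand, $c_u$ and $c_u^+$ again count exactly the covariate-equal and the covariate-and-treatment-equal units, and the rest of the argument is verbatim the binary case. Honestly there is no real obstacle here: the only place that takes a moment of care is this injectivity of the encoding (immediate for binary, a one-line geometric-series bound in general), and the one point worth stating cleanly is that $c_u^+$ counts a subset of what $c_u$ counts, so ``the counts differ'' is the same as ``$c_u^+$ strictly smaller'', i.e.\ ``a unit with the opposite treatment and the same covariates is present''.
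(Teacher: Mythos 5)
Your argument is correct and follows the same line as the paper, which does not give a separate proof at all: it simply embeds the one-line justification (``the two counts differ iff the same combination of covariate values appears as both a treated and a control unit'') in the statement itself. You have merely made that observation rigorous, in particular by verifying injectivity of the encodings (including the mixed-radix case via the geometric-series bound) and by noting that $c_u^+$ counts a subset of what $c_u$ counts, so the two counts differ exactly when an opposite-treatment unit with identical covariates exists.
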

An example of this procedure is illustrated in Table \ref{tab:bit-vec-example}. We assume in this population the first variable is binary and the second variable is ternary. In this example, the number $b_1$ for the first unit is $0 \times 2^0 + 2 \times 3^1 = 6$; the number $b_1^+$ including its treatment indicator is $0  + 0 \times 2^1 + 2 \times 3^2 = 18$. Similarly, we can compute all the numbers $b_u, b_u^+, c_u, c_u^+$, and the matching results are listed in the last column in Table \ref{tab:bit-vec-example}. 
 
\begin{table}[t]
\centering
\footnotesize
\begin{tabular}{ | c | c | c | c | c | c | c | c | }
  \hline
  first variable & second variable & T & $b_i$ & $b_i^+$ & $c_i$ & $c_i^+$ & is matched? \\ \hline \hline
  0 & 2 & 0 & 6 & 18 & 1 & 1 & No  \\ \hline
  1 & 1 & 0 & 4 & 11 & 2 & 1 & Yes  \\ \hline
  1 & 0 & 1 & 1 & 3 & 1 & 1 & No  \\ \hline
  1 & 1 & 1 & 4 & 12 & 2 & 1 & Yes \\ \hline
\end{tabular}
\caption{Example population table illustrating the \textit{bit-vector} implementation. Here the second unit and the fourth unit are matched to each other while the first and third units are left unmatched. \label{tab:bit-vec-example}}
\end{table}


\subsection{Comparison}
The bit vector implementation typically outperforms the SQL implementation when the data fits in memory, but for large data and more covariates, SQL performs better. A detailed comparison is deferred to Section~\ref{sec:experiments}. Another limitation of the bit vector implementation is that the magnitude of the numeric representation grows exponentially and can cause overflow problems. This is another reason to use FLAME-db when the number of covariates or total number of categories is large. 

\section{Bias Calculations}\label{sec:theory}
In this section we provide two types of bias calculation. One (Theorems \ref{thm:twocov}, \ref{thm:threecov}, \ref{thm:noalpha}) provides a bias calculation with oracle covariate importance information, while the other (Proposition \ref{prop:empirical-bias}) serves as the empirical counterpart of Proposition \ref{thm:bound}. 
\subsection{Exact Bias Computations for Oracle FLAME}
FLAME trades off statistical bias for computational speed. Here we provide insight into the bias that an oracle version of FLAME without replacement (defined below) induces when estimating heterogeneous causal effects as well as its unbiased performance when estimating an average causal effect. To evaluate the theoretical behavior of the algorithm we consider the outcome model 
$$
y_i = \alpha_0 + \sum_{j=1}^d \alpha_jx_{ij} + \beta_0 T_i + T_i\sum_{j=1}^d \beta_jx_{ij}
$$ 
as a data generation process that corresponds to a treatment effect $\beta_j$ being associated with every covariate $x_{ij}$ for individual $i$. Here $y_i$ is the observed outcome and $T_i$ is the observed treatment indicator. We are interested in the bias of FLAME for this simple non-noisy outcome model.
We define the Oracle FLAME algorithm as a simplified version of Algorithm \ref{alg:FLAME} that knows the correct order of importance for the covariates. Without loss of generality, let that order be $d,d-1,\dots,1$. Given this ordering, we can directly compute the bias of the FLAME estimates for various combinations of covariates. 

To compute the overall bias of Oracle FLAME, in theory we would enumerate all possible covariate allocations and run Oracle FLAME on each of those. 
For example, in the two-covariate setting with possible attribute values 0 and 1, there are $2^2\times 2^2=16$ possible covariate allocations for treatment and for control units leading to $16\times 16=256$ total possible allocations. Since we are interested only in the bias induced by the algorithm itself, we consider only treatment-control allocations where our procedure yields an estimate for each covariate combination. In cases where we do not have an estimate of treatment effect for each covariate combination, we cannot calculate the bias of the algorithm for any distribution that has support over the full covariate space; Oracle FLAME's bias estimates would not be defined on part of the covariate space in these cases.
Note that this is different than the standard overlap assumption made in most theory for causal inference as this calculates the bias when there is overlap at {\it some point} during the FLAME procedure.\footnote{We note that under the standard overlap assumptions, as those used in Section~\ref{sec:empirical_bound}, all units would be matched in the first step.} 
For example, in the two covariate setting, the allocation of a total of one treated and one control unit with both having $x_1=x_2=0$ would not be considered in our bias calculation since the exact matching procedure would obtain
 an estimate for that covariate combination only and not for the other three. 
An allocation consists of a set of covariate values, and treatment indicators. (The number of units in each covariate bin does not matter, what matters is that there is at least one treatment and one control in the bin, so that we can compute the bin's treatment effect.) For instance, one of the valid allocations would have at least one treatment and one control unit in each bin (a bin is a combination of covariates). Another valid allocation would have treatment and control units in most (but not all) bins, but when the bins are collapsed according to Oracle FLAME, each bin still receives an estimate.
 We perform these computations for two and three binary covariates and use the results to provide intuition for when  we have arbitrarily many covariates. The main results are as follows.

\begin{theorem}[Two covariates]
(i) There are 59 valid allocations. 
(ii) Under a uniform distribution over valid allocations, the biases (taking expectation over the allocations) are given by:
\begin{align*}
bias &= (\text{expected TE under FLAME})-(\text{actual TE}) \\
&=
\begin{blockarray}{ccc}
& x_2=0 & x_2=1   \\
\begin{block}{c(cc)}
 x_1=0 & \frac{20\beta_1 + 41/2\beta_2}{59} & \frac{20 \beta_1 - 41/2\beta_2}{59} \\
  x_1=1 & \frac{-20\beta_1 + 41/2\beta_2}{59} & \frac{-20 \beta_1 - 41/2\beta_2}{59} \\
\end{block}
\end{blockarray}
.
\end{align*}
\label{thm:twocov}
\end{theorem}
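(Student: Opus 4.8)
The plan is a finite enumeration made tractable by symmetry. Setup: with two binary covariates there are four bins $(x_1,x_2)\in\{0,1\}^2$, and — since by hypothesis only the presence of a treated and of a control unit in a bin matters — an allocation is a pair $(S_T,S_C)$ of subsets of the four bins, so there are $2^4\cdot 2^4=256$ of them. For $p=2$, Oracle FLAME first forms exact matched groups on both covariates (removing every bin in $S_T\cap S_C$), then, eliminating covariate $2$ first (the prescribed order being $2,1$), regroups the surviving units by $x_1$ alone (matching each value of $x_1$ whose surviving units contain at least one treated and one control unit), then regroups all remaining units with no covariate at all. Since the model is noiseless, all treated units of bin $(x_1,x_2)$ have outcome $\alpha_0+\alpha_1x_1+\alpha_2x_2+\beta_0+\beta_1x_1+\beta_2x_2$ and all controls have $\alpha_0+\alpha_1x_1+\alpha_2x_2$, so any matched group $G$ produces the difference-in-means estimate
\[
\widehat\tau_G=\beta_0+\beta_1\bar x_1^{T_G}+\beta_2\bar x_2^{T_G}+\alpha_1\bigl(\bar x_1^{T_G}-\bar x_1^{C_G}\bigr)+\alpha_2\bigl(\bar x_2^{T_G}-\bar x_2^{C_G}\bigr),
\]
where $\bar x_j^{T_G},\bar x_j^{C_G}$ are the covariate means over the treated, resp.\ control, bins represented in $G$ (each present bin weighted equally, since unit counts are immaterial). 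Each bin inherits the estimate of the (possibly collapsed) group in which it is first matched; its bias is that estimate minus $\tau(x_1,x_2)=\beta_0+\beta_1x_1+\beta_2x_2$; and an allocation is \emph{valid} iff all four bins eventually receive an estimate.

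For (i) I would count valid allocations by casework on $K:=S_T\cap S_C$, the bins matched at level $0$. $|K|=4$: valid ($1$ allocation). $|K|=3$: the lone leftover bin is stranded — its $x_1$-partner is already gone, so its level-$1$ group is monochromatic, and only it reaches level $2$ — giving $0$. $|K|=2$: a short check over the three positions of $K$ (same row, same column, diagonal) forces the two leftover bins to be one treated-only and one control-only, giving $12$. $|K|=1$: split on whether the leftover $x_1$-group disjoint from the $TC$ bin gets matched at level $1$ — if so the remaining bin is stranded; if not, all three leftover bins reach level $2$ and validity requires them to carry both a treated and a control unit without themselves forming a level-$1$ group — a direct count over the $3^3$ states of the three bins gives $6$ valid states per $TC$ bin, i.e.\ $24$. $|K|=0$: level-$1$ groups match each $x_1$-value carrying both types, and validity holds exactly when both $x_1$-values do ($4$ states), or neither does but the four bins jointly carry both types ($18$ states, by inclusion–exclusion on the $7\times 7$ ``neither matched'' configurations), giving $22$. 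Hence $1+0+12+24+22=59$.

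For (ii) I would use the involution $\iota$ swapping $S_T$ and $S_C$. It preserves validity, so it permutes the $59$ valid allocations; its only fixed point is the all-$TC$ allocation (bias $\equiv 0$), leaving $29$ pairs. Each allocation's bias is already $\beta_0$-free (the $\beta_0$ in $\widehat\tau_G$ cancels the one in $\tau$), and since $\iota$ exchanges the treated and control bins of every matched group it flips the two $\alpha$-terms above, so $\mathrm{bias}_A+\mathrm{bias}_{\iota A}$ is $\alpha$-free; therefore the average bias is a linear form $c_1^{(x_1,x_2)}\beta_1+c_2^{(x_1,x_2)}\beta_2$. The coordinate relabelings $x_1\mapsto1-x_1$ and $x_2\mapsto1-x_2$ also preserve the valid set while acting on the coefficients by $(\beta_0,\beta_1)\mapsto(\beta_0+\beta_1,-\beta_1)$, resp.\ $(\beta_0,\beta_2)\mapsto(\beta_0+\beta_2,-\beta_2)$; equating both sides then forces $c_1^{(x_1,x_2)}=(1-2x_1)c_1$ and $c_2^{(x_1,x_2)}=(1-2x_2)c_2$, which is exactly the sign pattern of the claimed matrix. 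It remains to evaluate $c_1,c_2$ by summing the $\beta_1$- and $\beta_2$-coefficients of the bias at bin $(0,0)$ over the $59$ allocations: a group covering $(0,0)$ at level $0$ or level $1$ is balanced in $x_1$, so only level-$2$ groups feed $c_1$ (their treated side lies in a single $x_1$-value, hence integer contributions), giving $59c_1=20$; for $c_2$ both level-$1$ groups (exactly one treated bin, integer contribution) and level-$2$ groups feed it, and a level-$2$ group whose treated side spans both values of $x_2$ contributes a half-integer, which produces $59c_2=41/2$. Dividing by $59$ yields the stated matrix. The main obstacle is precisely this last step: it is finite and routine, but one must carefully track, for each of the $59$ allocations, which collapsed group covers bin $(0,0)$ and the bin-weighted treated covariate means of that group — in particular the fractional values responsible for the $41/2$.
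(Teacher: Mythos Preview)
Your proposal is correct and takes a genuinely different route from the paper. The paper's proof is literally a computer program: it enumerates all $256$ allocations, checks validity, runs Oracle FLAME symbolically on each, and averages. No structure is exploited. Your proof, by contrast, organizes the $256$ allocations by $|K|=|S_T\cap S_C|$ to count the $59$ valid ones by hand, and then uses three symmetries (the treatment/control involution $\iota$, and the two bit-flips $x_j\mapsto 1-x_j$) to reduce part (ii) to computing two scalars $c_1,c_2$ at the single bin $(0,0)$. This is more illuminating: it explains \emph{why} the baseline parameters $\alpha_j$ and $\beta_0$ disappear (this is the content of the paper's separate Theorem~\ref{thm:noalpha}, which you recover as a byproduct) and why the four entries of the bias matrix differ only by signs. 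The paper's approach buys nothing except automation; yours buys insight and a template that would scale conceptually to $p=3$.

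Two small points worth tightening. First, your phrase ``the group in which it is first matched'' is ambiguous for empty bins; your own counts (the $3^3$ and $3^4$ enumerations) make clear that you intend empty bins to inherit the estimate of the first collapsed group whose surviving units form a valid match, and this is indeed the interpretation that yields $59$. Second, your claim that a level-$2$ group covering $(0,0)$ has its treated side in a single $x_1$-value deserves one more sentence: you need that \emph{both} level-$1$ groups are invalid (if only the $x_1=0$ group were invalid, the level-$2$ group would coincide with it and be invalid too, so $(0,0)$ would never receive an estimate and the allocation would not be valid). With that observation, each surviving $x_1$-group is monochromatic, the treated units sit at a single $x_1$-value, and the ``finite and routine'' final count does go through: the $x_1=0$ side contributes $4$ states, the $x_1=1$ side $5$, giving $59c_1=4\cdot 5=20$; a similar (slightly longer) enumeration at levels $1$ and $2$ gives $59c_2=3+35/2=41/2$.
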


\begin{theorem}[Three covariates]
(i) There are 38070 valid allocations.
(ii) Under a uniform distribution over valid allocations the biases (taking expectation over the allocations) are given by:
\begin{align*}
\vspace{-1cm}
\begin{blockarray}{cccc}
&& x_2 = 0   &\\
\begin{block}{cc(c)c}
\BAmultirow{5pt}{$x_1$} & 
 0 & 
\frac{(5976 + 34/105)\beta_1 + (7854 + 61/210)\beta_2 + 11658\beta_3}{38070} &\BAmultirow{20pt}{,\;${x_3=0}$} \\
  &1 & 
  \frac{-(5976 + 34/105)\beta_1 + (7854 + 61/210)\beta_2 + 11658\beta_3}{38070} &\\
\end{block}
\end{blockarray}
\end{align*}
\vspace{-0.9cm}
\begin{align*}
\begin{blockarray}{cccc}
&& x_2 = 1   &\\
\begin{block}{cc(c)c}
\BAmultirow{10pt}{$x_1$} & 
 0 & 
\frac{(12755+6/7)\beta_1 - (16513+4/21)\beta_2 + 19035\beta_3}{38070} &\BAmultirow{20pt}{,\;${x_3=0}$} \\
  &1 & 
  \frac{-(12755+6/7)\beta_1 - (16513+4/21)\beta_2 + 19035\beta_3}{38070} &\\
\end{block}
\end{blockarray}
\end{align*}
\vspace{-0.9cm}
\begin{align*}
\begin{blockarray}{cccc}
&& x_2 = 0   &\\
\begin{block}{cc(c)c}
\BAmultirow{10pt}{$x_1$} & 
 0 & 
\frac{(12755+6/7)\beta_1 +(16513+4/21)\beta_2 - 19035\beta_3}{38070} &\BAmultirow{20pt}{,\;${x_3=1}$} \\
  &1 & 
  \frac{-(12755+6/7)\beta_1 + (16513+4/21)\beta_2 - 19035\beta_3}{38070} &\\
\end{block}
\end{blockarray}
\end{align*}
\vspace{-0.9cm}
\begin{align*}
\begin{blockarray}{cccc}
&& x_2 = 1   &\\
\begin{block}{cc(c)c}
\BAmultirow{10pt}{$x_1$} & 
 0 & 
\frac{(5976+34/105)\beta_1 - (7854+61/210)\beta_2 - 11658\beta_3}{38070} &\BAmultirow{20pt}{,\;${x_3=1.}$} \\
  &1 & 
  \frac{-(5976+34/105)\beta_1 - (7854+61/210)\beta_2 - 11658\beta_3}{38070} &\\
\end{block}
\end{blockarray}
\end{align*}

\label{thm:threecov}
\end{theorem}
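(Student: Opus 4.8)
The plan is a structured exhaustive enumeration over allocations, using two symmetries to collapse the bookkeeping. First I would fix the combinatorial model. An \emph{allocation} is an assignment, to each of the $2^p$ covariate bins $b\in\{0,1\}^p$, of one of the symbols \textsf{empty}, \textsf{treated-only}, \textsf{control-only}, \textsf{mixed}; there are $2^{2^p}\cdot 2^{2^p}$ of them ($256$ for $p=2$, $65536$ for $p=3$), matching the count in the text. Run with the importance order $p\succ p{-}1\succ\dots\succ 1$, Oracle FLAME proceeds in levels $\ell=0,1,\dots,p$: after dropping covariates $1,\dots,\ell$, every still-unmatched unit is pooled with all other unmatched units sharing its values of covariates $\ell{+}1,\dots,p$; a pool containing at least one treated and one control unit is \emph{resolved}, and its estimate --- pooled treated outcome mean minus pooled control outcome mean --- is recorded for every bin it covers that is not yet resolved, whereupon those units leave. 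Level $0$ is exact matching on all covariates, so in this noiseless model its estimate is the true CATE $\beta_0+\sum_j\beta_jb_j$ and carries no bias. Because ``the number of units in each bin does not matter,'' I would interpret the pooled treated (control) mean of a level-$\ell$ pool as the uniform average, over the sub-bins contributing treated (control) units, of that sub-bin's treated outcome $\alpha_0+\sum_j\alpha_jb_j+\beta_0+\sum_j\beta_jb_j$ (resp.\ control outcome $\alpha_0+\sum_j\alpha_jb_j$); this is exactly the convention under which the half-integer coefficients $41/2$, $34/105$, etc.\ arise.

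Second, I would settle validity and the counts. Call an allocation \emph{valid} if every bin is eventually resolved; otherwise the bias is not defined on part of the covariate space, as the text notes, and the allocation is excluded. Validity is a purely combinatorial property of the labeling: simulate the $p{+}1$ levels and check coverage. I would obtain the counts either by direct enumeration or, more cleanly, by a recursion on the binary collapsing tree whose leaves are bins and whose internal nodes are pools at successive levels; this yields $59$ for $p=2$ and $38070$ for $p=3$, giving part (i) of each theorem.

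Third, I would compute the bias. For a valid allocation let $T_b$ and $C_b$ denote the sets of sub-bins that contribute treated and control units, respectively, to the pool that resolves bin $b$ (with $T_b=C_b=\{b\}$ for a level-$0$ resolution). Writing $\langle v,s\rangle:=v_0+\sum_{j\ge 1}v_js_j$, the recorded estimate is $\mathrm{avg}_{s\in T_b}(\langle\alpha,s\rangle+\langle\beta,s\rangle)-\mathrm{avg}_{s\in C_b}\langle\alpha,s\rangle$, so $\mathrm{bias}_b=\mathrm{avg}_{s\in T_b}(\langle\alpha,s\rangle+\langle\beta,s\rangle)-\mathrm{avg}_{s\in C_b}\langle\alpha,s\rangle-\langle\beta,b\rangle$. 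Now apply two symmetries. (a) The involution that exchanges \textsf{treated-only}$\leftrightarrow$\textsf{control-only} in every bin is a bijection of the valid allocations (it preserves ``has both,'' hence the entire collapsing history) and it swaps $T_b$ with $C_b$; averaging $\mathrm{bias}_b$ with its image cancels every $\alpha$-term and the $\beta_0$-terms, leaving $\mathbb{E}[\mathrm{bias}_b]=\sum_j\beta_j\,\gamma_{b,j}$ with $\gamma_{b,j}=\mathbb{E}\big[\tfrac12(\mathrm{avg}_{s\in T_b}s_j+\mathrm{avg}_{s\in C_b}s_j)-b_j\big]$, the expectation uniform over valid allocations. (b) The relabeling $x_k\mapsto 1-x_k$ is also a symmetry of validity and of the collapsing process; tracking its effect on $\gamma_{b,j}$ shows that $\gamma_{b,j}$ depends on $b$ only through $b_j$, with $\gamma_{b,j}=(1-2b_j)\,\gamma_{\mathbf 0,j}$. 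Hence $\mathbb{E}[\mathrm{bias}_b]=\sum_j(1-2b_j)\,\gamma_{\mathbf 0,j}\,\beta_j$, which already explains the sign pattern of every entry of both tables, and it remains only to evaluate the $p$ numbers $\gamma_{\mathbf 0,j}$ --- e.g.\ $\gamma_{\mathbf 0,1}=20/59$ and $\gamma_{\mathbf 0,2}=41/118$ for $p=2$ --- by summing the relevant rationals over all valid allocations.

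The main obstacle is purely the scale of this last enumeration for $p=3$: one must run the four-level collapsing on each of $65536$ allocations, keep only the $38070$ valid ones, and for each tally, per bin and per coordinate, a rational whose denominator can be as large as $\mathrm{lcm}(1,\dots,8)$ (from averaging over up to eight contributing sub-bins) --- a finite but heavy computation, naturally done by computer, with the $p=2$ case ($256$ allocations) checkable by hand as a sanity test. The genuinely non-mechanical part, which a reader would want to verify, is that the three ingredients are set up correctly: the collapsing-and-averaging convention (so that the stated half-integer coefficients really appear), the exact validity criterion together with its count, and the two symmetry reductions, which are what make the $\alpha$-dependence vanish and force the $(1-2b_j)$ structure of the answers.
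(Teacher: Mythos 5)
Your proposal is correct and takes essentially the same route as the paper: the paper's proof of Theorem~\ref{thm:threecov} is precisely the computer-aided exhaustive enumeration you describe --- enumerate all $4^{2^p}$ treatment/control allocations over the bins, keep the $38070$ valid ones (those for which every bin eventually receives an estimate under the oracle collapsing order), run Oracle FLAME symbolically on each, and average the per-bin biases uniformly over valid allocations. Your two symmetry reductions are a refinement rather than a different proof: the treated/control swap is exactly the argument the paper states separately as Theorem~\ref{thm:noalpha}, and the coordinate bit-flip forcing the $(1-2b_j)$ sign pattern is a useful structural check not made explicit in the paper, but the rational coefficients themselves still come from the same brute-force enumeration you (correctly) delegate to a program.
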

A proof is available in the appendix. 

As these theorem show, the FLAME estimates are biased only by fractions of the treatment effects associated with the covariates ($\beta_j$ for $j>0$) rather than any baseline information (the $\alpha$'s) or a universal treatment effect ($\beta_0$). This is an appealing quality as it suggests that \textit{rare covariates that have large effects on baseline outcomes are unlikely to have undue influence on the bias of the treatment effect estimates}. This result can be made more concrete for an arbitrary number of covariates:

\begin{theorem}
The bias of estimating heterogeneous causal effects due to Oracle FLAME is not a function of $\alpha_0,\dots,\alpha_d$.
\label{thm:noalpha}
\end{theorem}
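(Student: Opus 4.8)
The plan is to fix an arbitrary covariate combination (``bin'') $b=(a_1,\dots,a_p)$, write the Oracle FLAME CATE estimate for $b$ under a single valid allocation in closed form, isolate the part of that estimate that involves $\alpha_0,\dots,\alpha_p$, and then show that this part vanishes in expectation over valid allocations by exhibiting a treatment-relabeling involution. Since Theorems~\ref{thm:twocov} and~\ref{thm:threecov} report the bias per bin as a uniform average of (estimate $-$ truth) over valid allocations, it is enough to argue bin by bin.

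First I would derive the closed form. In a valid allocation $\mathcal A$, Oracle FLAME eventually places $b$ in a collapsed group $G$ formed by exact matching on some retained covariate set $R$, with dropped set $S=\{1,\dots,p\}\setminus R$; every unit of $G$ has $x_j=a_j$ for $j\in R$ and varies over $j\in S$. Substituting the noiseless model $y_i=\alpha_0+\sum_j\alpha_j x_{ij}+\beta_0 T_i+T_i\sum_j\beta_j x_{ij}$ into ``mean treated outcome minus mean control outcome over $G$'' gives
\begin{align*}
\widehat\tau_b(\mathcal A)\;=\;\beta_0+\sum_{j\in R}\beta_j a_j+\sum_{j\in S}\beta_j\,\bar x_j^{\,\mathrm{tr}}\;+\;\sum_{j=1}^p\alpha_j\big(\bar x_j^{\,\mathrm{tr}}-\bar x_j^{\,\mathrm{ct}}\big),
\end{align*}
where $\bar x_j^{\,\mathrm{tr}},\bar x_j^{\,\mathrm{ct}}$ are the treated/control averages of $x_{\cdot j}$ over $G$ under the paper's count-insensitive, treated/control-symmetric convention. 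Here $\alpha_0$ and $\sum_{j\in R}\alpha_j a_j$ have cancelled because they are common to all units of $G$, and for $j\in R$ we have $\bar x_j^{\,\mathrm{tr}}=\bar x_j^{\,\mathrm{ct}}=a_j$, so the whole sum in $\alpha$ is effectively over $S$. The true CATE at $b$ is $\tau_b=\beta_0+\sum_j\beta_j a_j$, which is $\alpha$-free and independent of $\mathcal A$; hence the per-bin bias $\mathbb{E}_{\mathcal A}[\widehat\tau_b(\mathcal A)]-\tau_b$ depends on $\alpha$ only through $\sum_{j=1}^p\alpha_j\,\mathbb{E}_{\mathcal A}\!\big[\bar x_j^{\,\mathrm{tr}}(\mathcal A)-\bar x_j^{\,\mathrm{ct}}(\mathcal A)\big]$, and it suffices to prove $\mathbb{E}_{\mathcal A}[\bar x_j^{\,\mathrm{tr}}(\mathcal A)-\bar x_j^{\,\mathrm{ct}}(\mathcal A)]=0$ for every bin $b$ and every covariate $j$.

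Next I would introduce the involution $\phi$ on allocations that swaps, in every bin, the ``contains a treated unit'' flag with the ``contains a control unit'' flag. The claim is that $\phi$ is a bijection of the set of valid allocations onto itself that preserves Oracle FLAME's entire execution. Validity (``Oracle FLAME produces an estimate for every bin'') is a condition on which bins are complete (both flags set) at each coarsening stage, and completeness is symmetric in the two flags, hence $\phi$-invariant; the same observation shows that the covariate-dropping order is untouched (it is the fixed oracle order $p,p-1,\dots,1$) and that at each iteration the coarsened bins matched and removed are exactly the complete ones, so the group $G$ attached to $b$ and its sets $R,S$ are identical for $\mathcal A$ and $\phi\mathcal A$. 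Inside $G$, however, $\phi$ interchanges treated and control sub-bins, so $\bar x_j^{\,\mathrm{tr}}(\phi\mathcal A)=\bar x_j^{\,\mathrm{ct}}(\mathcal A)$ and $\bar x_j^{\,\mathrm{ct}}(\phi\mathcal A)=\bar x_j^{\,\mathrm{tr}}(\mathcal A)$, whence $\bar x_j^{\,\mathrm{tr}}(\phi\mathcal A)-\bar x_j^{\,\mathrm{ct}}(\phi\mathcal A)=-\big(\bar x_j^{\,\mathrm{tr}}(\mathcal A)-\bar x_j^{\,\mathrm{ct}}(\mathcal A)\big)$. Summing over valid allocations and re-indexing by the bijection $\phi$ forces that sum, hence the expectation, to be zero. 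The argument is insensitive to $p$, so it covers the general case and is consistent with the explicit $p=2,3$ formulas.

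The main obstacle I anticipate is making the invariance step airtight, i.e., establishing that Oracle FLAME's run is a function of nothing but the per-stage completeness pattern --- never of unit counts or outcome magnitudes. This holds here precisely because Oracle FLAME is handed the correct importance ordering and therefore performs no data-driven covariate selection (no prediction-error or balancing-factor comparisons like the general algorithm), and because on valid allocations the only active stopping rule is ``no unmatched units remain,'' which is again completeness-determined. I would state these structural facts about the Oracle FLAME model up front so the relabeling symmetry is unambiguous; the remaining work --- the closed form in step one and the bookkeeping of $G,R,S$ under $\phi$ --- is routine.
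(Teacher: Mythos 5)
Your proof is correct and follows essentially the same route as the paper's: the key device in both is the treatment-relabeling involution $T\mapsto 1-T$ on valid allocations, under which the $\alpha$-contribution to each CATE estimate flips sign while validity and the algorithm's execution are preserved, so the $\alpha$ terms cancel in the average. Your write-up is more careful than the paper's (you derive the closed form showing the $\alpha$-coefficient is exactly $\bar x_j^{\,\mathrm{tr}}-\bar x_j^{\,\mathrm{ct}}$ and you explicitly verify that the involution fixes the groups $G$ and sets $R,S$, which the paper only asserts), but the underlying argument is the same.
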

A proof of this theorem is available in the appendix. 

Theorems \ref{thm:twocov}, \ref{thm:threecov}, and \ref{thm:noalpha} further mean that in cases where the causal effect is homogeneous (that is, $\beta_j=0$ for $j>0$), this effect can be estimated without any bias. 
In that sense, FLAME provides an advantage over matching methods that target average treatment effect (such as propensity score matching) since FLAME handles potential heterogeneity at no additional cost. 


\subsection{Empirical Bound for FLAME Bias}\label{sec:empirical_bound} 

In this section, we discuss an empirical counterpart to Proposition~\ref{thm:bound} that provides a bias bound in terms of the \texttt{PE} (and \texttt{BF}) values during the FLAME procedure (Algorithm \ref{alg:FLAME}). 

\begin{proposition} 
\label{prop:empirical-bias}
Let $p_0 (X)$ and $p_1 (X)$ be the covariate density functions for the control group and the treated group. Assume\footnote{Note that this is equivalent to the standard assumption that the density of the covariates is bounded away from 0 \citep[e.g.,][]{imbens2015}.} that for any $ k \in \{1,2, \cdots,d\} $, there exists a constant $\lambda_{k } > 0$ such that the marginals $ p_0 \left( X \circ \btheta = \mathbf{x} \circ \btheta \right) \ge \lambda_k $ and  $ p_1 \left( X \circ \btheta = \mathbf{x} \circ \btheta \right) \ge \lambda_{ k } $ for any $\mathbf{x}$ and $\btheta$ with $ \left\| \btheta \right\|_0 = k$.
For a dataset $\mathcal{S}^{ma} = (X,Y,T)$, let $\mathcal{S}_0^{ma}$ be the subset of control units: $(X,Y)$ with $T = 0$, and let $\mathcal{S}_1^{ma}$ be the subset of treated units $(X,Y)$ with $T = 1$. For a dataset $\mathcal{S}^{ma}$, we overload notation by using $\x\in\mathcal{S}^{ma}$ and $(\x,y)\in\mathcal{S}^{ma}$ to refer to a covariate record in $\mathcal{S}^{ma} $ and a (covariate,outcome) pair record in $\mathcal{S}^{ma}$. 
Also, for a dataset $\mathcal{S}^{ma}$ (or $\mathcal{S}_0^{ma}$, $\mathcal{S}_1^{ma}$), let $$
n (\mathbf{x}, \btheta, \mathcal{S}^{ma} ) := \sum_{ \mathbf{x}_i \in \mathcal{S}^{ma} } \mathbbm{1} [ \mathbf{x}_i \circ \btheta = \mathbf{x} \circ \btheta ] 
$$
be the number of points that agree with $\mathbf{x}$ on the covariates defined by $\btheta$.
For any $f \in \mathcal{F}_{\left\| \btheta \right\|_0}$, $t \in \{0,1\}$, and datasets $\mathcal{S}_0^{ma}$, $\mathcal{S}_1^{ma}$, write 
$$
\hat{\epsilon}^{\max} ( \mathcal{S}_t^{ma}, f, \btheta ) := \max_{(\mathbf{x},y) \in \mathcal{S}_t^{ma} } \left| f ( \mathbf{x} \circ \btheta) - y \right|.
$$
Let $\hat{g} (\mathbf{x}, \btheta, \mathcal{S}_1^{ma})$ 
(resp. $\hat{g} (\mathbf{x}, \btheta, \mathcal{S}_0^{ma})$) be the plain estimator of the treated (resp. control) outcome for covariate values $\mathbf{x} \circ \btheta$: 
\begin{align*}
    \hat{g} (\mathbf{x}, \btheta, \mathcal{S}_1^{ma}) &:= \frac{1}{ | n (x, \btheta, \mathcal{S}_1^{ma} ) | } \sum_{ (\mathbf{x}_i, y_i) \in \mathcal{S}_1^{ma} } y_i \mathbbm{1} [ \mathbf{x}_i \circ \btheta = \mathbf{x} \circ \btheta ],  \text{and} \\
    \hat{g} (\mathbf{x}, \btheta, \mathcal{S}_0^{ma}) &:= \frac{1}{ | n (x, \btheta, \mathcal{S}_0^{ma} ) | } \sum_{ (\mathbf{x}_i, y_i) \in \mathcal{S}_0^{ma} } y_i \mathbbm{1} [ \mathbf{x}_i \circ \btheta = \mathbf{x} \circ \btheta ].
\end{align*}
Assume that the outcomes are noiseless and there are $r$ relevant covariates. Then for any fixed $\btheta$, $f^{(0)}, f^{(1)} \in \mathcal{F}_{\left\| \btheta \right\|_0}$, with probability at least 
\begin{align}
    1 -  \left( 1 - \lambda_{\left\| \btheta \right\|_0 + r} \right)^{  |  \mathcal{S}_0^{tr} | } - \left( 1 - \lambda_{\left\| \btheta \right\|_0 + r} \right)^{ |  \mathcal{S}_1^{tr}  | }  - \left( 1 - \lambda_{\left\| \btheta \right\|_0 + r} \right)^{ | \mathcal{MG}  ( \btheta, \mathcal{S}_0^{ma}) | } - \left( 1 - \lambda_{\left\| \btheta \right\|_0 + r} \right)^{  | \mathcal{MG} ( \btheta, \mathcal{S}_1^{ma}) | }, \label{eq:prob} 
\end{align} 
we have 
\begin{align} 
    \left| \hat{g} (\mathbf{x}, \btheta, \mathcal{S}_1^{ma}) - g^{(1)} (\mathbf{x}) \right| &\le 2 \hat{\epsilon}^{\max} ( \mathcal{S}_1^{tr}, f^{(1)}, \btheta ) \quad \text{and}\nonumber\\
    \left| \hat{g} (\mathbf{x}, \btheta, \mathcal{S}_0^{ma}) - g^{(0)} (\mathbf{x}) \right| &\le 2 \hat{\epsilon}^{\max} ( \mathcal{S}_0^{tr}, f^{(0)}, \btheta ),  \label{eq:max-error-bound}
\end{align} 
where $\mathcal{S}_0^{tr}$ (resp. $\mathcal{S}_1^{tr}$) is the collection of control (resp. treated) units in the holdout set, and $g$ is the true treatment outcome defined as in {Proposition \ref{thm:bound}}. 

\end{proposition}
Proof of this proposition is available in the appendix. 
During a FLAME run, part of FLAME's criterion is to minimize \texttt{PE} and therefore the corresponding $\hat{ \epsilon }^{\max}$ values are controlled. This is because $\hat{\epsilon}^{\max} ( \mathcal{S}^{ma}, f, \btheta ) := \left\| \hat{\mathbf{y}} - \mathbf{y} \right\|_\infty $, where $\mathbf{y}$ is the vector formed by labels from $\mathcal{S}^{ma}$ and $\hat{\mathbf{y}}$ is the vector formed by predictions of instances in $ \mathcal{S}^{ma} $ using $f$. Since $\hat{\PE}$ is proportional to the square of the $L_2$ norm of the same vector on the training set, minimizing $\hat{\PE}$ on the training set tends to control $\hat{\epsilon}^{\max}$ on the matching set. 
Also, part of FLAME's criterion maximizes the \texttt{BF} values, which implicitly maximizes $n (\mathbf{x}, \btheta, \mathcal{S}_0^{tr}),   n (\mathbf{x}, \btheta, \mathcal{S}_1^{tr} ),  n (\mathbf{x}, \btheta, \mathcal{S}_0^{ma})$, and $n (\mathbf{x}, \btheta, \mathcal{S}_1^{ma}) $. This in turn suggests the probability in (\ref{eq:prob}) is large. 

Proposition \ref{prop:empirical-bias} can be linked to the almost-exact-matching problem in the following way: when $r \ll p$, and 
$ \lambda_k = \Theta (\frac{1}{2^k}) $, we have $(1 - \lambda_{\left\| \btheta \right\|_0 + r} )^n$ being small for small $ \left\| \btheta \right\|_0 $ and large $n$. The implication of this is that, 
in a FLAME run with $r \ll p$, when many irrelevant covariates are eliminated, $\left\| \btheta \right\|_0 $ becomes small and $| \mathcal{S}_0^{tr} |, | \mathcal{S}_1^{tr} |, | \mathcal{MG} ( \btheta, \mathcal{S}_0^{tr}) |, | \mathcal{MG} ( \btheta, \mathcal{S}_1^{tr}) | $ become large. 
In this case the probability in (\ref{eq:prob}) is large. 
This serves as a type of justification for the FLAME procedure, since FLAME aims to be able to handle problems where $ r \ll p $. As a result of $ r \ll p $, we need only  concentrate on  matches (on $\btheta$) with $\left\| \btheta \right\|_0 \ll p$.

\section{Experiments}\label{sec:experiments}%
 In this section, we study the quality and scalability of FLAME on synthetic and real data. The real datasets we use are the US Census 1990 dataset 
 from the UCI Machine Learning Repository \citep{Lichman:2013} and the US 2010 Natality data \citep{natality2010}. The bit-vector and SQL implementations are referred to as {\bf FLAME-bit} and {\bf FLAME-db} respectively. FLAME-bit was implemented using Python 2.7.13 and FLAME-db using Python, SQL, and Microsoft SQL Server 2016. 
We compared FLAME with several other (matching and non-matching) methods including: (1) \textbf{one-to-one Propensity Score Nearest Neighbor Matching (1-PSNNM)} \citep{ross2015propensity}, (2) 1-PSNNM with oracle variable selection, (3) \textbf{Genetic Matching} (GenMatch) \citep{genmatch}, (4) \textbf{Causal Forest} \citep{wager2018estimation}, (5) \textbf{Mahalanobis Matching}, (6) \textbf{double linear regression}, (7) \textbf{BART} \citep{chipman2010bart} and (8) \textbf{CTMLE} \citep{van2006targeted}. 
 For 1-PSNNM with oracle variable selection, we reveal to the propensity score matcher the \textit{true} important covariates for outcomes and/or the important covariates for propensity score, which covers the ideal case for propensity scare matching with variable reweighting \citep{schneeweiss2009high}.
 In double linear regression and BART, we fit two regressions, one for the treatment group and one for the control group; the estimate of treatment effect is given by the difference of the predictions of the two regressors. 
(Unlike the matching methods, the double regression method assumes a model for the outcome making it sensitive to misspecification. 
Here we correctly specify the linear terms of the generative model in the synthetic data experiments.) We also ran tests with \textbf{Coarsened Exact Matching} (CEM) \citep{iacus2011causal} and \textbf{Cardinality Match} \citep{zubizarreta2012using, zubizarreta2014matching}. 
CEM is not well suited for categorical data 
and the R package does not automatically handle more than 15 covariates. 
Cardinality match does not scale to the setting of our problem, since it tries to solve a computationally demanding mixed integer programming problem. In all experiments with synthetic data, the training set is generated with exactly the same setting as the dataset used for matching, unless otherwise specified. The computation time results are in Table \ref{tab:compare-timing-census}. 
The experiments were conducted on a Windows 10 machine with  
Intel(R) Core(TM) i7-6700 CPU processor (4 cores, 3.40GHz, 8M) and 32GB RAM.

\subsection{Experiments with Synthetic Data}
Since the true treatment effect cannot be obtained from real data, we created simulated data with specific characteristics and known treatment effects.
Note that both {FLAME-db} and {FLAME-bit} always return the same treatment effects. Four of the simulated experiments use data generated from special cases of the following (treatment $T\in\{0,1\}$):
\begin{align}
y = \sum_{i=1}^{10} \alpha_i x_i + T \sum_{i=1}^{10} \beta_i x_i + T\cdot U \sum_{1\leq i<\gamma\leq 5} x_i x_{\gamma} + \epsilon, \label{syntheticdata}
\end{align} 
Here, $\alpha_i \sim N(10s, 1) $  with $s \sim \text{Uniform}\{-1, 1 \}$, $\beta_i \sim N(1.5, 0.15)$, $U$ is a constant and $\epsilon \sim \mathcal{N} ( 0 , 0.1 )$. This contains linear baseline effects and treatment effects, and a quadratic treatment effect term.

Below we present the main comparison of our approach to state-of-the-art causal methods. Section \ref{subs:irrelevant} provides evidence of significant improvements in terms of estimation of causal effects while Section \ref{subsec:exp-scalability} demonstrates the scalability of the method. Additional results on the effects of model misspecification on regression methods, scalability comparison of FLAME-bit and FLAME-db, the decay of performance for FLAME as less important variables are eliminated, and the effect of the tuning parameter $C$ are available in Appendix \ref{app:add-exp}.



\vspace{-0.2cm}
\subsubsection{Most matching methods cannot handle irrelevant variables}\label{subs:irrelevant}
\vspace{-0.2cm}
Most matching methods do not make a distinction between important and unimportant variables, and may try to match on all the variables, including irrelevant ones. Therefore, many matching methods may perform poorly in the presence of many irrelevant variables.


We consider 20,000 units (10,000 control and 10,000 treated) generated with (\ref{syntheticdata}) where $U=1$. We also introduce 20 irrelevant covariates for which $\alpha_i=\beta_i=0$, but the covariates are in the database. We generate $x_i \sim \text{Bernoulli}(0.5)$ for $1\le i \le 10$. For $10 < i \le 30$, $x_i \sim \text{Bernoulli}(0.1)$ in the control group and $x_i \sim \text{Bernoulli}(0.9)$ in the treatment group.


During the execution of FLAME, irrelevant variables are successively dropped before the important variables. We should stop FLAME before eliminating important variables when the $\PE$ drops to an unacceptable value. In this experiment, however, we also allowed FLAME to continue to drop variables until 28 variables were dropped and there were no remaining possible matches. Figure \ref{fig:irrel} compares estimated and true treatment effects for FLAME (Early Stopping), FLAME (Run Until No More Matches) and other methods. In this experiment, Figure \ref{subfig:irre-FLAME-early} is generated by stopping when the \PE\ drops (starting from values within [-2, -1]) to below -20, resulting in more than 15,000 matches out of 20,000 units. FLAME achieves substantially better performance than other methods, shown in Figure \ref{fig:irrel}.


 \begin{figure*}[h!]
\captionsetup[subfigure]{justification=centering}
\centering
     \subfloat[FLAME-early \label{subfig:irre-FLAME-early}]{%
       \includegraphics[width = 0.25\textwidth]{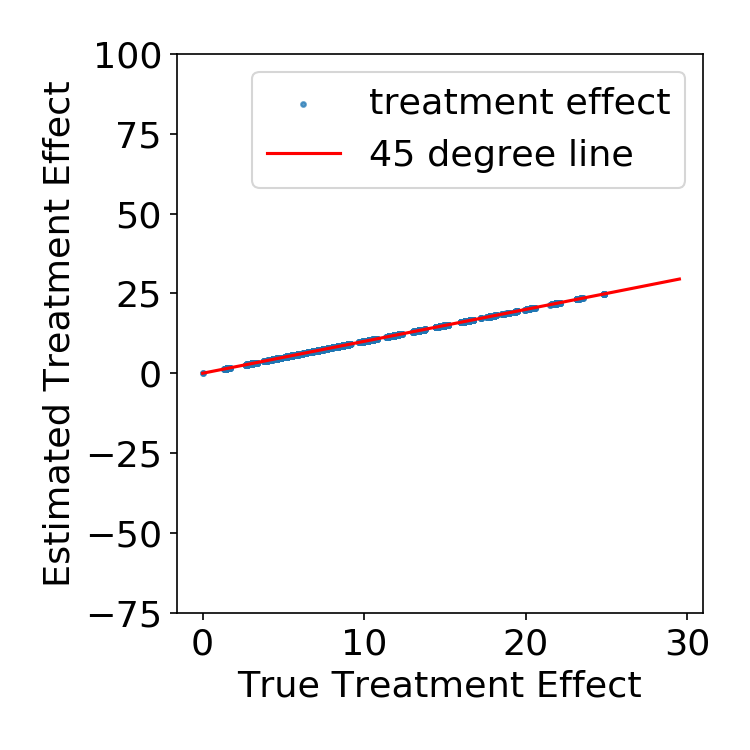}%
     }%
     \subfloat[ FLAME-NoMore\newline (Not recommended) \label{subfig:irre-FLAME-full}]{%
       \includegraphics[width = 0.25\textwidth]{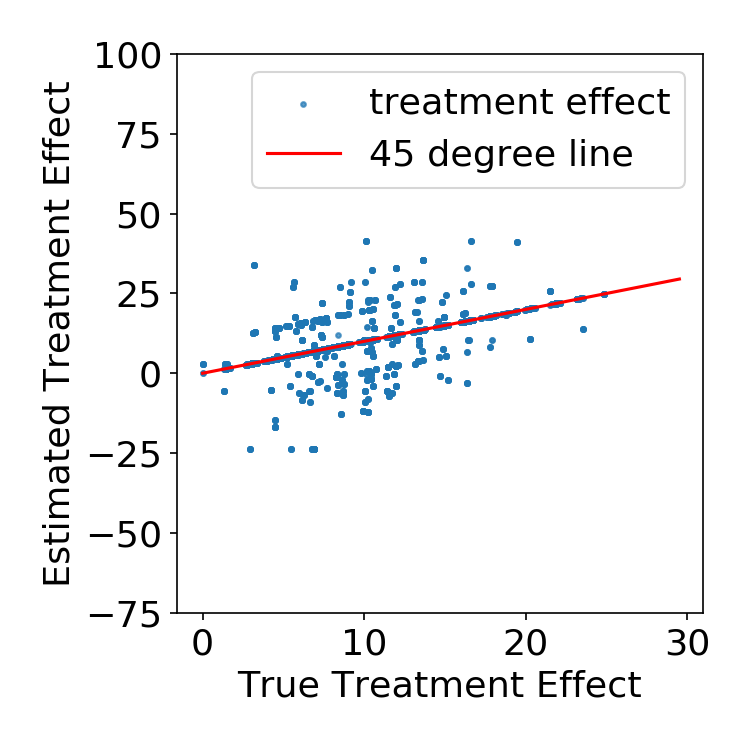}%
     }%
     \subfloat[GenMatch \label{subfig:irre-genmatch}]{%
       \includegraphics[width = 0.25\textwidth]{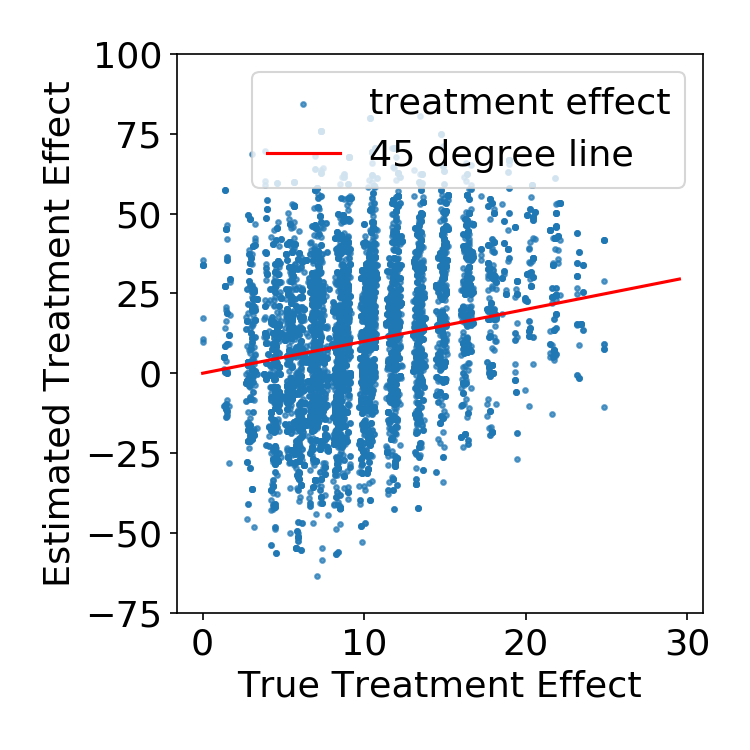} 
     } \\
     \subfloat[1-PSNNM \label{subfig:irre-propen}]{%
       \includegraphics[width = 0.25\textwidth]{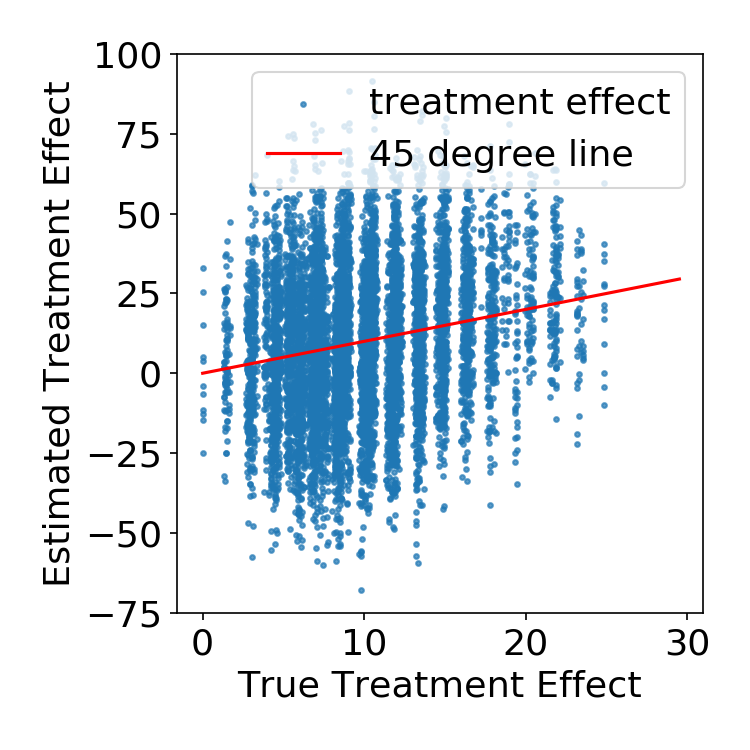}
     } 
     \subfloat[Oracle 1-PSNNM${}_a$ \label{subfig:Oracle-PSNNM-1}]{%
       \includegraphics[width = 0.25\textwidth]{./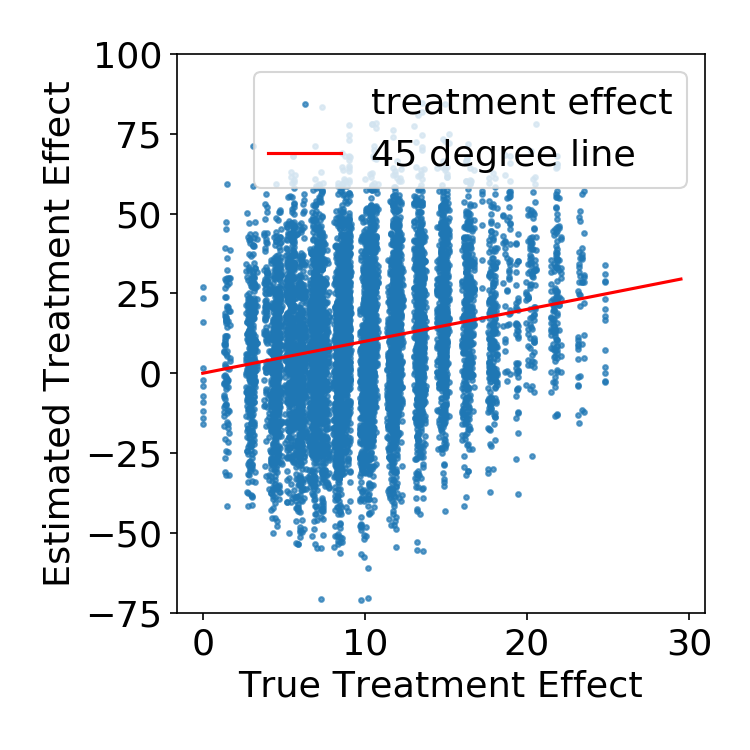}
     } 
     \subfloat[Oracle 1-PSNNM${}_b$ \label{subfig:Oracle-PSNNM-2}]{%
       \includegraphics[width = 0.25\textwidth]{./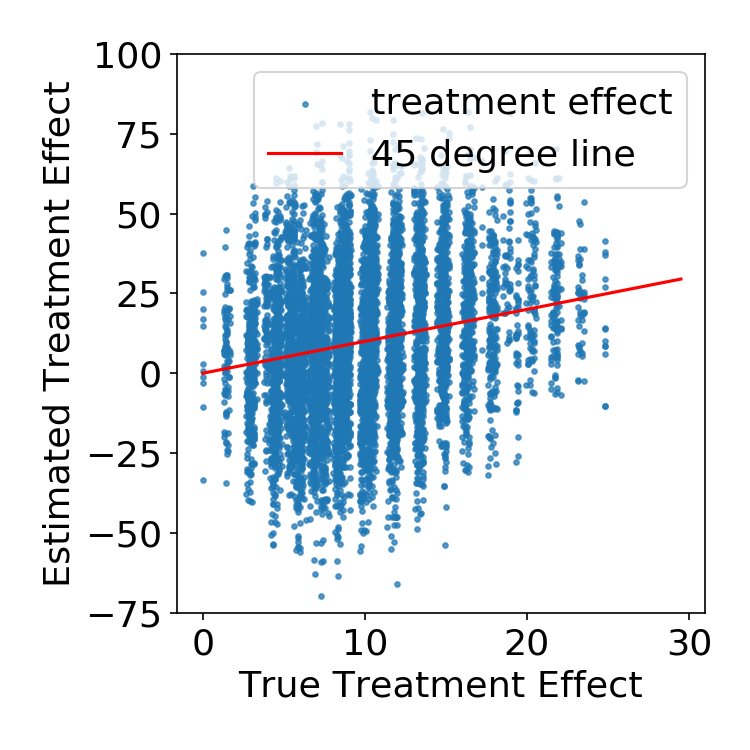} 
     } \\
     \subfloat[Mahalanobis \label{subfig:irre-nn}]{%
       \includegraphics[width = 0.23\textwidth]{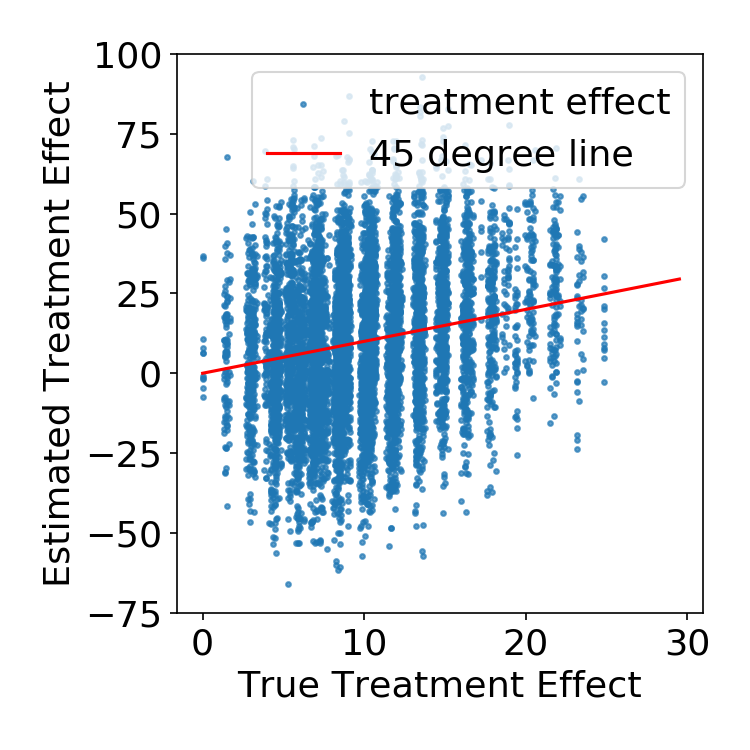}
     } 
     \subfloat[Causal Forest \label{subfig:irre-causalforest}\newline]{%
       \includegraphics[width = 0.23\textwidth]{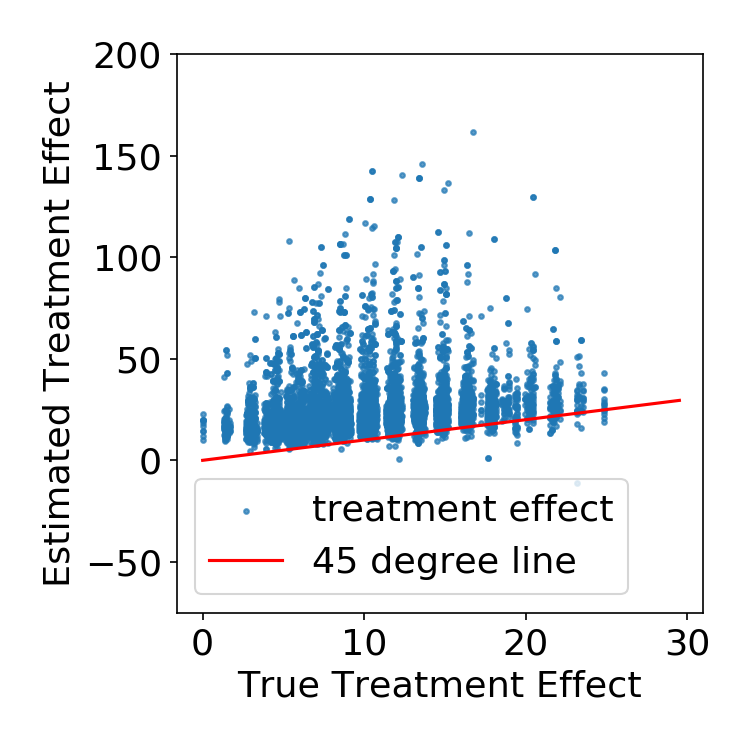}
     } 
    \subfloat[BART \label{subfig:barts}]{%
       \includegraphics[width = 0.23\textwidth]{./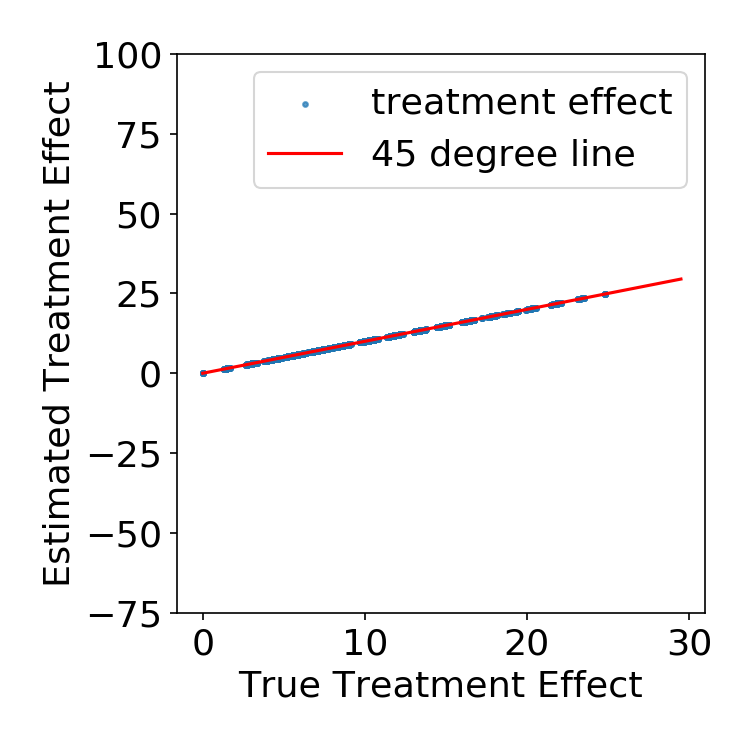}
     } 
     \subfloat[CTMLE \label{subfig:ctmle}]{%
       \includegraphics[width = 0.23\textwidth]{./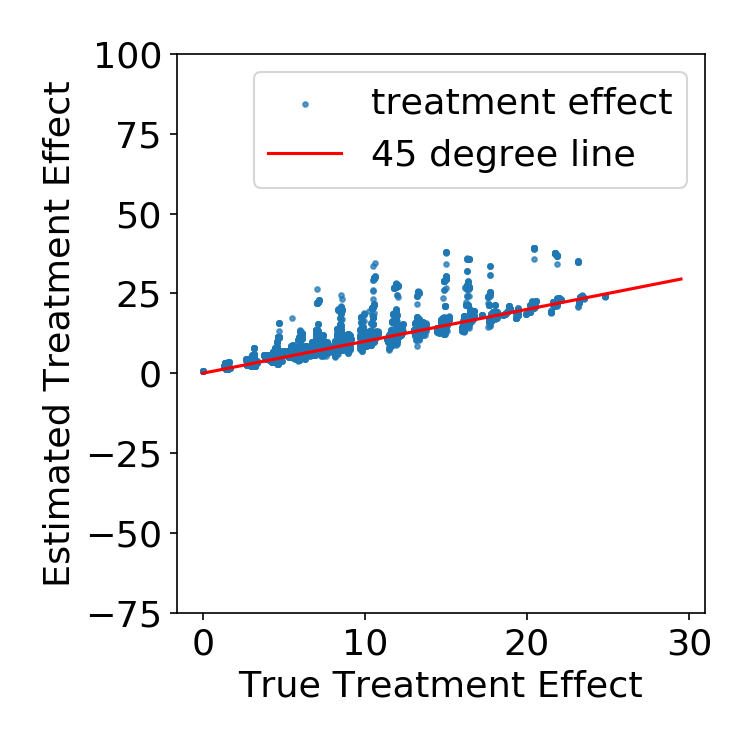}
     } 

     \caption{Scatter plots of true treatment effect and estimated treatment effect on matched units. For FLAME-early (\ref{subfig:irre-FLAME-early}), we extract the results before the \PE\ drops from [-2, -1] to below -20, and we call this early stopping. Before early stopping, we lose less than 2\% in \PE\ by removing covariates. In contrast, FLAME-NoMore (\ref{subfig:irre-FLAME-full}) is the result for running FLAME until there are no more matches. We do not recommend running FLAME past its stopping criteria but we did this for demonstration purposes. 
     CTMLE uses BART as its base predictor. Oracle 1-PSNNM${}_a$ runs 1-PSNNM with important covariates (the 10 covariates in Eq.~\ref{syntheticdata}), and Oracle 1-PSNNM${}_b$ runs 1-PSNNM with unimportant covariates (all of the 20 covariates that do not contribute to outcomes but influence the propensity score). The latter setting covers the ideal scenario for the propensity score matching with variable reweighting/selection based on propensity scores \citep[e.g.,][]{schneeweiss2009high}, since all variables that affect the propensity score are included in the model.
     }
     \label{fig:irrel} 
 \end{figure*}

This experiment illustrates the main issues with classes of methods for causal inference: propensity score matching (of any kind) projects the data to one dimension, and thus cannot be used for CATE estimation. GenMatch has similar problems, and cannot be used for reliable CATE estimation. Regression and other modeling methods are subject to misspecification. Most methods do not produce interpretable matches, and those that do cannot scale to datasets of the sizes we consider, as we will soon see. 



\subsubsection{Scalability evaluation}
\label{subsec:exp-scalability}

We compare the execution time of FLAME-bit and FLAME-db with the other methods. FLAME-bit is faster than most of the other approaches (including FLAME-db) for the synthetic data considered in this experiment. However, for larger dataset, FLAME-db is much faster than FLAME-bit and all other methods. Table~\ref{tab:compare-timing-census} compares the runtime when using the the US Census 1990 dataset \citep{Lichman:2013} with more than 1.2 million units and 59 covariates. Table \ref{tab:compare-timing-synthetic} summarizes the runtime of all methods when using synthetic data.


\begin{table}[ht]
\centering
\subfloat[Timing results of 
different methods on the US Census 1990 dataset \citep{Lichman:2013}.\label{tab:compare-timing-census}]{\begin{tabular}{ | c | c | }
  \hline
   Method & Time (hours) \\ \hline \hline
  FLAME-bit & Crashed \\ \hline 
  FLAME-db & 1.33 \\ \hline 
  Causal Forest & Crashed \\ \hline 
  1-PSNNM  & $>10$ \\ \hline 
  Mahalanobis & $>10$  \\ \hline 
  GenMatch& $>10$ \\ \hline
  Cardinality Match & $>10$  \\ \hline 
\end{tabular}}
\hspace{10pt}
\subfloat[Timing results of FLAME compared with other methods on the synthetic data generated by the same procedure as in Figure \ref{fig:irrel}, in the format of ``average time'' $\pm$ ``standard deviation.'' It summarizes 3 runs.  \label{tab:compare-timing-synthetic}]{
\begin{tabular}{ | c | c | }
  \hline
   Method & Time (seconds) \\ \hline \hline
  FLAME-bit & $22.30\pm0.37$ \\ \hline 
  FLAME-db & $59.68\pm0.24$ \\ \hline 
  Causal Forest & $52.65 \pm 0.41$ \\ \hline 
  1-PSNNM & $13.88 \pm 0.14$  \\ \hline 
  Mahalanobis & $55.78 \pm0.14$ \\ \hline 
  GenMatch & $>150 $  \\ \hline 
   Cardinality Match & $>150$  \\ \hline 
\end{tabular}}
\caption{Timing results for FLAME.}
\end{table}

\subsubsection{Upper bound on CATE sample variance}
While the main thrust of the methodology is in reducing the bias of estimating a CATE, the methodology naturally lends itself for estimating an upper bound on the variance of the CATE.
Here we briefly discuss the conditional variance of the estimation in the setting of Figure \ref{fig:irrel}. We note that asymptotic variance estimates for ATEs may also be obtained with the methods outlined in \cite{abadie2012martingale}.

While the covariance between the potential  outcomes is not able to be calculated directly, we can upper bound it using the Cauchy-Schwarz inequality to give us: 
\begin{align}
\text{Var} ( y_t - y_c | \mathbf{x}) \le \left( \text{Std} ( y_t  | \mathbf{x}) + \text{Std} ( y_c | \mathbf{x}) \right)^2. \label{eq:var-bound}
\end{align}
We note that if it is reasonable to assume that the potential outcomes are not negatively correlated, a tighter upper bound is possible as the cross-term can be ignored. 
We then use the sample variance to estimate $\text{Std} ( y_t | \mathbf{x})$ and $\text{Std} ( y_c | \mathbf{x})$ for each group. 
A simulated study of variance upper bound is shown in Figure \ref{fig:std}, where the simulation data is generated using Eq. \ref{syntheticdata} (with 25,000 treated units, 25,000 control units, 20 irrelevant covariates, variance of Gaussian noise being 1, second order coefficient $U=1$, and all other parameters as default).
Each box plot 
 in the figure summarizes the standard deviations of groups in a single iteration of the \textit{while} loop. They are arranged according to the order in which the variables are dropped. 
This method estimates a small standard deviation for the first few levels as no relevant covariates have been eliminated. The standard deviation increases abruptly once a relevant covariate is dropped since this introduces variability in the treatment effects observed within each group. A sharp increase in the standard deviations thus likely corresponds to the dropping of important covariates and a decay in the match quality. This suggests another heuristic for the early stopping of FLAME, which is to stop when a sharp increase is observed.  
\begin{figure}[ht]
\centering
       \includegraphics[width = 0.4\textwidth]{./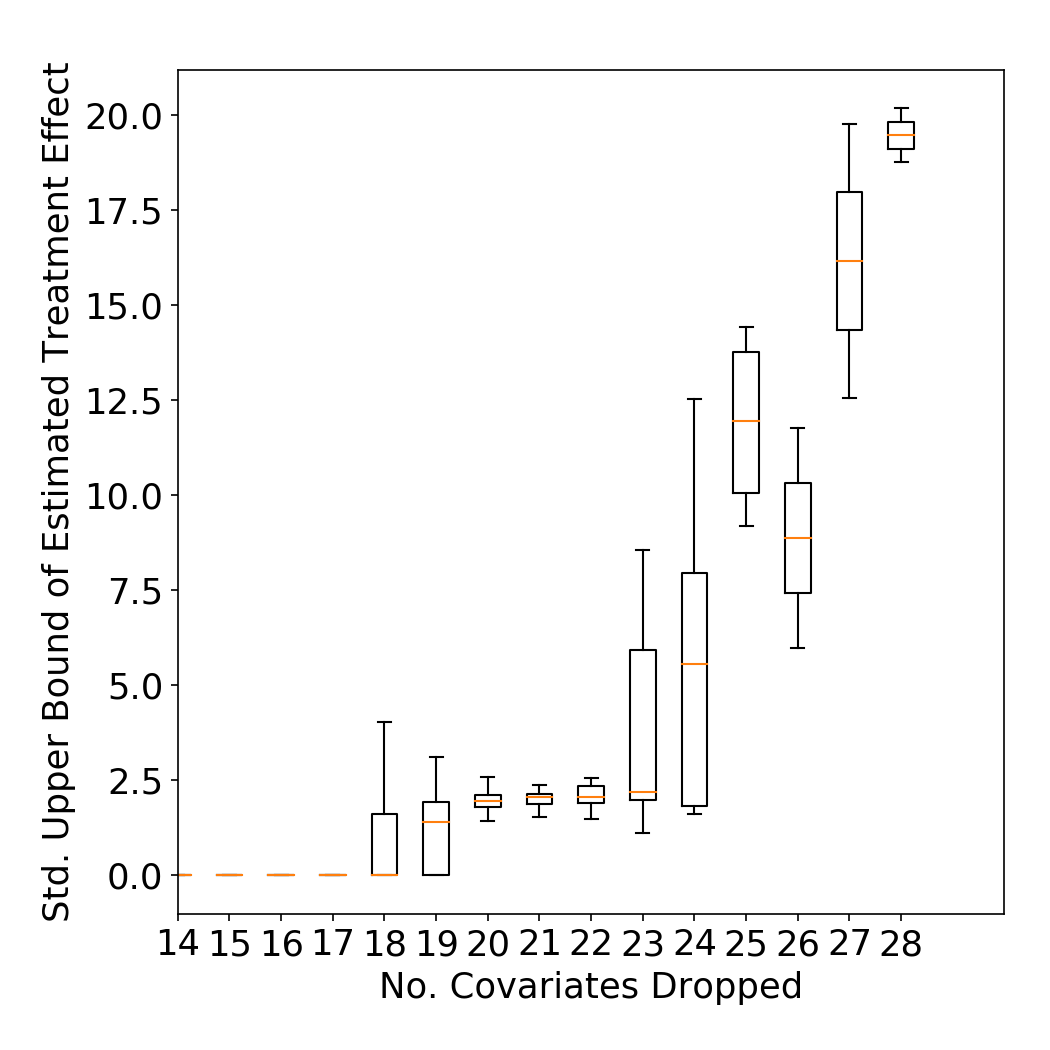}
     \caption{This figure plots the standard deviation upper bound estimates, i.e., empirical samples of square root of right-hand-side of Eq.~\ref{eq:var-bound}. The box plots summarize the estimated standard deviation upper bounds of the groups created as covariates are eliminated. The orange lines in the boxes are medians. The lower and upper boundaries of the boxes are the first quartile and the third quartile of the distribution, while the lower and upper black bar correspond to 5th and 95th percentile, respectively. 
          \label{fig:std} } 
\end{figure}

\subsection{Experiments with Real Data}
Here we use FLAME to obtain estimates of treatment effects for an important societal question: the effect of prenatal smoking on the health of the newborn child. 


\begin{figure}[ht]
    \centering
    \includegraphics[width = 0.5\textwidth]{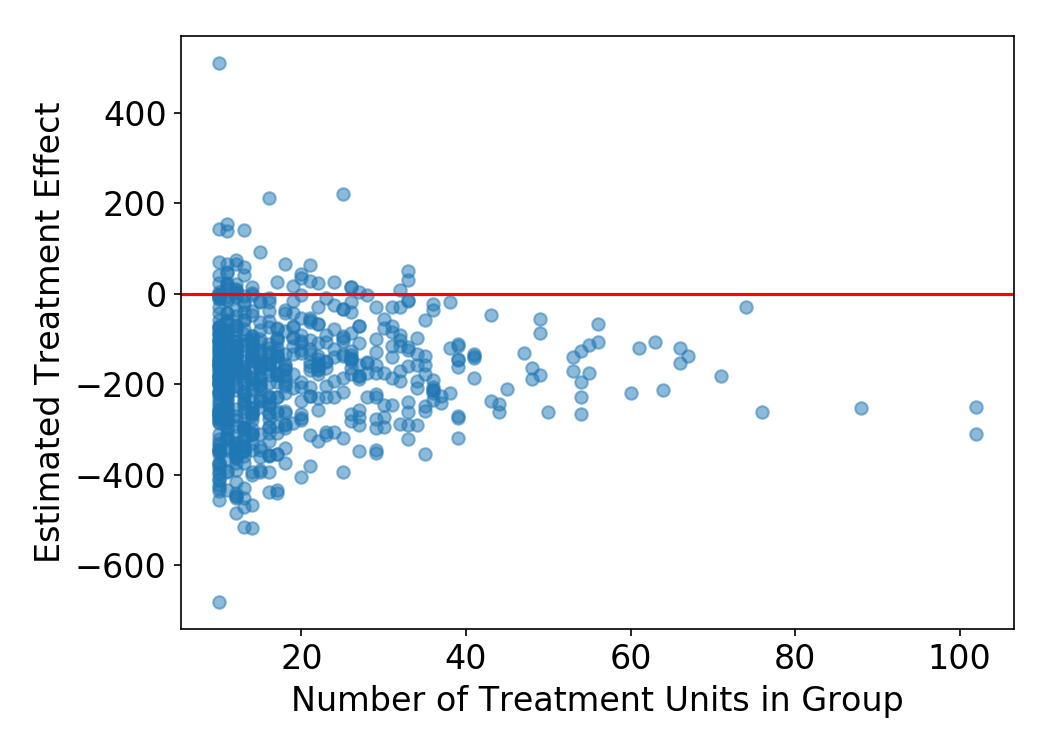}
    \caption{Scatter plots of estimated treatment effect (on newborn's weight in grams) against group size. Only matched groups with more than 30 control units and 10 treatment units are shown. The red line plots treatment effect being $ 0$.
    \label{fig:natality-weight}
    }
\end{figure}

There is a large literature studying the effects of maternal smoking during pregnancy on neonatal health outcomes. For example, recent work by \citet{kondracki2020low} shows a significant effect of ``extreme smoking'' (defined as smoking at least 10 cigarettes daily throughout the pregnancy) on the odds of an infant requiring a Neonatal Intensive Care Unit (NICU) stay. In this section we employ FLAME to yield a more granular understanding of such claims---endeavoring to understand whether heterogeneous effects might be present. To do this, we study the 2010 US Natality dataset from the National Vital Statistics System of the \citet{natality2010}. As done by \citet{kondracki2020low}, we restrict ourselves to standard term (37+ weeks) singular (i.e., not twins, triplets, etc.) births. Treatment is defined as smoking at least 10 cigarettes per day for the duration of the pregnancy and our comparison group consists of those women who did not smoke at all during pregnancy \citep{kondracki2019prevalence}. The demographic and medical variables we consider can be found in Table \ref{tab:drop-order} in the appendix. The data we were provided with includes discretized mother's and father's age variables, binned at five year increments, that we use for analysis. An alternative left for follow up work can leverage tools for continuous variables such as MALTS \citep{MaltsObsStudies} in conjunction with FLAME to analyze the raw data, including the raw age.

We study two topics: \textit{NICU admissions} and \textit{child's low birth weight}. Both demonstrate the strength of our approach in providing granular causal analyses, but in different ways. After eliminating units whose outcomes and/or treatment indicators are missing, there were $\sim$2.1M units, among which $\sim$75K units are treated units. We first estimate an overall odds ratio (by averaging odds ratio across matched groups) of NICU admission of approximately 2.6 which is in line with the literature \citep{kathleen2002neonatal}. The power of our methodology is the ability to perform a deeper dive into the data. We identify multiple large groups with odds ratios that are substantially below 1, shown in Figure \ref{fig:natality-preterm} in the appendix, which suggests that \textit{the available data are not sufficient for granular analysis, or any strong conclusion on NICU admissions}, since it is unlikely that there is a protective effect of smoking on NICU admissions \citep{abrevaya2006estimating, honein2001maternal}. These results can potentially be explained by the diverse set of reasons that individuals might be admitted into the NICU that are not accounted for in these data \citep{mahendra2020predicting}. Such issues with these data have not (as far as we know) previously been identified despite the fact that they impact the trustworthiness of results on this important topic \citep{kathleen2002neonatal}, where this same dataset is often used as the trusted source for studying this topic. 

On the other hand, our analysis of the effect of ``extreme smoking'' on low birth weight leads to substantially clearer conclusions: The estimated average treatment effect of ``extreme smoking'' on birth weight is $-248$ grams of infant's weight, and the CATEs are presented for the high quality matched groups in Figure~\ref{fig:natality-weight}. We note that the estimates for the largest matched groups are most reliable and are also consistent with the literature \citep{kataoka2018smoking,kondracki2020low}. Importantly, this plot suggests little to no heterogeneity in the conditional treatment effects of smoking on birth weight; most of the extreme values are observed among small matched groups for which estimation is most difficult.

In summary, FLAME's ability to perform granular analysis is a determiner of trust: in some cases (e.g., smoking causing NICU admissions) we might question the results from other types of methods, e.g., those from methods that output an overall ATE. In other cases, FLAME might solidify our trust in the results (e.g., smoking causing lower birth weight). FLAME's results on NICU admission calls for (1) further medical study on the causal effect of prenatal smoking on NICU admission with a finer segmentation of the population, and (2) consideration of socio-economical causes (e.g., NICU care may not be affordable/available in certain circumstances) in order to obtain a cleaner medical causal relation between smoking and NICU admissions. Without the granular analysis that FLAME provides, it is possible that, in this analysis, as well as in many other analyses in other fields, incorrect/untrustworthy conclusions may not be noticed or investigated.

\section{Conclusion}\label{sec:conclusions}
FLAME produces interpretable, high-quality matches. The defining aspects of FLAME are that it (i) \textit{learns} an \textit{interpretable} distance for matching by leveraging a training set, and (ii) uses an algorithm that eliminates successively less-relevant covariates, and permits SQL and bit-vector operations to scale to millions of observations -- these are datasets so large that they may not fit into main memory. We were able to theoretically evaluate the bias of FLAME under different distributions of the covariates. Through application to both simulated and real-world datasets, we were able to show that FLAME either ties with or outperforms existing matching methods in terms of estimation error, and provides a benefit in interpretability. 

There are several natural extensions to the FLAME procedure: First, the less greedy approach of the Dynamic Almost Matching Exactly (DAME) algorithm \citep{DiengEtAl2019} extends the work in this paper by providing a solution to the FULL-AME problem, without backwards selection. However, DAME comes with a substantial computational burden as compared to FLAME and so can only be realistically applied to moderately sized problems. \citet{DiengEtAl2019} demonstrate that in many practical settings FLAME and DAME achieve approximately the same accuracy for CATE estimation, and suggest that FLAME's backwards selection procedure be used until the number of covariates is small enough for a solution to DAME to be practical. 

An obvious shortcoming of the FLAME framework is that it operates on categorical covariates. If we are given substantive information on what changes to the covariates would not influence outcomes, one can first coarsen continuous covariates (as in CEM), but such information is not always available. 
One solution for mixed continuous/discrete data is to use interpretable ``stretch'' matrices as distance metrics, and match to nearest neighbors using this stretched distance metric. This idea is the foundation for the MALTS algorithm \citep{ParikhRuVo18,MaltsObsStudies}. MALTS builds on FLAME by using a  training set to learn the distance for matching. In doing this, \citet{ParikhRuVo18} suggest to combine FLAME's learned Hamming distance for discrete data with MALTS' interpretable stretch metric for real-valued data. A further extension of FLAME, called Adaptive Hyperboxes (AHB), learns an interpretable box individually (and nonparametrically) around each data point \citep{AME-AHB-2020}. Units within the box are matched. AHB relies on an underlying black box model to determine the distance metric, which is a key difference from FLAME and MALTS. 

FLAME has been extended to handle network interference \citep{AME-Networks-2020} and  instrumental variable analysis \citep{AME-IV-2019}.

The code for FLAME is available at \url{https://cran.r-project.org/web/packages/FLAME/index.html} (in R), and \url{https://www.github.com/almost-matching-exactly/} (in Python). An introduction to the project with links to the code is also found at
\url{https://almost-matching-exactly.github.io/}. \citet{gupta2021dameflame} provides a short overview of the FLAME-DAME software package.

 \subsection*{Acknowledgments}
We thank Jerry Chang for coding early versions of the FLAME ecosystem and to Neha Gupta and Vittorio Orlandi for the production versions of the Python and R packages. We would also like to thank the other members of the Almost Matching Exactly Lab at Duke for invaluable help: Harsh Parikh, Xian Sun, Thomas Howell, and Angikar Ghosal. We would like to thank Zijun Gao and Guillaume Basse for helpful discussions. 
This work was partially supported by NIH QuBBD award R01EB025021, NSF award IIS-1703431, and by the Duke Energy Initiative.

\bibliography{biblio} 
\clearpage

\part*{Appendix}
\renewcommand{\thesection}{\Alph{section}}
\setcounter{section}{0}

\section{Proof of Proposition \ref{thm:bound}} \label{app:proof-bias-bound}

For a set of $i = 1, \dots, n$ units, consider potential outcomes $g^{(t)}(\x_i)$, with covariates $\x \in \{0, 1\}^p$ and treatment indicator $t \in \{0, 1\}$. Observed treatments are denoted by the random variable $T_i$ for all units $i$, and observed outcomes are denoted by $Y_i = T_ig^{(1)}(\x_i) + (1-T_i)g^{(0)}(\x_i)$.  Let $\vneq{\x}{\x'}$ be a vector that is one at position $j$ if $x_j \neq x_j'$ and 0 everywhere else. Let $\bw^T\vneq{\x}{\x'}$ be the (positive) weighted Hamming distance between $\x$ and $\x'$ with a vector of weights of length $p$, denoted by $\bw$, where each entry of $\bw$ is a positive real number, and such that $0 <\|\bw\|_2 <\infty$. Define the matched group for a value $\x$ as $\MGS = \{ \x_i, \; i = 1,\dots, n\; s.t:\; \x_i \circ \theta = \x \circ \theta\}$, i.e., the set of units in the sample that have value $\x$ of the covariates selected by $\theta$. Define also $\nxt = \sum_{\x_i \in \MGS}T_i$ and $\nxc = \sum_{\x_i \in \MGS}(1-T_i)$. Let: $$M := \max_{\substack{ \x, \x' \in \{0, 1\}^p \\t \in \{0, 1\}}}\frac{|g^{(t)}(\x') - g^{(t)}(\x)|}{\bw^T\vneq{\x'}{\x}},$$
and assume that $M < \infty$. Then, for all $\x, \x'\in\{0, 1\}^p$ and $t \in \{0, 1\}$, by the definition of $M$ above, we have: $|g^{(t)}(\x) - g^{(t)}(\x')| \leq M\bw^T\vneq{\x'}{\x}$ and 
\begin{align}
    g^{(t)}(\x) - M \bw^T\vneq{\x}{\x'} \leq g^{(t)}(\x')\leq g^{(t)}(\x) + M\bw^T\vneq{\x}{\x'}. \label{eq:g-abs-bound}
\end{align}
Now we pick an arbitrary pair of $\x$ and $\btheta$ and consider the term $\bw^T\vneq{\x}{\x'}$ for any $\x' \in \MGS$:
\begin{align}
    \bw^T\vneq{\x}{\x'} &= \sum_{j=1}^pw_j\vneq{x_j}{x_j'} \com{$x_j$(resp. $x_j'$) is the $j$-th entry of $\x$ (resp. $\x'$)}\nonumber\\
    &= \sum_{j=1}^pw_j(1-\theta_j)\vneq{x_j}{x_j'}\com{Since $ \x' \in \MGS$}\nonumber\\
    &\leq \sum_{j=1}^pw_j(1-\theta_j) = \bw^T(\bone - \btheta), \label{Eq:bound-distance-sum}
\end{align}
where $\bone$ is a vector of length $p$ that is one in all entries. The second line follows because $\x' \in \MGS$ implies that $\x'$ must match exactly with $\x$ on the covariates selected by $\btheta$. 
We want to use the estimator $\hat{\tau}(\x, \btheta) = \frac{1}{\nxt}\sum_{\x_i \in \MGS}Y_iT_i - \frac{1}{\nxc}\sum_{\x_i \in \MGS}Y_i(1-T_i)$ to estimate $\tau(\x) = g^{(1)}(\x) - g^{(0)}(\x)$ for some fixed $\x$ and $\btheta$. 
We can construct an upper bound on the estimation error of this estimator as follows:
\begin{align}
    &\frac{1}{\nxt}\sum_{\x_i \in \MGS}Y_iT_i - \frac{1}{\nxc}\sum_{\x_i \in \MGS}Y_i(1-T_i) - \tau(\x) \nonumber \\
    =& \frac{1}{\nxt}\sum_{\x_i \in \MGS}g^{(1)}(\x_i)T_i \nonumber\\
    &- \frac{1}{\nxc}\sum_{\x_i \in  \MGS}g^{(0)}(\x_i)(1-T_i) - (g^{(1)}(\x) - g^{(0)}(\x)), \label{eq:bound-intermediate-1}
\end{align} 
where we use $Y_i = T_i g^{(1)} (\x_i ) +  (1 - T_i) g^{(0)} (\x_i )$, and $T_i^2 = T_i$ and $T_i (1 - T_i) = 0$ for any $i$. Then from (\ref{eq:g-abs-bound}) we get:
\begin{align}
    &\frac{1}{\nxt}\sum_{\x_i \in \MGS}g^{(1)}(\x_i)T_i\nonumber\\
    &-\frac{1}{\nxc}\sum_{\x_i \in \MGS}g^{(0)}(\x_i)(1-T_i) - (g^{(1)}(\x) - g^{(0)}(\x)) \nonumber \\
    \leq& \frac{1}{\nxt}\sum_{\x_i \in \MGS}[g^{(1)}(\x) + M\bw^T\vneq{\x}{\x_i}]T_i \nonumber\\
    &- \frac{1}{\nxc}\sum_{\x_i \in \MGS}[g^{(0)}(\x) - M\bw^T\vneq{\x}{\x'}](1-T_i) \nonumber \\
    & - g^{(1)}(\x) + g^{(0)}(\x) .
    \label{eq:bound-intermediate-2}
\end{align}
Next, we combine (\ref{eq:bound-intermediate-1}) and (\ref{eq:bound-intermediate-2}) to get:
\begin{align*}
    & \frac{1}{\nxt}\sum_{\x_i \in \MGS}Y_iT_i - \frac{1}{\nxc}\sum_{\x_i \in \MGS}Y_i(1-T_i) - \tau(\x) \\
    \leq& \frac{1}{\nxt}\sum_{\x_i \in \MGS}[g^{(1)}(\x) + M\bw^T\vneq{\x}{\x_i}]T_i \nonumber \\
    &- \frac{1}{\nxc}\sum_{\x_i \in \MGS}[g^{(0)}(\x) - M\bw^T\vneq{\x}{\x'}](1-T_i) \nonumber \\
    & - g^{(1)}(\x) + g^{(0)}(\x) \\
    =& g^{(1)}(\x)\frac{1}{\nxt}\sum_{\x_i \in \MGS}T_i \\
    &- g^{(0)}(\x)\frac{1}{\nxc}\sum_{\x_i \in \MGS}(1-T_i) - g^{(1)}(\x) + g^{(0)}(\x)\\
    &+ \sum_{\x_i \in \MGS}M\bw^T\vneq{\x}{\x_i}\left(\frac{T_i}{\nxt} + \frac{1-T_i}{\nxc}\right)\\
    =& \sum_{\x_i \in \MGS}M\bw^T\vneq{\x}{\x_i}\left(\frac{T_i}{\nxt} + \frac{1-T_i}{\nxc}\right)\\
    \leq& \sum_{\x_i \in \MGS }M\bw^T(\bone - \theta)\left(\frac{T_i}{\nxt} + \frac{1-T_i}{\nxc}\right)\\
    =& 2M\bw^T(\bone - \btheta),
\end{align*}
where the inequality in the second to last line follows from Equation \eqref{Eq:bound-distance-sum}. {Using the same set of steps we can construct a lower bound on the estimation error:}
\begin{align*}
    &\frac{1}{\nxt}\sum_{\x_i \in \MGS}Y_iT_i - \frac{1}{\nxc}\sum_{\x_i \in \MGS}Y_i(1-T_i) - \tau(\x)\\
    \geq& \frac{1}{\nxt}\sum_{\x_i \in \MGS}[g^{(1)}(\x) - M\bw^T\vneq{\x}{\x_i}]T_i \\
    &- \frac{1}{\nxc}\sum_{\x_i \in \MGS}[g^{(0)}(\x) + M\bw^T\vneq{\x}{\x'}](1-T_i) \\
    & - g^{(1)}(\x) + g^{(0)}(\x)\\
    \geq& - \sum_{\x_i \in \MGS }M\bw^T(\bone - \theta)\left(\frac{T_i}{\nxt} + \frac{1-T_i}{\nxc}\right)\\
    =& - 2M\bw^T(\bone - \btheta).
\end{align*}
Putting together these two bounds we obtain the statement in the proposition. 


\section{Proof of Theorems in Section \ref{sec:theory}}
\begin{proof}\textit{\bf of Theorems \ref{thm:twocov} and \ref{thm:threecov}:} 
The proofs of these theorems are computer programs. They calculate the bias analytically (rather than numerically). They enumerate all possible covariate allocations. For each allocation, the program checks if the allocation is valid and then runs Oracle FLAME to compute the conditional average treatment effect in each bin. Subtracting these estimate from the true CATE values and averaging (uniformly) over bins yields the bias for that allocation. The theorems report the average over valid allocations. 
\end{proof}

\begin{proof}\textit{\bf of Theorem \ref{thm:noalpha}:}
The results follows from the symmetry in the assignment of treatments and controls to valid allocations. Note that if an allocation to $n$ units $T_n =(t_1,\dots,t_n)$ is a valid allocation then so is $1-T_n$. 
If the estimate of a CATE under the assignment $T_n$ is $\sum \gamma_j\alpha_j + \sum \xi_j\beta_j$ then necessarily the estimate of the same CATE under $1-T_n$ is $- \sum \gamma_j\alpha_j + \sum \eta_j\beta_j$ for some $\eta_j$. Thus the contribution of baseline covariate $j$ to the estimation of the CATE is necessarily identical under $T_n$ and $1-T_n$ since it does not depend on the treatment value of any unit. Let us define $\widehat{CATE}_X(T_n)$ as the conditional average treatment effect for covariate combination $X$ under assignment $T_n$. Note that 
the bias of estimating any CATE can be written as 
\begin{equation*}
    \frac{1}{ 2^{n} } \left( \sum_{T_n :\text{unit }1 \text{ treated}} \widehat{CATE}_X(T_n)+\sum_{T_n:\text{unit }1\text{ control}} \widehat{CATE}_X(T_n) \right) - CATE_X 
\end{equation*}
where the summations are over all possible assignments. For every assignment $T_n$ in the first summand, $1-T_n$ must appear in the second summand, thus canceling the contribution of all $\alpha_j$ to the bias.
\end{proof}

\begin{proof}\textbf{of Proposition \ref{prop:empirical-bias}:}
We focus on the treated case, and the control counterpart follows naturally using the same argument. Since $ p_1 (X \circ \btheta) \ge \lambda_{\left\| \btheta \right\|_0} $, the probability of the error maximizing point of a fixed $f^{(1)} \in \mathcal{F}_{\left\| \btheta \right\|_0} $ not in $\mathcal{S}_1^{ma}$ is at most $ (1 - \lambda_{\left\| \btheta \right\|_0 + r })^{ | \mathcal{MG} (\btheta, \mathcal{S}_1^{ma}) | }  $. This is true because the max error for a given $f^{(1)}$ only depends on the covariates selected by $\btheta$ and the $r$ relevant covariates that contribute to the outcome (in the noiseless setting). Similarly, the probability that the error maximizing point not in $\mathcal{S}_1^{tr}$ is at most $ (1 - \lambda_{\left\| \btheta \right\|_0 + r })^{ | \mathcal{MG} (\btheta, \mathcal{S}_1^{tr}) | }  $. Thus with probability at least $ 1 - (1 - \lambda_{\left\| \btheta \right\|_0 + r })^{ |  \mathcal{S}_1^{tr} | } - (1 - \lambda_{\left\| \btheta \right\|_0 + r })^{ | \mathcal{MG} (\btheta, \mathcal{S}_1^{ma}) | }$, the error maximizing point for $f^{(1)}$ is in both $\mathcal{MG} (\btheta, \mathcal{S}_1^{tr}) $ and $ \mathcal{S}_1^{ma} $. When this happens, we have 
\begin{align}
    \left| \hat{g} (\mathbf{x}, \btheta, \mathcal{S}_1^{ma}) - g^{(1)} (\mathbf{x}) \right| &\le \left| \hat{g} (\mathbf{x}, \btheta, \mathcal{S}_1^{ma}) - f^{(1)} (  \mathbf{x} \circ \btheta )  \right| + \left| f^{(1)} ( \mathbf{x} \circ \btheta ) - g^{(1)} (\mathbf{x}) \right| \\
    &\le \hat{\epsilon}^{\max}_1 (\mathcal{S}_1^{ma}, f^{(1)}, \btheta) + \hat{\epsilon}^{\max}_1 (\mathcal{S}_1^{tr}, f^{(1)}, \btheta) \label{eq:max-error-1} \\
    &= 2 \hat{\epsilon}^{\max}_1 (\mathcal{S}_1^{tr}, f^{(1)}, \btheta), \label{eq:max-error-2}
\end{align}
where (\ref{eq:max-error-1}) and (\ref{eq:max-error-2}) are due to that both $\hat{\epsilon}^{\max}_1 (\mathcal{S}_1^{ma}, f^{(1}), \btheta)$ and $ \hat{\epsilon}^{\max}_1 (\mathcal{S}_1^{tr}, f^{(1)}, \btheta)$ in this case are true max error for $f^{(1)}$. 

\end{proof}

\section{Additional Experiment Details} 
\label{app:add-exp}

\subsection{Regression cannot handle model misspecification} 
We generated 20,000 observations from (\ref{syntheticdata}), with 10,000 treatment units and 10,000 control units, where  $U$=10 and $x_i^T,x_i^C \sim$ Bernoulli$(0.5)$. The nonlinear terms will cause problems for the (misspecified) linear regression models, but matching methods generally should not have trouble with nonlinearity. We ran FLAME, and all points were matched exactly on the first iteration yielding perfect CATEs. Scatter plots from FLAME and regression are in Figure \ref{fig:misspec}. The axes of the plots are predicted versus true treatment effects, and it is clear that the nonlinear terms negatively impact the estimates of treatment effects from the regression models, whereas FLAME does not have problems estimating treatment effects under nonlinearity. 

\begin{figure}[h!]
\centering
     \subfloat[FLAME\label{subfig:quality-FLAME}]{%
       \includegraphics[width = 0.4\textwidth]{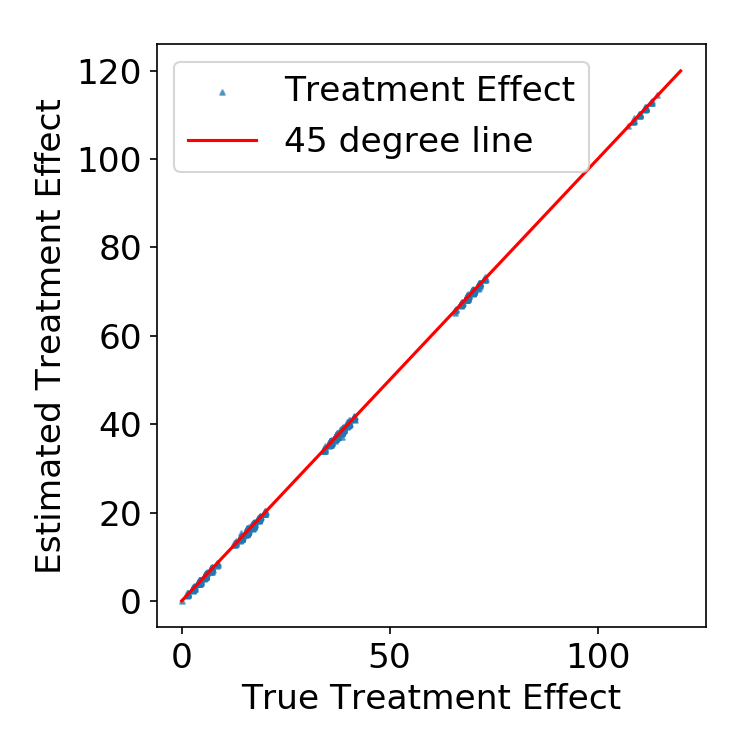}
     }
     \subfloat[Double linear regressors\label{subfig:quality-reg}]{%
       \includegraphics[width = 0.4\textwidth]{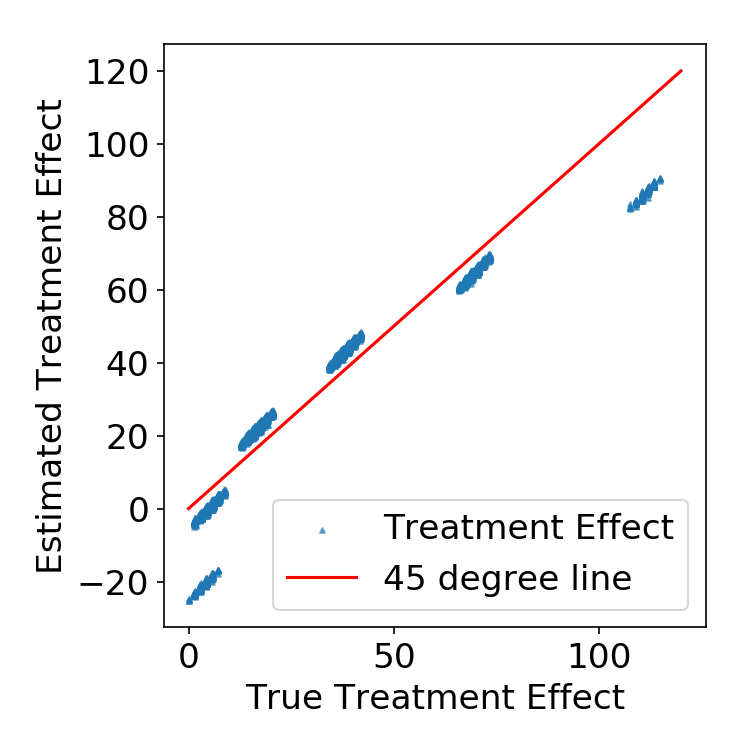}
     } 
     \caption{Scatter plots of true treatment effect versus estimated treatment effect on every unit in the synthetic data. The regression model is misspecified, and performs poorly. }
     \label{fig:misspec} 
\end{figure}

\subsection{Scalability Comparison between FLAME-bit and FLAME-db} \label{app:scalability-bit-db}

In this part, we evaluate scalability of both implementations of FLAME (FLAME-db and FLAME-bit) in a more complete manner. We generate synthetic datasets using the method described in (\ref{syntheticdata}) for different values of $n$ and/or $p$. As in the previous settings, $x^C_i \sim \text{Bernoulli}(0.1)$ and $x^T_i \sim \text{Bernoulli}(0.9)$ for $i > 10$. 


We compare the run time of FLAME-bit and FLAME-db as functions of $p$ in Figure~\ref{subfig:timing-covs} (with $n$ fixed to 100,000), and of $n$ in Figure~\ref{subfig:timing-units} (with $p$ fixed to 15). 
In Figure \ref{fig:timing}, 
each dot represents the average runtime over a set of 4 experiments with the same settings; the vertical error bars represent the standard deviations in the corresponding set of experiments (we omit the very small error bars). 
The plots suggest that FLAME-db scales better with the number of covariates (though that happens more often when the number of covariates is large, beyond what we show here), whereas FLAME-bit scales better with the number of units, since pre-processing and matrix operations used in FLAME-bit are expensive when the number of columns is large.

A major advantage of FLAME-db is that it can be used on datasets that are too large to fit in memory, whereas FLAME-bit cannot be used for such large datasets.

\begin{figure}[ht]
\vspace{-0.3cm}
\centering   
    \subfloat[$n=100,000$ \label{subfig:timing-covs}]{%
\includegraphics[width=0.4\textwidth]{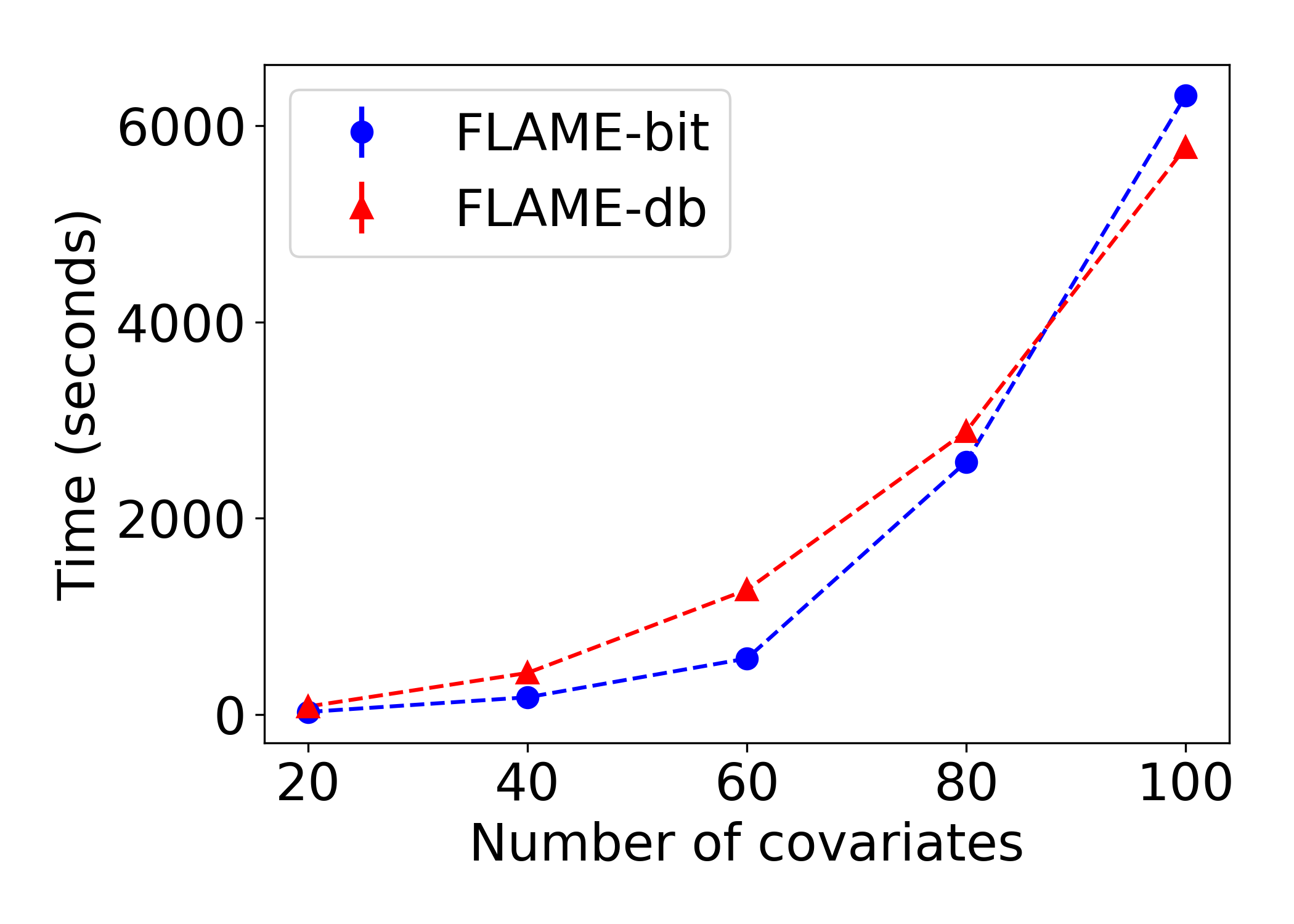}
     }
     \hspace{20pt}
     \subfloat[$p=15$ \label{subfig:timing-units}]{%
      \includegraphics[width=0.4\textwidth]{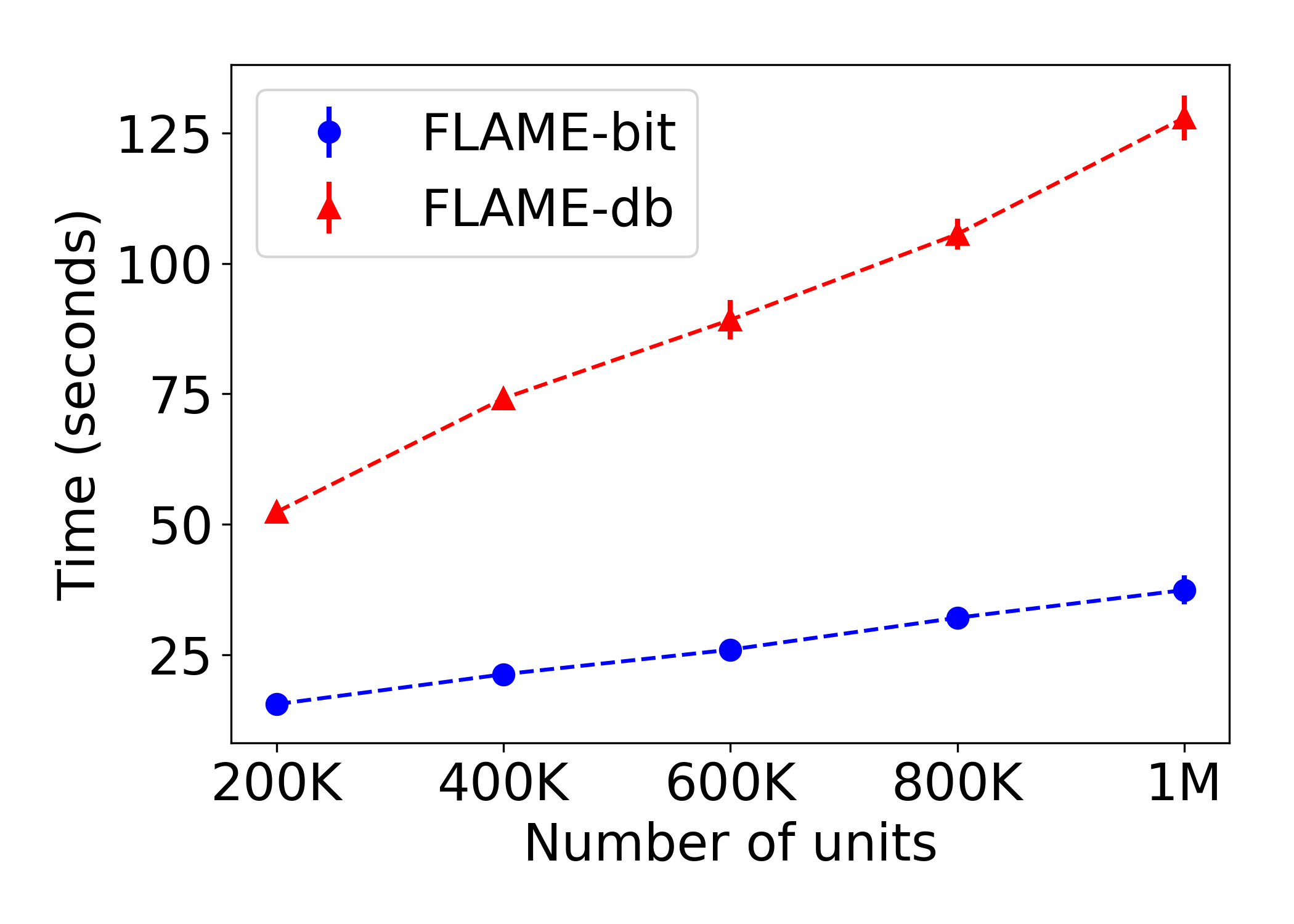}
     }
    \caption{
    This figure shows how the runtime scales with the number of units and number of covariates for FLAME-db and FLAME-bit. In general, FLAME-bit is faster when the number of covariates is relatively small so that the multiplication of the bit vectors is not too expensive. On the other hand, FLAME-db is faster when the number of variables is relatively large. In both subfigures, the size of holdout sets is fixed to 100,000 (50,000 control and 50,000 treated). \label{fig:timing} }
\end{figure}

\subsection{Effect of the $C$ parameter on the behavior FLAME \label{app:exp-tradeoff} }
This experiment aims to understand how the parameter $C$ would affect the estimation quality. Since the parameter $C$ trades off the Prediction Error ($\PE$) and Balancing Factor ($\BF$), we create a setting where variables that are more important to outcome prediction are less balanced. More specifically, we created 15,000 control units and 15,000 treated units with the outcomes given by
\begin{align}
y = \sum_{i=1}^{20} \frac{1}{i}  x_i + 10T + \epsilon \label{eq:tradeoff} 
\end{align}
where $x_i \sim \text{Bernoulli} \left( 0.1 + \frac{3(i-1)}{190} \right)$ for the control group and $x_i \sim \text{Bernoulli} \left( 0.9 - \frac{3(i-1)}{190} \right)$ for the treatment group, 
 $T \in \{ 0 , 1 \}$ is the treatment indicator, and $\epsilon \sim \mathcal{N} (0, 0.1)$. Based on (\ref{eq:criterion}), we expect that the larger the parameter $C$, the earlier the algorithm eliminates covariates of higher balancing factor. 

As we can see from Figure \ref{fig:tradeoff-matched} and \ref{fig:tradeoff-bubble}, larger $C$ values encourage FLAME to sacrifice some prediction quality in return for more matches; and vice versa. Better prediction quality leads to less biased estimates while a larger balancing factor leads to more matched units. This is a form of bias-variance tradeoff. 
Figure \ref{fig:tradeoff-combined} summarizes Figures \ref{fig:tradeoff-matched} and \ref{fig:tradeoff-bubble} by plotting the vertical axis of Figure  \ref{fig:tradeoff-matched} against the vertical axis of  Figure \ref{fig:tradeoff-bubble}. Since the percent of units matched in Figure \ref{fig:tradeoff-matched} is cumulative, the horizontal axis in Figure \ref{fig:tradeoff-combined} also (nonlinearly) corresponds to the dropping of covariates. 
In Figure \ref{fig:tradeoff-bubble}, each blue dot on the figures represents a matched group. The bias-variance tradeoff between estimation quality versus more matched groups is apparent; the left figure shows fewer but high quality (low bias) matches, whereas the right figure shows more matches that are lower quality (higher bias).

\begin{figure}[ht]
\centering   
    \subfloat[$C=0.1$ \label{subfig:fraction-Cp01}]{%
    \includegraphics[width=0.3\textwidth]{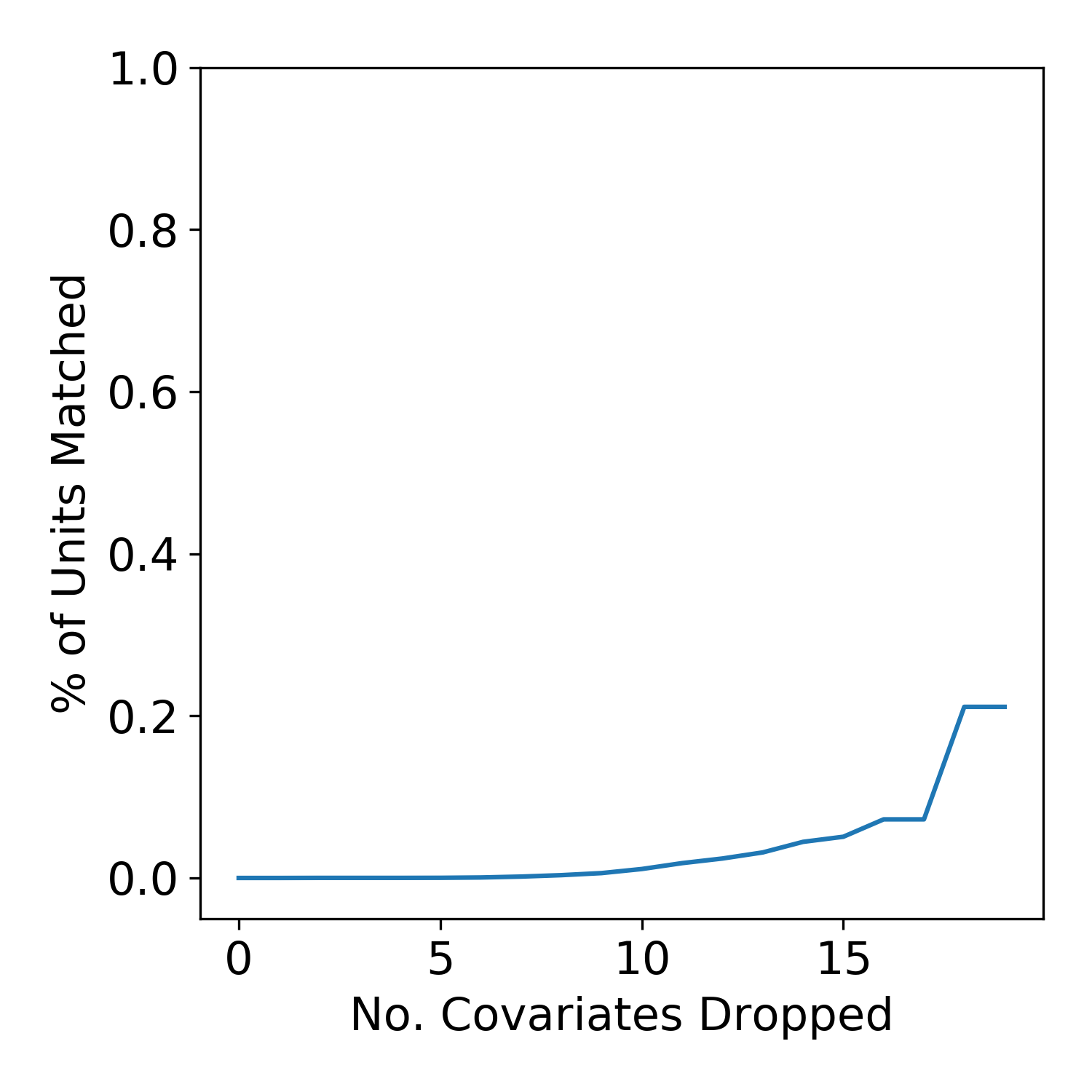} 
     } 
     \hspace{5pt}
     \subfloat[$C=0.5$ \label{subfig:fraction-Cp05}]{%
      \includegraphics[width=0.3\textwidth]{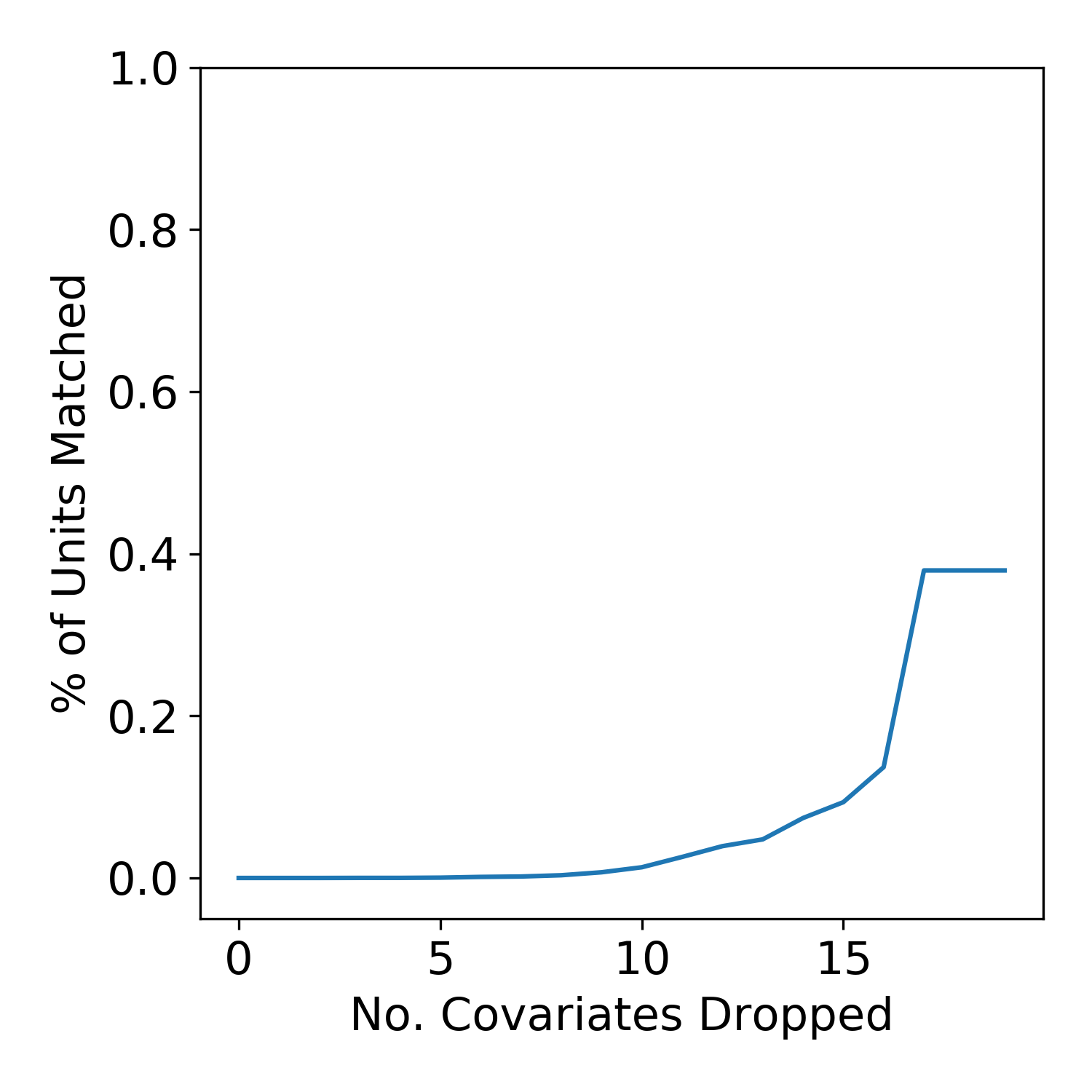}
     }
     \hspace{5pt}
     \subfloat[$C=1$ \label{subfig:fraction-C1}]{%
      \includegraphics[width=0.3\textwidth]{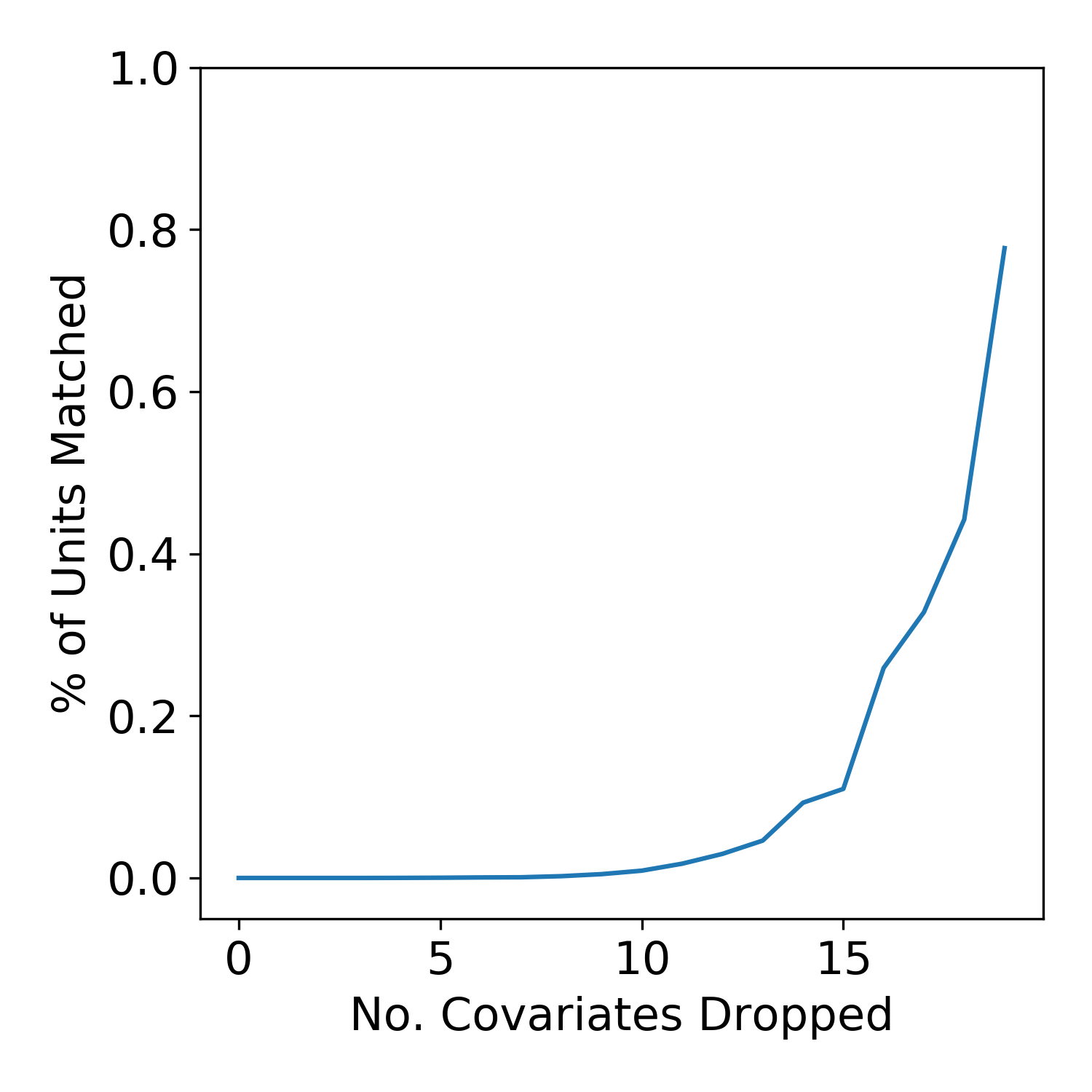}
     }
    \caption{This figure shows the percentage of units matched as FLAME eliminates variables with $C$ being 0.1, 0.5 and 1. More units are matched when the value of $C$ is large. The counts in these subfigures are cumulative. The vertical axis denotes the percentage of units matched.
    \label{fig:tradeoff-matched}}
\end{figure}

\begin{figure}[ht]
\centering   
    \subfloat[$C=0.1$ \label{subfig:bubble-Cp01}]{%
    \includegraphics[width=0.3\textwidth]{./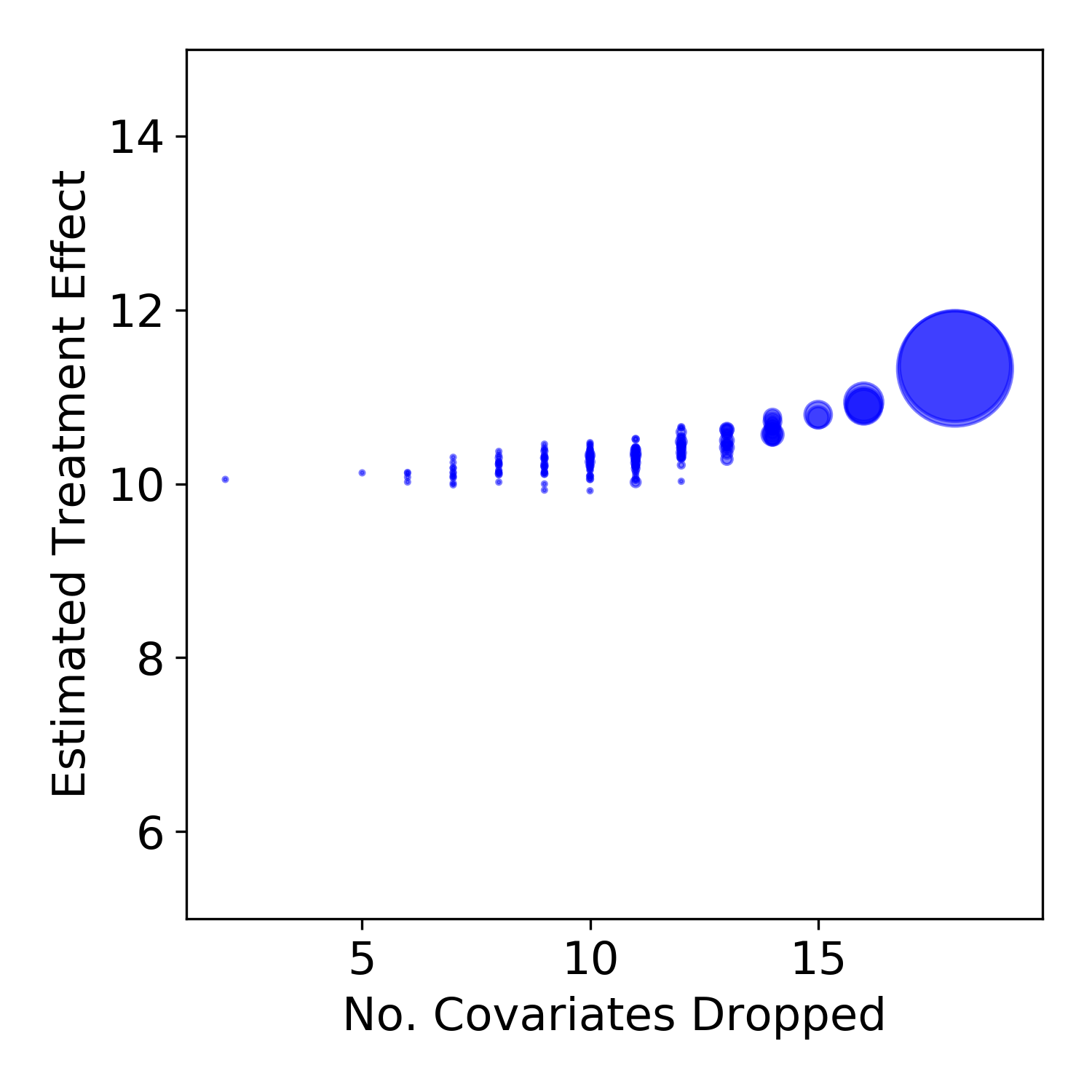}
     }
     \hspace{5pt}
     \subfloat[$C=0.5$ \label{subfig:bubble-Cp05}]{%
      \includegraphics[width=0.3\textwidth]{./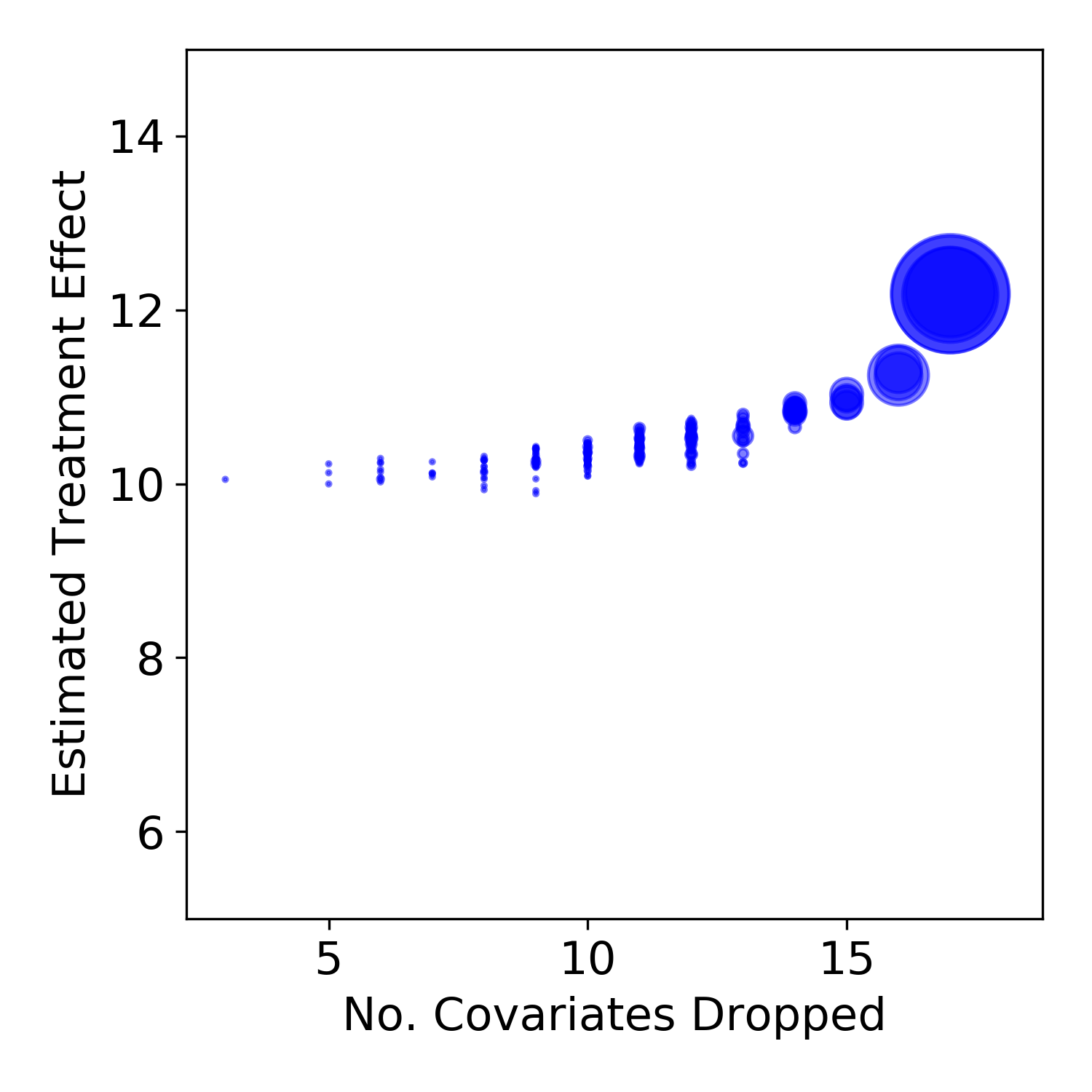}
     }
     \hspace{5pt}
     \subfloat[$C=1$ \label{subfig:bubble-C1}]{%
      \includegraphics[width=0.3\textwidth]{./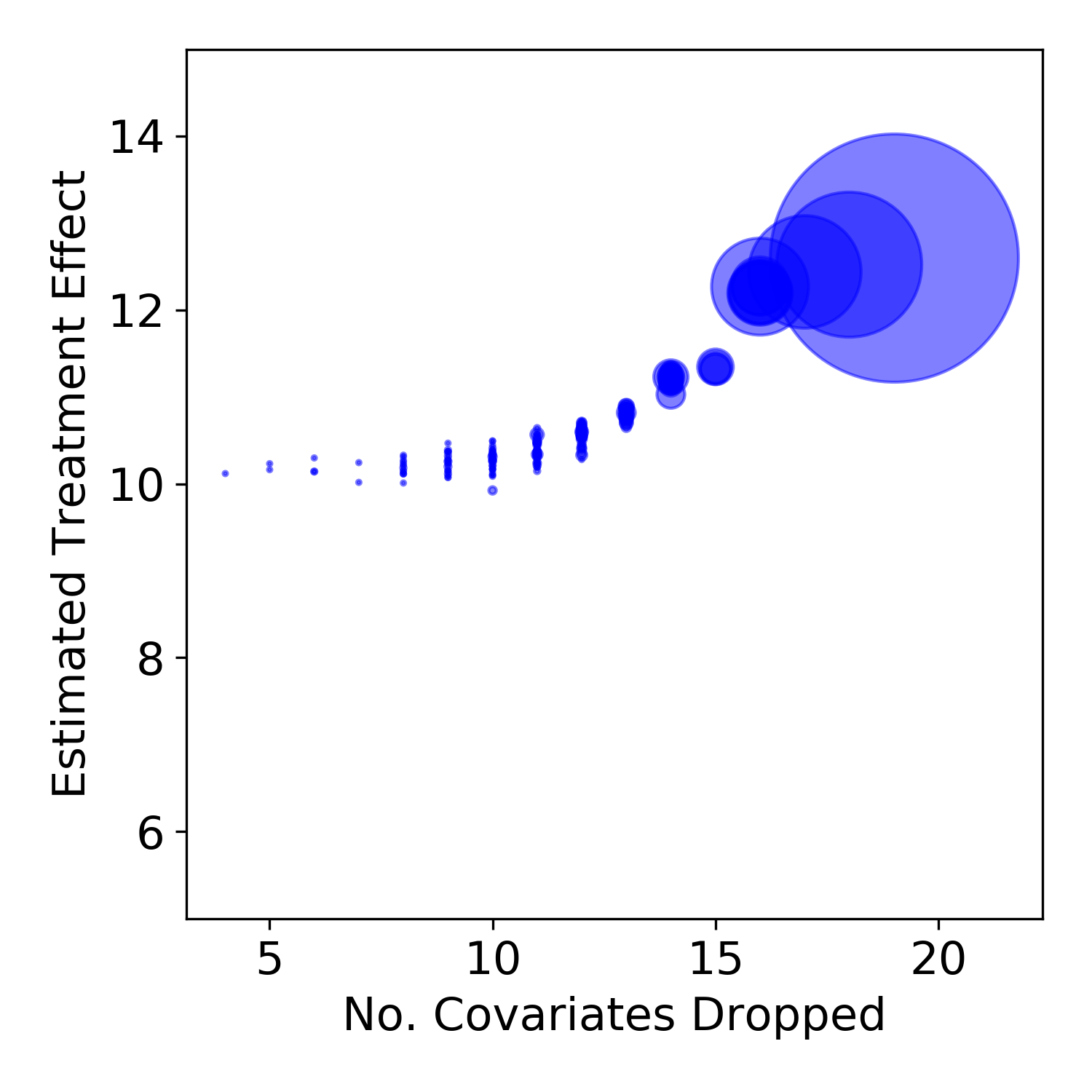}
     }
    \caption{This figure shows how estimation quality changes as FLAME eliminates variables with $C$ being 0.1, 0.5 and 1. The bias is smaller when the value of $C$ is small. Here the size of the dots represents the number of units corresponding to that dot.  
    \label{fig:tradeoff-bubble}}
\end{figure}

\begin{figure}[ht]
\centering   
    \subfloat[$C=0.1$ ]{%
    \includegraphics[width=0.3\textwidth]{./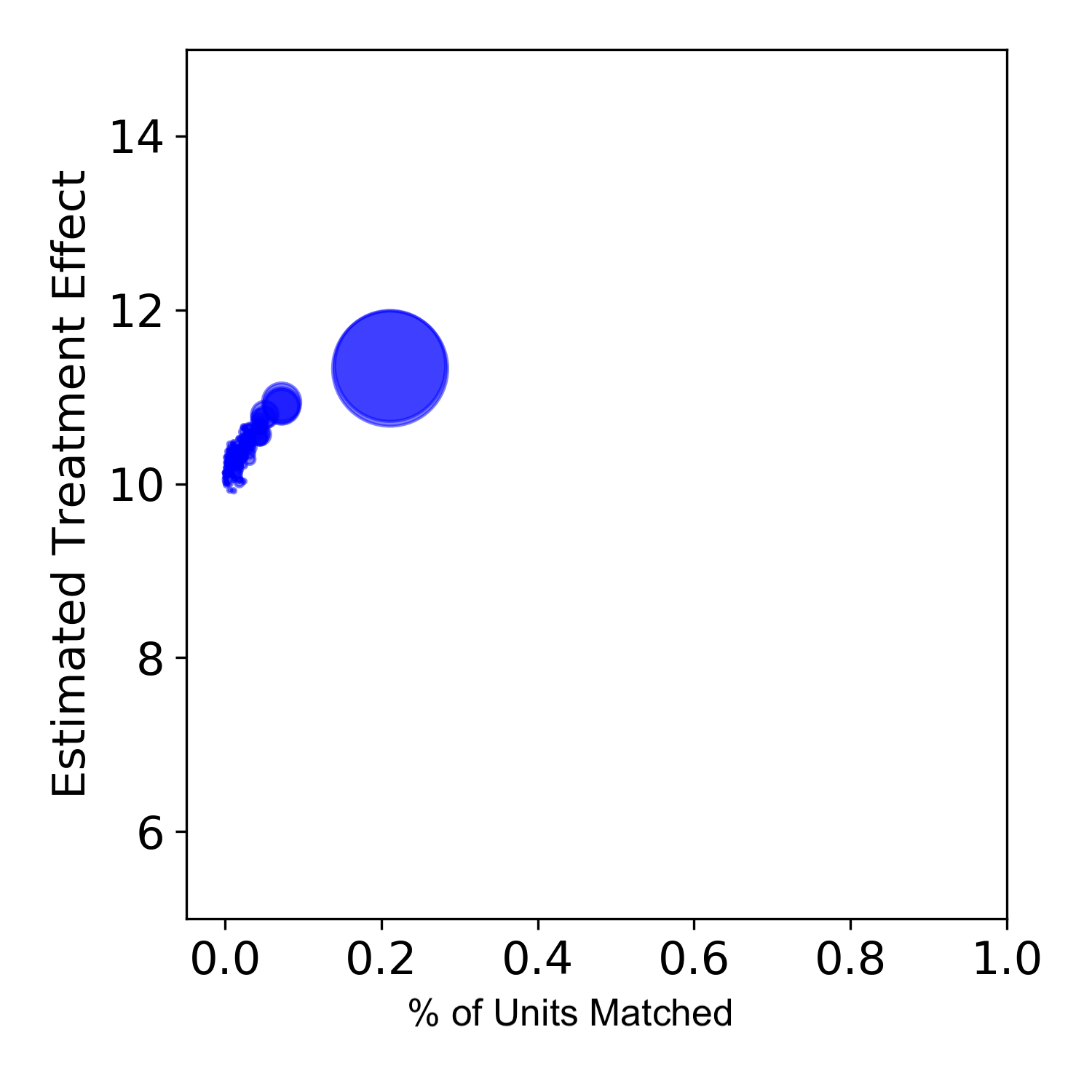}
     }
     \hspace{5pt}
     \subfloat[$C=0.5$ ]{%
      \includegraphics[width=0.3\textwidth]{./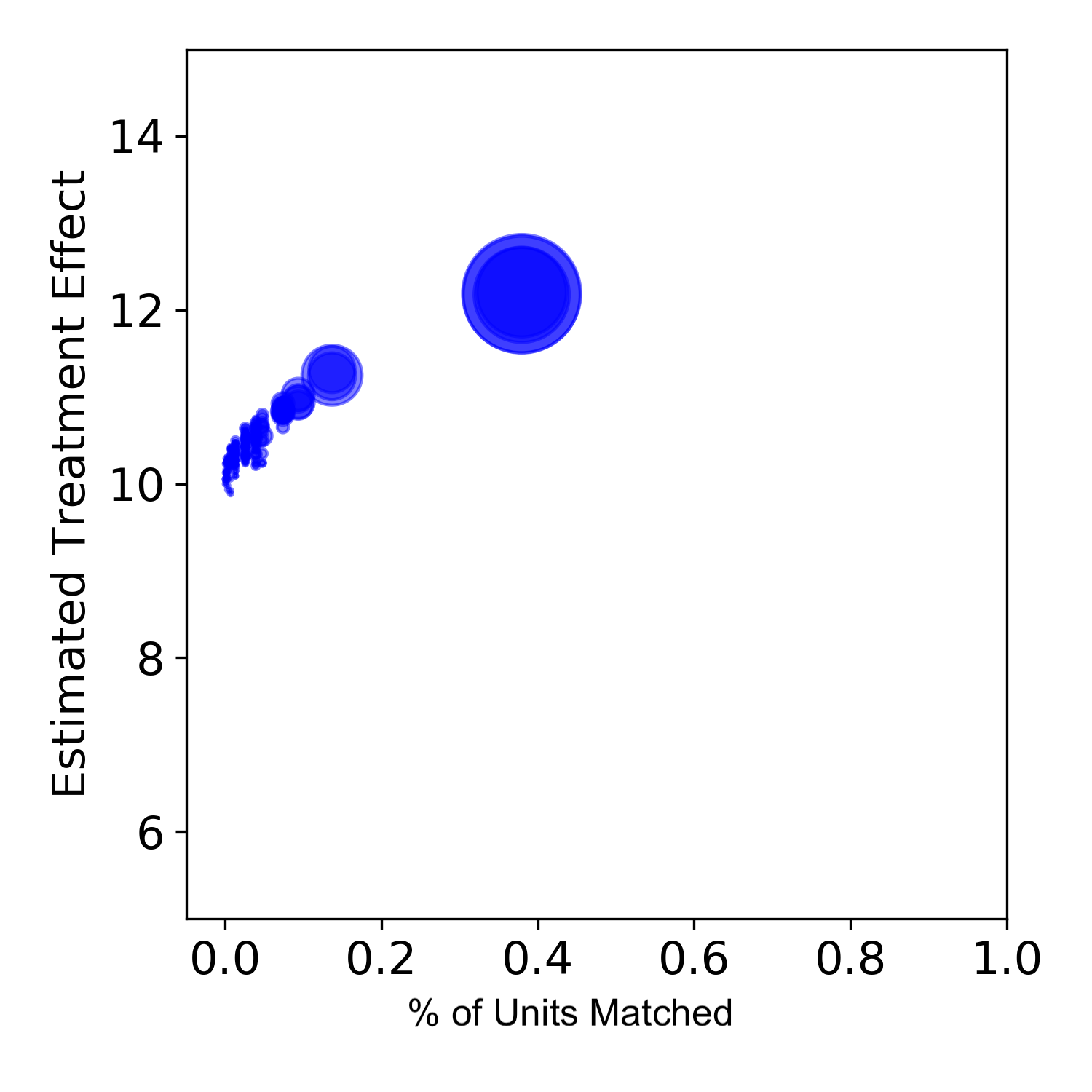}
     }
     \hspace{5pt}
     \subfloat[$C=1$ ]{%
      \includegraphics[width=0.3\textwidth]{./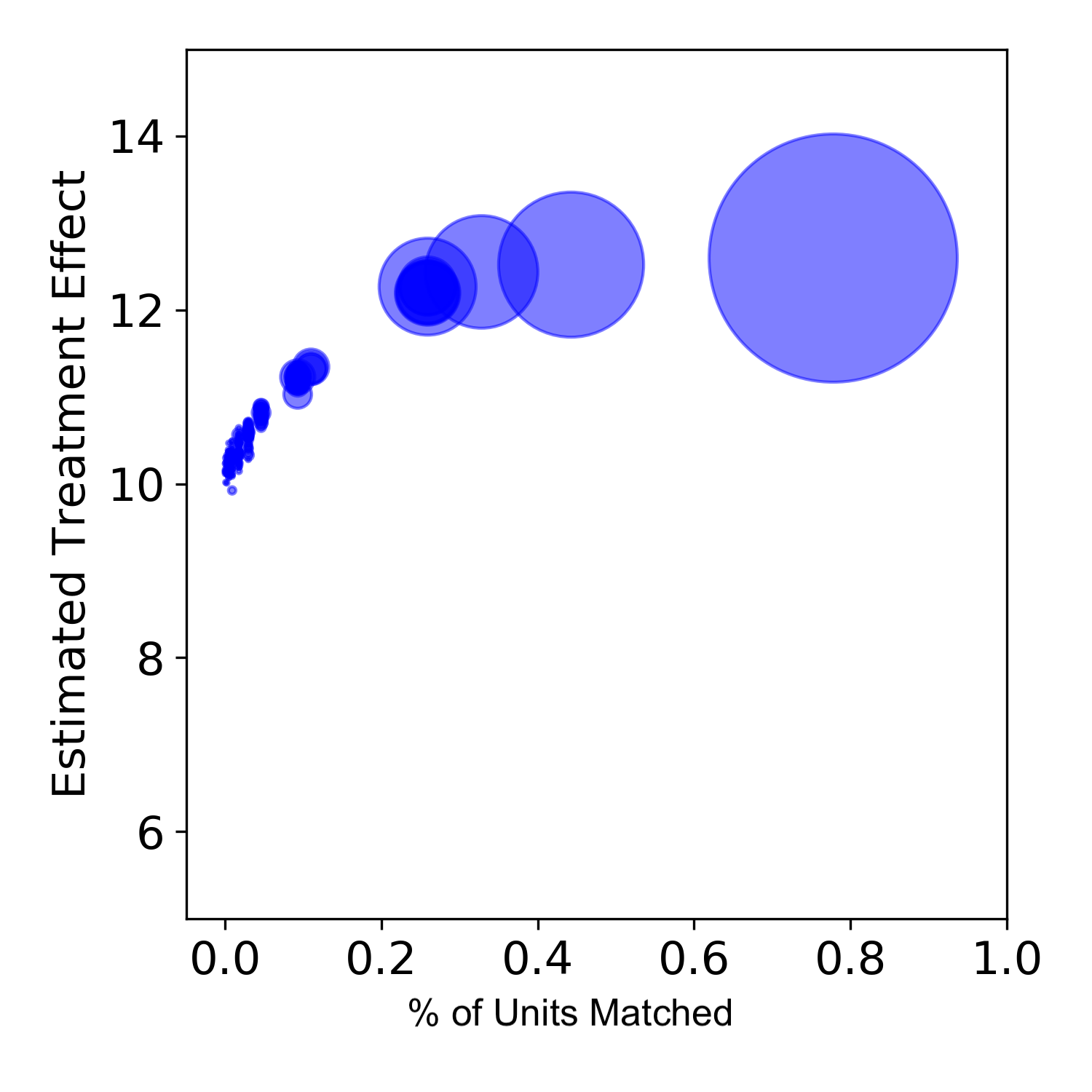}
     }
    \caption{Estimated treatment effect versus \% of units matched. 
    This figure is a summary of Figure \ref{fig:tradeoff-matched} and Figure \ref{fig:tradeoff-bubble}. 
    Here the horizontal axis is the vertical axis of Figure  \ref{fig:tradeoff-matched} and the vertical axis is the vertical axis of Figure \ref{fig:tradeoff-bubble}. Size of the dots represents the number of units corresponding to that dot.  
    }
    \label{fig:tradeoff-combined}
\end{figure}

\subsection{Decay of performance as less important variables are eliminated} 
To better understand the behavior of FLAME as it eliminates variables, we create a setting where the variables are all assigned non-zero importance. In this case, 20 covariates are used. As FLAME drops variables, prediction quality smoothly degrades. This is meant to represent problems where the importance of the variables decreases according to an exponential or a power-law, as is arguably true in realistic settings. Accordingly, we create 10,000 control units and 10,000 treated units with the outcomes generated as follows: 
\begin{align}
y = \sum_{i=1}^{20} \alpha_i x_i + 10 T + \epsilon, \label{eq:synthetic-data-bubble}
\end{align}
where $T \in \{ 0, 1\}$ is the binary treatment indicator, $x_i \sim \text{Bernoulli}(0.5)$, $\epsilon \sim \mathcal{N} (0, 0.1)$, $\alpha_i = 5 \times \left( \frac{1}{2} \right)^i$ for exponential decay (in Figure \ref{subfig:bubble-exp}),
and $\alpha_i = 5 \times \frac{1}{i} $ for power-law decay (in Figure \ref{subfig:bubble-power}). For both exponential and power law decay, variables with smaller indices are more important, and variables with higher indices are less important. 
In (\ref{eq:synthetic-data-bubble}), all of the covariates contribute to the outcome positively. In the real world, variables can contribute to the outcome either positively or negatively, which leads to a smaller estimation bias. This is because 
when positive covariates and negative covariates are dropped in mixed order, some bias cancels out. 
The case we are considering, where all $\alpha_i$'s are positive, is essentially a worst case. It allows us to see more easily how prediction quality degrades.

The results from FLAME are shown in Figure \ref{fig:bubble}, which shows that (i) the variability of estimation degrades smoothly as more variables are dropped, and (ii) the bias still remains relatively small. In this experiment, the value of $C$ is set to be $0.001$. The effect of varying the parameter $C$ is studied in Appendix  \ref{app:exp-tradeoff}. 
Since $C$ is set to be small, FLAME dropped the covariates in ascending order of importance. 

\begin{figure}
  \centering
  \subfloat[Exponential decay\label{subfig:bubble-exp}]{%
       \includegraphics[width = 0.4\textwidth]{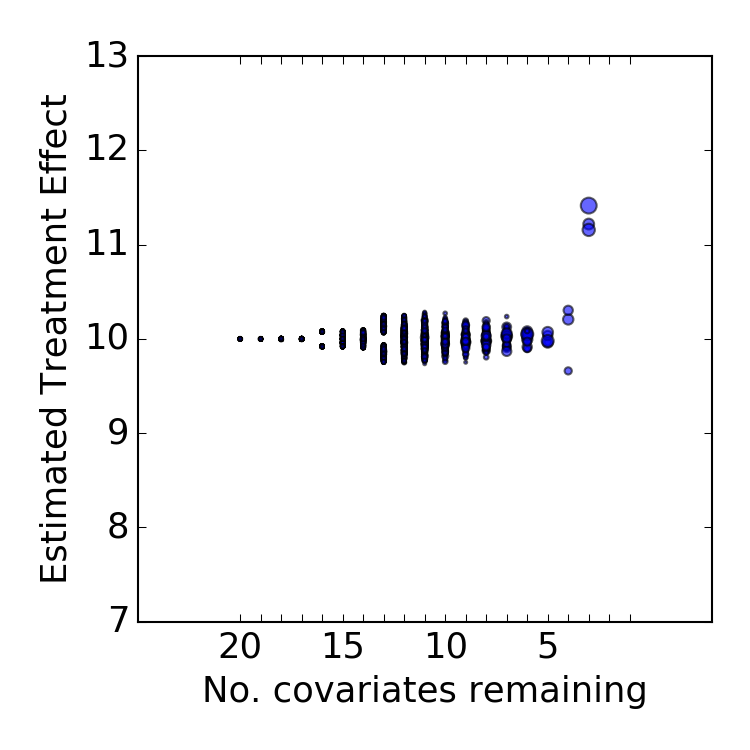}
     }
     \hspace{10pt}
     \subfloat[Power law decay\label{subfig:bubble-power}]{%
       \includegraphics[width = 0.4\textwidth]{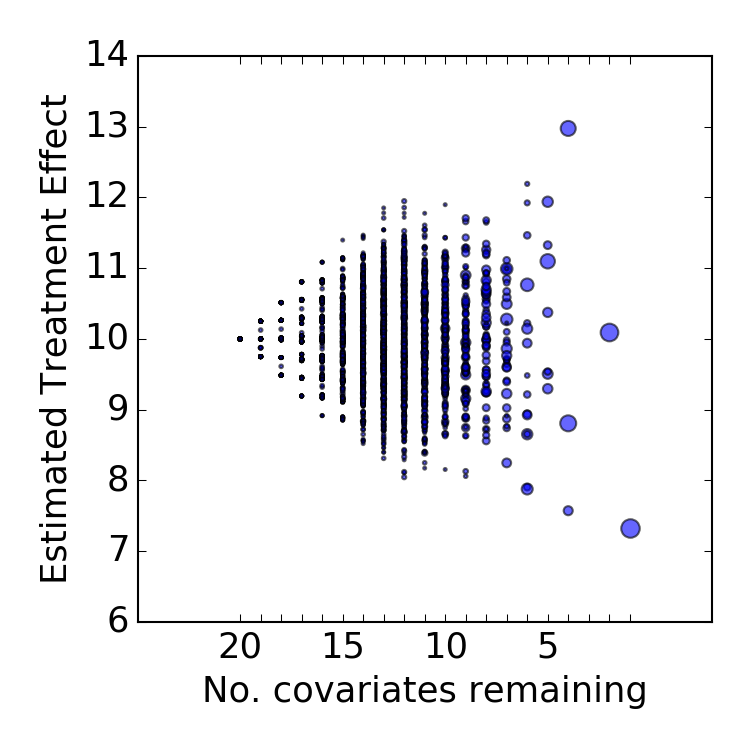}
     }
    \caption{\small Degradation of treatment effect estimates as FLAME drops variables. \textit{Left:} variable importance decreases exponentially with base being $\frac{1}{2}$. \textit{Right:} the variable importance decreases according to a power-law with exponent -1. The true treatment effect is 10 for all units in both subfigures. Dot size represents the number of units corresponding to that dot. There are fewer matches in high dimensions, and those matches are of higher quality. This shows that the bias of estimation degrades smoothly as we eliminate variables. 
    }
    \label{fig:bubble} 
\end{figure} 

\subsection{Natality Data Additional Details}
\subsubsection{Preprocessing for Natality Data}
\label{secnatality:preprocessing}

\textit{\textit{Different versions of birth certificate.~}}
As of 2010, two versions of birth certificates are used in the United States: the 1989 Revision of the U.S. Standard Certificate of Live Birth (unrevised version) and the 2003 revision of the U.S. Standard Certificate of Live Birth (revised version). The two versions record different variables about the parents and the infant. We therefore focus only on the data records of the revised version. 

\textit{\textit{Multiple columns coding the same information.~}}
In the raw data, several columns encode the same information when a numeric variable has a large range. For a few columns encoding the same information, one of them contains the raw values of the variable, and the rest are typically discretizations to different coarsening scales.   

\subsubsection{Covariates Dropping Order}
\label{app:natality-cov-tables}

The order of covariate dropping for FLAME on the natality data is summarized in Tables \ref{tab:drop-order-nicu} and \ref{tab:drop-order}. 

\begin{table}[h!]
    \centering
    \caption{Covariates dropping order of a FLAME run on the natality data when the outcome is NICU admission. Covariates dropped earlier are listed closer to the top. In the table, we use \textit{the Manual} to refer to the Natality Data Manual \citep{natality2010}. 
    \label{tab:drop-order-nicu}} 
    \resizebox{\textwidth}{!} 
    { 
    \begin{tabular}{r|l} 
        \textbf{covariate} & \textbf{additional remarks}  \\ \hline \hline 
        
        Father's Age & Father's Age Recode 11 column in the manual. \\
        Weekday & Weekday column in the manual. \\ 
        Mother's Education & Mother's Education column in the manual. \\
        Total Birth Order & Total Birth Order Recode column in the manual. \\ 
        Live Birth Order & Live Birth Order Recode column in the manual. \\ 
        Father's Race & Father's Bridged Race column in the manual. \\ 
        Mother's Race & Mother's Bridged Race column in the manual. \\ 
        Mother's Age & Mother's Age Recode 9 column in the manual. \\ 
        Chronic Hypertension & Chronic Hypertension column in the manual. \\ 
        Previous Cesarean Deliveries & Previous Cesarean Deliveries column in the manual. \\ 
        Mother's Marital Status & Mother's Marital Status column in the manual. \\ 
        Prepregnancy Diabetes & Prepregnancy Diabetes column in the manual. \\ 
        Prepregnancy Hypertension & Prepregnancy Hypertension column in the manual. \\ 
        Previous Preterm Birth & Previous Preterm Birth column in the manual. \\ 
        Birth Place & Birth Place Recode column in the manual. \\
        Sex of Infant & Sex of Infant column in the manual. \\ 
        \hline
        
        \end{tabular} 
    } 
\end{table} 

\begin{table}[h!]
    \centering
    \caption{Covariates dropping order of a FLAME run on the natality data when the outcome is infant's birth weight. Covariates dropped earlier are listed closer to the top. In the table, we use \textit{the Manual} to refer to the Natality Data Manual \citep{natality2010}. 
    \label{tab:drop-order}} 
    \resizebox{\textwidth}{!} 
    { 
    \begin{tabular}{r|l} 
        \textbf{covariate} & \textbf{additional remarks}  \\ \hline \hline
        
        Father's Age & Father's Age Recode 11 column in the manual. \\
        Mother's Education & Mother's Education column in the manual. \\  
        Weekday & Weekday column in the manual. \\ 
        Mother's Age & Mother's Age Recode 9 column in the manual. \\ 
        Father's Race & Father's Bridged Race column in the manual. \\ 
        Mother's Marital Status & Mother's Marital Status column in the manual. \\ 
        Prepregnancy Diabetes & Prepregnancy Diabetes column in the manual. \\ 
        Previous Cesarean Deliveries & Previous Cesarean Deliveries column in the manual. \\ 
        Total Birth Order & Total Birth Order Recode column in the manual. \\ 
        Prepregnancy Hypertension & Prepregnancy Hypertension column in the manual. \\ 
        Chronic Hypertension & Chronic Hypertension column in the manual. \\ 
        Live Birth Order & Live Birth Order Recode column in the manual. \\ 
        Previous Preterm Birth & Previous Preterm Birth column in the manual. \\ 
        Sex of Infant & Sex of Infant column in the manual. \\ 
        Mother's Race & Mother's Bridged Race column in the manual. \\ 
        Birth Place & Birth Place Recode column in the manual. \\
        \hline
        \end{tabular} 
    } 
\end{table}





\subsubsection{Evaluating NICU admissions}
This subsection includes a figure of the matched group size versus estimated odds ratio of NICU admission due to smoking. See Figure \ref{fig:natality-preterm}. 
\begin{figure}[h]
    \centering
    \includegraphics[width = 0.5\textwidth]{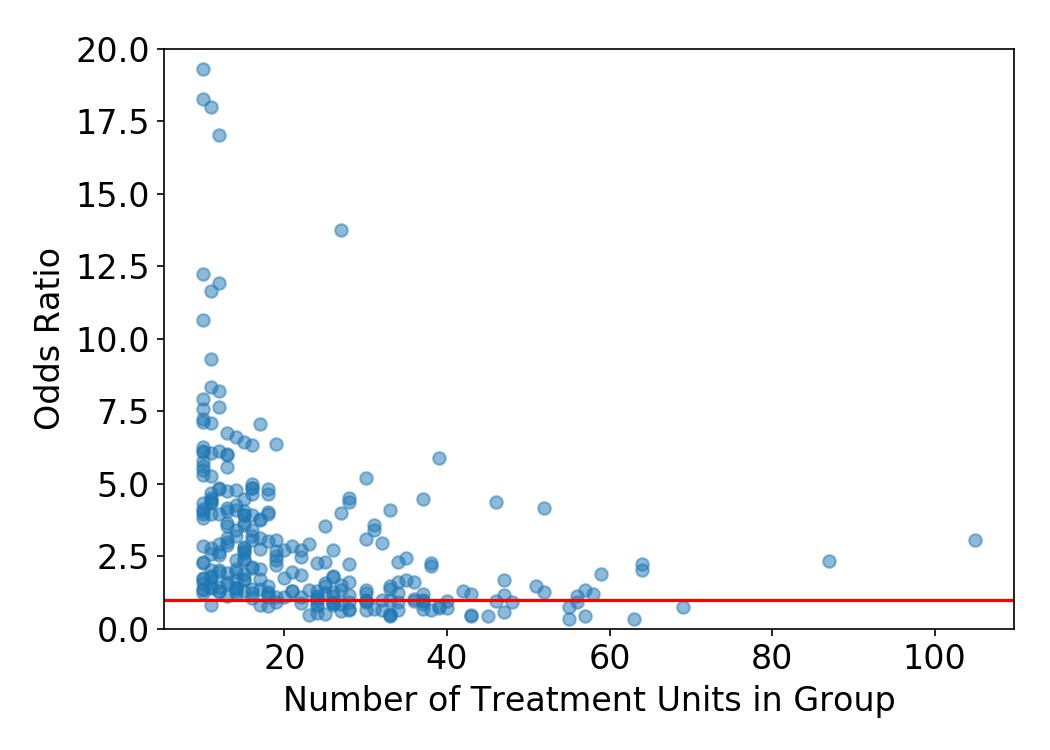}
    \caption{Scatter plot of odds ratios versus matched group size that is used to diagnose data quality issues for granular causal inference. The red line is $y = 1$. Only matched groups with more than 30 control units and 10 treatment units are shown. The overall odds ratio (weighted by group sizes) is approximately 2.67, which indicates a harmful outcome resulting from smoking. Our plot shows that in some cases the odds ratio is smaller than 1, even if the group size is relatively large. This variance calls for more investigation on the effect of smoking on NICU admission, especially from a medical perspective. 
    \label{fig:natality-preterm} 
    }
\end{figure} 

\subsection{US Census Data Additional Details}
\label{app:census-detail}

For preprocessing, we first discard the units whose income is zero (most of these units have age smaller than 16). This gives us 1,220,882 units to work with. Using 59 of the variables ($p = 59$) and these units, we evaluate scalability of different methods (Table \ref{tab:compare-timing-census}). More specifically, the experiment is posited as studying the causal effect of \emph{marital status} ($T$) on \emph{wage or salary income} ($Y$) in the year 1989, with the assumptions stated in the introduction (e.g., no unmeasured confounding). The wage variable (dIncome1) is binarized in the following way: 0 for $( 0, 15,000]$ (low income), and 1 for larger than 15,000 (high income). This binarization gives 564,755 low-income people and 656,127 high-income people. The marital state is binarized with 1 for being married and 0 for unmarried (including divorced, widowed and separated). This preprocessing gives 722,688 treated units (married) and 498,194 control units (unmarried). We randomly sampled 10\% of these units (122,089 units) as the training set. As a real dataset with a large number of rows and columns, the US Census Data is mainly used for scalability evaluation. For the estimated treatment effect, in agreement with the literature \citep{zagorsky2005marriage}, FLAME suggests marriage contributes positively to income. 

  \par
\textbf{Running time comparison:} Methods in Table \ref{tab:compare-timing-census} other than FLAME-db either did not finish within 10 hours or crashed, whereas FLAME-db took 1.33 hours. In this case, FLAME-bit encounters a overflow problem. 

\section{Causal Forest is Not a Matching Method}

The causal forest algorithm \citep{wager2018estimation, athey2019generalized} is a method that learns both a model for the counterfactuals and one for the propensity. While causal forests are used to estimate CATEs, the method should not be interpreted as a matching method. Matching methods match a small subset of units to estimate CATEs. In practice, matching methods are essentially subgroup analysis methods. 
One top desiderata of a matching method is that one unit should not be matched to too many other units. If a unit is matched to too many others, then the match will cease to be interpretable, which does not easily permit case-based reasoning. 

In order to interpret causal forests as a matching method, one would need to define a match to be the number of points sharing at least one leaf of one tree in the forest with a given unit. However, with causal forests, this number increases rapidly with the number of trees in the forest. For instance, using the data generation process from  Section \ref{sec:experiments} (repeated below), if one runs causal forest for 1000 iterations, the average number of units matched to any given unit is 450. After 5000 iterations, the average number of units matched to a given unit is over 1000. Let us demonstrate this: 

We use the same model as in Section \ref{sec:experiments} to generate synthetic data, for each unit $i$:
\begin{align*}
    y_i = \sum_{j}^{p} \alpha_j x_{ij} + T \sum_{j=1}^{p} \beta_j x_{ij} + T_i\cdot U \sum_{j,\gamma,\gamma>j}  x_{ij} x_{i\gamma}, 
\end{align*}
where $i$ indexes units and $j$ indexes covariates, $\alpha_j$ is a baseline effect of each covariate, $\beta_j$ is the treatment effect conditional on covariate $j$. The model also includes a nonlinear interaction effect between covariates: $x_jx_{\gamma}$, where each covariate interacts with all the covariates that succeeds it. This term is weighted by the coefficient $U$.
We simulate data from a model with 10,000 treatment and 10,000 control units, and 30 covariates, with 10 important and 20 unimportant (i.e., such that $\alpha_j, \beta_j = 0$). We implement causal forests with 200 trees and 1000 training samples per tree. 

The result is in Figure \ref{fig:treesvsnumber}, illustrating how the size of matched groups grows as a function of the number of iterations of the algorithm.

\begin{figure}[!htbp]
    \centering
    \includegraphics[width=0.7\linewidth]{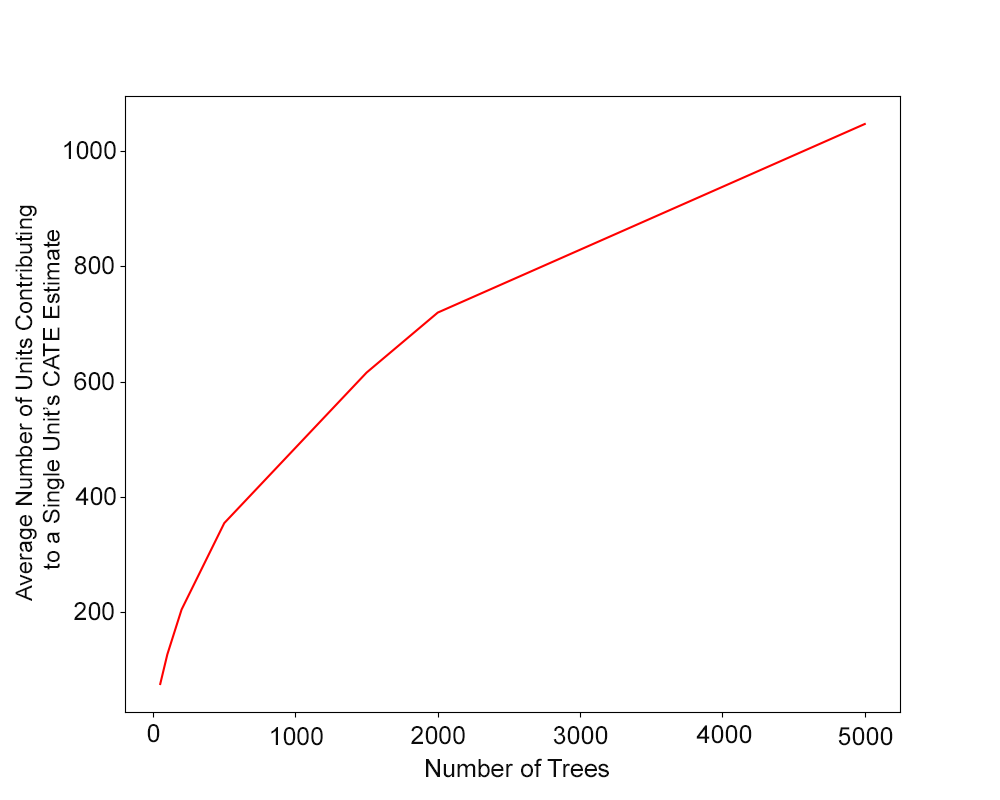}
    \caption{Average number of units used to predict a single unit's CATE by causal forests as the number of trees increases. }
    \label{fig:treesvsnumber}
\end{figure}

Thus, causal forests cannot be interpreted as a matching method for the following reasons:
\begin{itemize}
    \item The size of the matched groups depends on the runtime of the method. Thus it is not clear where one would stop to define it as a matching method or as a non-matching method.
    \item If one were to try to obtain interpretable matched groups from causal forest, one would need to stop after only a few trees, which is not recommended by \citet{wager2018estimation, athey2019generalized} because it would hurt performance.
    \item Causal forest was not designed to be a matching method. Examining its large matched groups would not necessarily achieve the main goals of case-based reasoning, such as troubleshooting variable quality or investigating the possibility of missing confounders.
\end{itemize}

\end{document}